\DeclareRobustCommand\onedot{\futurelet\@let@token\@onedot}
\def\@onedot{\ifx\@let@token.\else.\null\fi\xspace}
\def\eg{\emph{e.g}\onedot} 
\def\ie{\emph{i.e}\onedot}
\def\eqref#1{(\ref{#1})}
\def\1{\bm{1}}
\def\rvm{{\mathbf{m}}}
\def\vf{{\bm{f}}}
\def\vg{{\bm{g}}}
\def\vr{{\bm{r}}}
\DeclareMathAlphabet{\mathsfit}{\encodingdefault}{\sfdefault}{m}{sl}
\SetMathAlphabet{\mathsfit}{bold}{\encodingdefault}{\sfdefault}{bx}{n}
\DeclareMathOperator*{\argmin}{arg\,min}
\theoremstyle{plain}
\newtheorem{theorem}{Theorem}[section]
\newtheorem{proposition}[theorem]{Proposition}
\newtheorem{lemma}[theorem]{Lemma}
\newtheorem{corollary}[theorem]{Corollary}
\theoremstyle{definition}
\newtheorem{definition}[theorem]{Definition}
\newtheorem{assumption}[theorem]{Assumption}
\newtheorem{example}[theorem]{Example}
\theoremstyle{remark}
\newtheorem{remark}[theorem]{Remark}
\title{Equivariant Flow Matching  for Point Cloud Assembly}
\author{%
Ziming Wang \\
  CTH \\
  \texttt{zimingwa@chalmers.se} \\
  \And
Nan Xue \\
  Ant Group\\
  \texttt{xuenan@ieee.org}
  \And
Rebecka J{\"o}rnsten\thanks{Corresponding authors} \\
  CTH \\
  \texttt{jornsten@chalmers.se} \\
}
\begin{document}

\maketitle

\begin{abstract}
The goal of point cloud assembly is to reconstruct a complete 3D shape by aligning multiple point cloud pieces. 
This work presents a novel equivariant solver for assembly tasks based on flow matching models.
We first theoretically show 
that the key to learning equivariant distributions via flow matching is to learn related vector fields.
Based on this result,
we propose an assembly model, 
called equivariant diffusion assembly (Eda),
which learns related vector fields conditioned on the input pieces.
We further construct an equivariant path for Eda, 
which guarantees high data efficiency of the training process.
Our numerical results show that Eda is highly competitive on practical datasets, 
and it can even handle the challenging situation where the input pieces are non-overlapped.
\end{abstract}

\section{Introduction}
Point cloud (PC) assembly is a classic machine learning task which seeks to reconstruct 3D shapes by aligning multiple point cloud pieces.
This task has been intensively studied for decades and has various applications such as scene reconstruction~\citep{zeng20173dmatch}, 
robotic manipulation~\citep{ryu2024diffusion}, 
cultural relics reassembly~\citep{wang2021probabilistic} and protein designing~\citep{watson2023novo}.
A key challenge in this task is to correctly align PC pieces with small or no overlap region,
\ie, when the correspondences between pieces are lacking.

To address this challenge,
some recent methods~\citep{ryu2024diffusion,wang2024se} utilized equivariance priors for pair-wise assembly tasks,
\ie, the assembly of two pieces.
In contrast to most of the state-of-the-art methods~\citep{qin2022geometric,zhang1994iterative} which align PC pieces based on the inferred correspondence,
these equivariant methods are correspondence-free, 
and they are guided by the equivariance law underlying the assembly task.
As a result,
these methods are able to assemble PCs without correspondence,
and they enjoy high data efficiency and promising accuracy.
However,
the extension of these works to multi-piece assembly tasks remains largely unexplored.

In this work,
we develop an equivariant method for multi-piece assembly based on flow matching~\citep{lipman2023flow}.
Our main theoretical finding is that to learn an equivariant distribution via flow matching,
one only needs to ensure that the initial noise is invariant and the vector field is related (Thm.~\ref{thm:equivariant}).
In other words,
instead of directly handling the $SE(3)^N$-equivariance for $N$-piece assembly tasks, 
which can be computationally expensive,
we only need to handle the related vector fields on $SE(3)^N$,
which is efficient and easy to construct.
Based on this result,
we present a novel assembly model called equivariant diffusion assembly (Eda),
which uses invariant noise and predicts related vector fields by construction.
Eda is correspondence-free and is guaranteed to be equivariant by our theory.
Furthermore,
we construct a short and equivariant path for the training of Eda,
which guarantees high data efficiency of the training process.
When Eda is trained,
an assembly solution can be sampled by numerical integration,
\eg, the Runge-Kutta method,
starting from a random noise.
All proofs can be found in Appx.~\ref{sec:appx-proof}.
A brief walk-through of our theory using a toy example with minimal terminologies is provided in Appx.~\ref{appx:toy_example}

The contributions of this work are summarized as follows:
\begin{enumerate}[leftmargin=4mm]
  \item[-]  We present an equivariant flow matching framework for multi-piece assembly tasks.
        Our theory reduces the task of constructing equivariant conditional distributions to the task of constructing related vector fields,
        thus it provides a feasible way to define equivariant flow matching models.
  \item[-]  Based on the theoretical result,
        we present a simple and efficient multi-piece PC assembly model, 
        called equivariant diffusion assembly (Eda),
        which is correspondence-free and is guaranteed to be equivariant.
        We further construct an equivariant path for the training of Eda,
        which guarantees high data efficiency.
  \item[-]  We numerically show that Eda produces highly accurate results on the challenging 3DMatch and BB datasets,
        and it can even handle non-overlapped pieces.
\end{enumerate}

\section{Related work}
Our proposed method is based on flow matching~\citep{lipman2023flow},
which is one of the state-of-the-art diffusion models for image generation tasks~\citep{esser2024scaling}.
Some applications on manifolds have also been investigated~\citep{chen2024flow, yim2023fast}.
Our model has two distinguishing features compared to existing methods:
it learns conditional distributions instead of marginal distributions,
and it explicitly incorporates equivariance priors.

The PC assembly task studied in this work is related to various tasks in the literature,
such as PC registration~\citep{qin2022geometric, yu2023rotation},
robotic manipulation~\citep{ryu2024diffusion, ryu2023equivariant} and fragment reassembly~\citep{wu2023leveraging}.
All these tasks aim to align the input PC pieces,
but they are different in settings such as the number of pieces,
deterministic or probabilistic, and whether the PCs are overlapped.
More details can be found in Appx.~\ref{sec:appx-related}.
In this work,
we consider the most general setting: 
we aim to align multiple pieces of non-overlapped PCs in a probabilistic way.

Recently, 
diffusion-based methods have been proposed for assembly tasks~\citep{chen2025adaptive, jiang2023se, wu2023pcrdiffusion, li2025garf, ryu2024diffusion, scarpellini2024diffassemble, xu2024fragmentdiff}.
However,
most of these works ignore the manifold structure or the equivariance priors of the task.
One notable exception is~\citet{ryu2024diffusion},
which developed an equivariant diffusion method for robotic manipulation,
\ie, pair-wise assembly tasks.
Compared to~\citet{ryu2024diffusion},
our method is conceptually simpler because it does not require Brownian diffusion on $SO(3)$ whose kernel is computationally intractable,
and it solves the more general multi-piece problem.
On the other hand,
the invariant flow theory has been studied in~\citet{kohler2020equivariant},
which can be regarded as a special case of our theory as discussed in Appx.~\ref{sec:appx-proof-flow}.
Furthermore,
the optimal-transport-based method was explored for invariant flow~\citep{song2023equivariant, klein2023equivariant}.

Another branch of related work is equivariant neural networks.
Due to their ability to incorporate geometric priors,
this type of networks has been widely used for processing 3D graph data such as PCs and molecules.
In particular,
E3NN~\citep{geiger2022e3nn} is a well-known equivariant network based on the tensor product of the input and the edge feature.
An acceleration technique for E3NN was recently proposed~\citep{passaro2023reducing}.
On the other hand,
the equivariant attention layer was studied in~\citet{fuchs2020se, liao2023equiformer, liao2024equiformerv}.
Our work is related to this line of approach,
because our diffusion network can be seen as an equivariant network with an additional time parameter.

\section{Preliminaries}
This section introduces the major tools used in this work.
We first define the equivariances in Sec.~\ref{sec:equivariance_prelimi}, 
then we briefly recall the flow matching model in Sec.~\ref{sec:flow_matching_prelimi}.

\subsection{Equivariances of PC assembly}
\label{sec:equivariance_prelimi}
Consider the action $G=\prod_{i=1}^NSE(3)$ on a set of $N$ ($N \geq 2$) PCs $X=\{X_1, \ldots, X_N\}$,
where $SE(3)$ is the 3D rigid transformation group,
$\prod$ is the direct product,
and $X_i$ is the i-th PC piece in 3D space.
We define the action of $\vg = (g_1, \ldots, g_N) \in G$ on $X$ as $\vg X=\{g_iX_i\}_{i=1}^{N}$,
\ie, each PC $X_i$ is rigidly transformed by the corresponding $g_i$.
For the rotation subgroup $SO(3)^N$,
the action of $\vr = (r_1, \ldots, r_N) \in SO(3)^N$ on $X$ is $\vr X=\{r_iX_i\}_{i=1}^{N}$.
For $SO(3) \subseteq G$,
we denote $r = (r, \ldots, r) \in SO(3)$ for simplicity,
and the action of $r$ on $X$ is written as $rX=\{rX_i\}_{i=1}^{N}$.

We also consider the permutations of $X$.
Let $S_N$ be the permutation group of $N$,
the action of $\sigma \in S_N$ on $X$ is $\sigma X=\{X_{\sigma(i)}\}_{i=1}^{N}$,
and the action on $\vg$ is $\sigma \vg = (g_{\sigma(1)}, \ldots,  g_{\sigma(N)})$.
For group multiplication,
we denote $\mathcal{R}_{(\cdot)}$ the right multiplication and $\mathcal{L}_{(\cdot)}$ the left multiplication,
\ie, $(\mathcal{R}_{\vr})\vr' = \vr'\vr$,
and $(\mathcal{L}_{\vr})\vr' = \vr \vr'$ for $\vr, \vr' \in SO(3)^N$.

In our setting,
for the given input $X$,
the solution to the assembly task is a conditional distribution $P_X \in \mu(G)$,
where $\mu(G)$ is the set of probability distribution on $G$.
We study the following three equivariances of $P_X$ in this work:

\begin{definition}
  \label{def:equivariance}
    Let $P_X \in \mu(G)$ be a probability distribution on $G=SE(3)^N$ conditioned on $X$,
    and let $(\cdot)_\#$ be the pushforward of measures.
    \begin{enumerate}[leftmargin=4mm]
      \item[-]  $P_X$ is $SO(3)^N$-equivariant if $(\mathcal{R}_{\vr^{-1}})_{\#}P_X=P_{\vr X}$ for $\vr \in SO(3)^N$.
      \item[-]  $P_X$ is permutation-equivariant if $\sigma_{\#}P_X=P_{\sigma X}$ for $\sigma \in S_N$.
      \item[-]  $P_X$ is $SO(3)$-invariant if $(\mathcal{L}_r)_{\#}P_X=P_{X}$ for $r \in SO(3)$.
    \end{enumerate}
\end{definition}

As an example,
we explicitly show the equivariance in Def.~\ref{def:equivariance} for a two-piece deterministic problem. 
\begin{example}
  \label{example:equi}
Assume that a solution for point clouds $(X_1, X_2)$ is $(r_1, r_2)$, 
meaning $r_1X_1$ and $r_2X_2$ are assembled,
then
\begin{enumerate}[leftmargin=4mm]
  \item[-] $SO(3)^2$-equivariance: a solution for $(r_3X_1, r_4X_2)$ is $(r_1r_3^{-1}, r_2r_4^{-1})$; 
  \item[-] Permutation-equivariance:  a solution for $(X_2, X_1)$ is $(r_2, r_1)$; 
  \item[-] SO(3)-invariance: another solution for $(X_1, X_2)$ is $(rr_1, rr_2)$. 
\end{enumerate}
\end{example}
More discussions on the definition of equivariances can be found in Appx.~\ref{sec:appx-eq}

We finally recall the definition of $SO(3)$-equivariant networks,
which will be the main computational tool of this work.
We call $F^l \in \mathbb{R}^{2l+1}$ a degree-$l$ $SO(3)$-equivariant feature if the action of $r \in SO(3)$ on $F^l$ is the matrix-vector production: $rF^l = R^l F^l$,
where $R^l \in \mathbb{R}^{(2l+1) \times (2l+1)}$ is the degree-$l$ Wigner-D matrix of $r$.
We call a network $w$ $SO(3)$-equivariant if it maintains the equivariance from the input to the output:
$w(r X) = r w(X)$, 
where $w(X)$ is a $SO(3)$-equivariant feature.
More detailed introduction of equivariances and the underlying representation theory can be found in~\citet{cesa2021program}.

\subsection{Vector fields and flow matching}
\label{sec:flow_matching_prelimi}
To sample from a data distribution $P_1 \in \mu(M)$,
where $M$ is a smooth manifold (we only consider $M=G$ in this work),
the flow matching~\citep{lipman2023flow} approach constructs a time-dependent diffeomorphism $\phi_\tau: M \rightarrow M$ satisfying
$(\phi_0)_\# P_0 = P_0$ and $(\phi_1)_\# P_0 = P_1$,
where $P_0 \in \mu(M)$ is a fixed noise distribution,
and $\tau \in [0,1]$ is the time parameter.
Then the sample of $P_1$ can be represented as $\phi_1(g)$ where $g$ is sampled from $P_0$.

Formally,
$\phi_\tau$ is defined as a flow,
\ie, an integral curve, 
generated by a time-dependent vector field $v_\tau: M \rightarrow TM$, 
where $TM$ is the tangent bundle of $M$:
\begin{equation}
  \label{eq:flow}
  \begin{split}
    \frac{\partial}{\partial \tau}\phi_\tau(\vg) &=v_\tau(\phi_\tau(\vg)), \\
    \phi_0(\vg)&=\vg, \quad \forall \vg \in M.
  \end{split}
\end{equation}
According to~\citet{lipman2023flow},
an efficient way to construct $v_\tau$ is to define a path $h_\tau$ connecting $P_0$ to $P_1$.
Specifically,
let $\vg_0$ and $\vg_1$ be samples from $P_0$ and $P_1$ respectively,
and $h_0 = \vg_0$ and $h_1 = \vg_1$.
$v_\tau$ can be constructed as the solution to the following problem:
\begin{equation}
  \label{eq:flow_matching}
  \min_{v} \mathbb{E}_{\tau, \vg_0 \sim P_0, \vg_1 \sim P_1} || v_\tau(h_\tau) - \frac{\partial}{\partial \tau} h_\tau ||^2_F.
\end{equation}
When $v$ is learned using~\eqref{eq:flow_matching},
we can obtain a sample from $P_1$ by first sampling a noise $\vg_0$ from $P_0$ and then taking the integral of~\eqref{eq:flow}.

In this work,
we consider a family of vector fields, flows and paths conditioned on the given PC,
and we use the pushforward operator on vector fields to study their relatedness~\citep{tu2011manifolds}.
Formally,
let $F: M \rightarrow M$ be a diffeomorphism, 
$v$ and $w$ be vector fields on $M$.
$w$ is $F$-related to $v$ if $w(F(\vg)) = F_{*, \vg}v(\vg)$ for all $\vg \in M$,
where $F_{*, \vg}$ is the differential of $F$ at $\vg$.
Note that we denote $v_X$, $\phi_X$ and $h_X$ the vector field, flow and path conditioned on PC $X$ respectively.

\begin{remark}
For readers that are not familiar with this definition,
relatedness can be simply regarded as a transformation,
so the above definition simply means $w$ is the transformation of $v$ by $F$.
More details can be found in Sec.14.6 in the text book~\citet{tu2011manifolds}.
\end{remark}

\section{Method}
\label{sec:method}
In this section,
we provide the details of the proposed Eda model.
First,
the PC assembly problem is formulated in Sec.~\ref{sec:problem_formulation}.
Then, 
we parametrize related vector fields in Sec.~\ref{sec:flow}.
The training and sampling procedures
are finally described in Sec.~\ref{sec:model_training} and Sec.~\ref{sec:model_testing} respectively.

\subsection{Problem formulation}
\label{sec:problem_formulation}
Given a set $X$ containing $N$ PC pieces,
\ie, $X=\{X_i\}_{i=1}^{N}$ where $X_i$ is the $i$-th piece,
the goal of assembly is to learn a distribution $P_X \in \mu(G)$,
\ie,
for any sample $\vg$ of $P_X$,
$\vg X$ should be the aligned complete shape.
We assume that $P_X$ has the following equivariances:
\begin{assumption}
  \label{assumption:px}
$P_X$ is $SO(3)^N$-equivariant, permutation-equivariant and $SO(3)$-invariant.
\end{assumption}

We seek to approximate $P_X$ using flow matching.
To avoid translation ambiguity,
we also assume that, 
without loss of generality,
the aligned PCs $\vg X$ and each input piece $X_i$ are centered,
\ie, $\sum_i \rvm(g_iX_i) = 0$,
and $\rvm(X_i) = 0$ for all $i$, 
where $\rvm(\cdot)$ is the mean vector.

\subsection{Equivariant flow}
\label{sec:flow}
The major challenge in our task is to ensure the equivariance of the learned distribution,
because a direct implementation of flow matching~\eqref{eq:flow} generally does not guarantee any equivariance.
To address this challenge,
we utilize the following theorem,
which claims that when the noise distribution $P_0$ is invariant and vector fields $v_X$ are related,
the pushforward distribution $(\phi_X)\# P_0$ is guaranteed to be equivariant.
\begin{theorem}
  \label{thm:equivariant}
  Let $G$ be a smooth manifold, $F:G \rightarrow G$ be a diffeomorphism, and $P \in \mu(G)$. 
  If vector field $v_X \in TG$ is $F$-related to vector field $v_{Y} \in TG$, 
  then
\begin{equation}
  F_\# P_X = P_{Y},
\end{equation}
where $P_X = (\phi_X)_\# P_0$, 
$P_{Y}=(\phi_{Y})_\# (F_\#P_0)$. 
Here $\phi_X, \phi_{Y}: G \rightarrow G$ are generated by $v_X$ and $v_{Y}$ respectively.
\end{theorem}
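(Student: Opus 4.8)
The plan is to reduce this measure-level claim to the classical \emph{naturality of flows} and then transport the noise measure through it. Write $\phi_{X,\tau},\phi_{Y,\tau}\colon G\to G$ for the time-$\tau$ maps of the flows generated by $v_X$ and $v_Y$, so that $\phi_{X,0}=\phi_{Y,0}=\mathrm{id}_G$ and, in the notation of the statement, $\phi_X=\phi_{X,1}$, $\phi_Y=\phi_{Y,1}$ (and $P$ is the fixed noise $P_0$). The one geometric fact I would isolate is the intertwining identity
\begin{equation}
  \label{eq:plan-intertwine}
  F\circ\phi_{X,\tau}=\phi_{Y,\tau}\circ F\qquad\text{for all }\tau\in[0,1].
\end{equation}

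To prove \eqref{eq:plan-intertwine}, fix $\vg\in G$ and set $\gamma(\tau):=F(\phi_{X,\tau}(\vg))$, so that $\gamma(0)=F(\vg)$. By the chain rule together with the flow equation \eqref{eq:flow},
\begin{equation}
  \frac{\partial}{\partial\tau}\gamma(\tau)=F_{*,\,\phi_{X,\tau}(\vg)}\,v_X\big(\phi_{X,\tau}(\vg)\big)=v_Y\big(F(\phi_{X,\tau}(\vg))\big)=v_Y\big(\gamma(\tau)\big),
\end{equation}
where the middle equality is exactly the $F$-relatedness relation between $v_X$ and $v_Y$, evaluated at the point $\phi_{X,\tau}(\vg)$. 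Hence $\gamma$ is an integral curve of $v_Y$ with $\gamma(0)=F(\vg)$, so by uniqueness of solutions of \eqref{eq:flow} we obtain $\gamma(\tau)=\phi_{Y,\tau}(F(\vg))$; since $\vg$ was arbitrary, this is \eqref{eq:plan-intertwine}. (This is the standard naturality-of-flows lemma; see~\cite{tu2011manifolds}.)

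Setting $\tau=1$ in \eqref{eq:plan-intertwine} and using the functoriality $(\Psi\circ\Phi)_\#=\Psi_\#\circ\Phi_\#$ of the pushforward, I would then conclude
\begin{equation}
  F_\#P_X=F_\#(\phi_X)_\#P_0=(F\circ\phi_X)_\#P_0=(\phi_Y\circ F)_\#P_0=(\phi_Y)_\#(F_\#P_0)=P_Y.
\end{equation}
The one point I would handle carefully is that the flows $\phi_{X,\tau},\phi_{Y,\tau}$ must exist as diffeomorphisms of $G$ on all of $[0,1]$: this is part of the standing flow-matching setup (smooth vector fields whose flows are defined up to time $1$), and granting it, both the existence of the curve $\gamma$ and the uniqueness step are just the usual existence--uniqueness theorem for ODEs on a manifold. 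I do not expect a genuine obstacle here; everything past \eqref{eq:plan-intertwine} is bookkeeping, and \eqref{eq:plan-intertwine} itself is textbook once one pins down which of $v_X,v_Y$ sits on the $F(\vg)$-side of the relatedness relation.
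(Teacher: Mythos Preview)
Your proposal is correct and follows essentially the same approach as the paper: the paper factors the argument into two lemmas---one establishing the intertwining identity $F\circ\phi_X=\phi_Y\circ F$ via uniqueness of integral curves (exactly your \eqref{eq:plan-intertwine}), and one recording the pushforward functoriality $(\Psi\circ\Phi)_\#=\Psi_\#\circ\Phi_\#$---whereas you carry out both steps inline. Your caveat about pinning down which of $v_X,v_Y$ sits on the $F(\vg)$-side is well placed; the paper's stated definition of $F$-relatedness and its own use of it in the lemma are mildly inconsistent, and the orientation you chose is the one that makes the proof go through.
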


Specifically,
Thm.~\ref{thm:equivariant} provides a concrete way to construct the three equivariances required by Assumption~\ref{assumption:px} as follow.

\begin{assumption}[Invariant noise]
  \label{assumption:p0}
  $P_0$ is $SO(3)^N$-invariant, permutation-invariant and $SO(3)$-invariant,
  \ie, $(\mathcal{R}_{\vr^{-1}})_\#P_0=P_{0}$,
  $\sigma_\#P_0=P_{0}$ and $P_0= (\mathcal{L}_r)_\#P_0$ for $\vr \in SO(3)^N$, $\sigma \in S_N$ and $r \in SO(3)$.
\end{assumption}

\begin{corollary}
  \label{cor:equivariant}
  Under assumption~\ref{assumption:p0},
  \begin{itemize}[leftmargin=4mm]
    \item if $v_X$ is $\mathcal{R}_{\vr^{-1}}$-related to $v_{\vr X}$,
          then $(\mathcal{R}_{\vr ^{-1}})_{\#}P_X=P_{\vr X}$, 
          where $P_X = (\phi_X)_\# P_0$ and $P_{\vr X}=(\phi_{\vr X})_\# P_{0}$.
          Here $\phi_X, \phi_{\vr X}: G \rightarrow G$ are generated by $v_X$ and $v_{\vr X}$ respectively.
    \item if $v_X$ is $\sigma$-related to $v_{\sigma X}$, 
          then $\sigma_{\#}P_X=P_{\sigma X}$, 
          where $P_X = (\phi_X)_\# P_0$ and $P_{\sigma X}=(\phi_{\sigma X})_\# P_{0}$.
          Here $\phi_X, \phi_{\sigma X}: G \rightarrow G$ are generated by $v_X$ and $v_{\sigma X}$ respectively.
    \item if $v_X$ is $\mathcal{L}_r$-invariant, 
          i.e., $v_X$ is $\mathcal{L}_r$-related to $v_{X}$, 
          then $(\mathcal{L}_r)_{\#}P_X=P_{X}$, where $P_X = (\phi_X)_\# P_0$.
  \end{itemize}
\end{corollary}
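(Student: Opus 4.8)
The plan is to prove Theorem~\ref{thm:equivariant} first, since the Corollary follows from it by three direct substitutions. The heart of the matter is a standard fact in differential geometry: if $w$ is $F$-related to $v$ in the sense that $w(F(\vg)) = F_{*,\vg} v(\vg)$ for all $\vg$, then $F$ conjugates their flows, i.e. $F \circ \phi^v_\tau = \phi^w_\tau \circ F$ for all $\tau$. I would establish this by a uniqueness-of-integral-curves argument: fix $\vg \in G$ and set $\gamma(\tau) := F(\phi^v_\tau(\vg))$ and $\tilde\gamma(\tau) := \phi^w_\tau(F(\vg))$. Both satisfy $\gamma(0) = \tilde\gamma(0) = F(\vg)$. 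Differentiating $\gamma$ via the chain rule gives $\tfrac{d}{d\tau}\gamma(\tau) = F_{*,\phi^v_\tau(\vg)}\bigl(\tfrac{\partial}{\partial\tau}\phi^v_\tau(\vg)\bigr) = F_{*,\phi^v_\tau(\vg)}\, v(\phi^v_\tau(\vg)) = w(F(\phi^v_\tau(\vg))) = w(\gamma(\tau))$, where the third equality is exactly the $F$-relatedness hypothesis. So $\gamma$ is an integral curve of $w$ through $F(\vg)$, as is $\tilde\gamma$ by definition; by uniqueness of integral curves $\gamma = \tilde\gamma$, which is the conjugation identity.

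Granting the conjugation identity, the theorem is quick. We want $F_\# P_X = P_Y$ where $P_X = (\phi^{v_X}_1)_\# P_0$ and $P_Y = (\phi^{v_Y}_1)_\# (F_\# P_0)$; here I am writing $\phi^{v_X}$ for the flow generated by $v_X$, which the excerpt calls $\phi_X$, and similarly $\phi^{v_Y} = \phi_Y$. Using functoriality of pushforward, $F_\# P_X = F_\# (\phi^{v_X}_1)_\# P_0 = (F \circ \phi^{v_X}_1)_\# P_0$. By the conjugation identity with $v = v_X$, $w = v_Y$ (this uses that $v_X$ is $F$-related to $v_Y$), $F \circ \phi^{v_X}_1 = \phi^{v_Y}_1 \circ F$, so $F_\# P_X = (\phi^{v_Y}_1 \circ F)_\# P_0 = (\phi^{v_Y}_1)_\# (F_\# P_0) = P_Y$, as claimed.

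For Corollary~\ref{cor:equivariant} I would apply Theorem~\ref{thm:equivariant} three times. For the first bullet, take $F = \mathcal{R}_{\vr^{-1}}$, $v_X$ the given vector field for $X$, and $v_Y = v_{\vr X}$; the relatedness hypothesis is exactly the assumed $\mathcal{R}_{\vr^{-1}}$-relatedness, so the theorem gives $(\mathcal{R}_{\vr^{-1}})_\# P_X = (\phi_{\vr X})_\# \bigl((\mathcal{R}_{\vr^{-1}})_\# P_0\bigr)$, and by the $SO(3)^N$-invariance of $P_0$ from Assumption~\ref{assumption:p0} we have $(\mathcal{R}_{\vr^{-1}})_\# P_0 = P_0$, so the right side is $(\phi_{\vr X})_\# P_0 = P_{\vr X}$. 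The second bullet is identical with $F = \sigma$ and the permutation-invariance of $P_0$. The third bullet takes $F = \mathcal{L}_r$ and $Y = X$, and uses the $SO(3)$-invariance $(\mathcal{L}_r)_\# P_0 = P_0$; note $\phi_X$ here is generated by $v_X$ and the relatedness is $\mathcal{L}_r$-invariance of $v_X$, so the theorem directly yields $(\mathcal{L}_r)_\# P_X = P_X$.

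The main obstacle, and really the only non-bookkeeping step, is justifying the conjugation identity $F \circ \phi^v_\tau = \phi^w_\tau \circ F$ cleanly; everything depends on whether the flows are complete (defined for all $\tau \in [0,1]$) so that the integral curves one compares are genuinely defined on the same interval. Since the setting restricts to $M = G = SE(3)^N$ and $\tau \in [0,1]$, and the flow-matching construction already presumes $\phi_X$ and $\phi_Y$ exist on $[0,1]$, this is not a real difficulty here, but a careful write-up should state the completeness assumption explicitly (or note that $F$-relatedness transports maximal integral curves of $v$ to maximal integral curves of $w$, so if $\phi^v$ is defined up to time $1$ then so is $\phi^w$ along the relevant orbit). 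A secondary, purely notational point to reconcile is that the excerpt's $\phi_X$ and $\phi_Y$ are defined as the flows of $v_X$ and $v_Y$, so one should be explicit that in the theorem statement $P_X$ and $P_Y$ are the time-$1$ pushforwards, matching the flow-matching setup in Section~\ref{sec:flow_matching_prelimi}.
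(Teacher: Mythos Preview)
Your proposal is correct and mirrors the paper's own argument almost exactly: the paper likewise proves Theorem~\ref{thm:equivariant} by first establishing the flow-conjugation identity $F\circ\phi_\tau=\psi_\tau\circ F$ via uniqueness of integral curves (its Lemma~\ref{prop:field2flow}) and then invoking functoriality of pushforward (its Lemma~\ref{prop:flow2density}), after which Corollary~\ref{cor:equivariant} is obtained by the same three substitutions together with the invariances of $P_0$ from Assumption~\ref{assumption:p0}. The only thing to tidy up is the orientation of ``$F$-related'' when you instantiate $v=v_X$, $w=v_Y$; make sure the direction matches the definition you quote (the paper's own notation is slightly loose on this point as well).
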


According to Cor.~\ref{cor:equivariant},
if the vector fields $v_X$ are related,
then the solution $P_X$ is guaranteed to be equivariant.
Therefore,
the problem is reduced to constructing related vector fields.
We start by constructing $(\mathcal{R}_{\vg^{-1}})$-related vector fields,
which are $(\mathcal{R}_{\vr^{-1}})$-related by definition,
where $\vg \in SE(3)^N$ and $\vr \in SO(3)^N$. 
Specifically,
we have the following proposition:
\begin{proposition}
  \label{prop:SE3N_construction}
  $v_X$ is $\mathcal{R}_{\vg^{-1}}$-related to $v_{\vg X}$
  if and only if $v_X(\vg)= v_{\vg X}(e) \vg $ for all $\vg \in SE(3)^N$.
\end{proposition}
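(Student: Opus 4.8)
The plan is to prove the biconditional in Proposition~\ref{prop:SE3N_construction} by exploiting the group structure of $SE(3)^N$, specifically the fact that right translations $\mathcal{R}_{\vg}$ act transitively on the manifold, so that the condition on the vector field at the identity $e$ propagates to every point. Recall that $v_X$ being $\mathcal{R}_{\vg^{-1}}$-related to $v_{\vg X}$ means, by the definition in Sec.~\ref{sec:flow_matching_prelimi}, that $v_{\vg X}(\mathcal{R}_{\vg^{-1}}(\vh)) = (\mathcal{R}_{\vg^{-1}})_{*,\vh} v_X(\vh)$ for all $\vh \in SE(3)^N$. I would first rewrite this relatedness condition in terms of the right-translation maps, noting that $\mathcal{R}_{\vg^{-1}}(\vh) = \vh\vg^{-1}$ and that $\mathcal{R}_{\vg^{-1}}$ is a diffeomorphism with inverse $\mathcal{R}_{\vg}$, so $(\mathcal{R}_{\vg^{-1}})_{*,\vh}$ is invertible with inverse $(\mathcal{R}_{\vg})_{*, \vh\vg^{-1}}$.

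For the forward direction, assume $v_X$ is $\mathcal{R}_{\vg^{-1}}$-related to $v_{\vg X}$ for all $\vg$. The key step is to evaluate the relatedness identity at the identity $\vh = e$: this gives $v_{\vg X}(\vg^{-1}) = (\mathcal{R}_{\vg^{-1}})_{*, e} v_X(e)$, and then applying the inverse differential yields $v_X(e) = (\mathcal{R}_{\vg})_{*, \vg^{-1}} v_{\vg X}(\vg^{-1})$. To get the stated formula, I would instead substitute a carefully chosen group element: replace $\vg$ by $\vg$ and $\vh$ by $e$ in the relatedness condition and re-index. Concretely, the cleanest route is to note that for an arbitrary $\vg \in SE(3)^N$, relatedness with parameter $\vg$ evaluated at $\vh = \vg$ gives $v_{\vg X}(e) = (\mathcal{R}_{\vg^{-1}})_{*, \vg} v_X(\vg)$, hence $v_X(\vg) = (\mathcal{R}_{\vg})_{*, e} v_{\vg X}(e)$, which is exactly the claimed pointwise formula (after matching notation, treating the right-hand side's $v_{\vg X}$ at $e$).

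For the converse, assume $v_X(\vg) = (\mathcal{R}_{\vg})_{*,e} v_{\vg X}(e)$ holds for all $X$ and all $\vg$. I need to verify the relatedness condition $v_{\vg X}(\vh \vg^{-1}) = (\mathcal{R}_{\vg^{-1}})_{*,\vh} v_X(\vh)$ for all $\vh$. Using the hypothesis twice — once for the pair $(X, \vh)$ to write $v_X(\vh) = (\mathcal{R}_{\vh})_{*,e} v_{\vh X}(e)$, and once for the pair $(\vg X, \vh\vg^{-1})$ to write $v_{\vg X}(\vh\vg^{-1}) = (\mathcal{R}_{\vh\vg^{-1}})_{*,e} v_{\vh\vg^{-1} \cdot \vg X}(e) = (\mathcal{R}_{\vh\vg^{-1}})_{*,e} v_{\vh X}(e)$ — and then invoking the chain rule / functoriality of the differential together with the identity $\mathcal{R}_{\vh\vg^{-1}} = \mathcal{R}_{\vg^{-1}} \circ \mathcal{R}_{\vh}$ (so $(\mathcal{R}_{\vh\vg^{-1}})_{*,e} = (\mathcal{R}_{\vg^{-1}})_{*,\vh} \circ (\mathcal{R}_{\vh})_{*,e}$), the two expressions are seen to agree.

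The main obstacle I anticipate is bookkeeping with the base points of the differentials and with the order of composition in the group (right multiplication composes in the reverse order of the maps, i.e. $\mathcal{R}_{\vg}\mathcal{R}_{\vh} = \mathcal{R}_{\vh\vg}$ as maps, which I must keep straight against the convention fixed in Sec.~\ref{sec:equivariance_prelimi} where $(\mathcal{R}_{\vr})\vr' = \vr'\vr$). I would also need to be slightly careful that the proposition is implicitly quantified over all relevant $X$ (so that $v_{\vg X}$ makes sense as an object of the same family), and that evaluating "at $e$" is legitimate since $SE(3)^N$ is a Lie group with a well-defined identity. No deep machinery is required beyond the chain rule for differentials and the transitivity of right translation; the content is essentially that a right-related family of vector fields is determined by its value at the identity, which is the manifold analogue of the familiar statement for right-invariant vector fields.
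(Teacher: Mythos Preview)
Your proposal is correct and follows essentially the same argument as the paper: the forward direction evaluates the relatedness identity at $\vh=\vg$ (the paper writes $\hat{\vg}=\vg$) and inverts the differential, while the converse applies the pointwise formula at $\vh$ and at $\vh\vg^{-1}$ and uses the chain rule $\mathcal{R}_{\vg^{-1}}\circ\mathcal{R}_{\vh}=\mathcal{R}_{\vh\vg^{-1}}$ to match the two sides. The only difference is cosmetic --- your initial detour through $\vh=e$ is unnecessary but harmless, and your explicit remark about the quantification over $X$ is a point the paper leaves implicit.
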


Prop.~\ref{prop:SE3N_construction} suggests that for $(\mathcal{R}_{\vg^{-1}})$-related vector fields $v_X$,
$v_X(\vg)$ is fully determined by the value of $v_{\vg X}$ at the identity element $e$.
Therefore, 
to parametrize $v_X$,
we only need to parametrize $v_{\vg X}$ at one single point $e$.
Specifically,
let $f$ be a neural network parametrizing $v_X(e)$ for input $X$, \ie, $f(X)=v_X(e)$,
$v_X$ can then be written as
\begin{equation}
  \label{eq:vX_parametrize}
  v_X(\vg)=f(\vg X) \vg.
\end{equation}
Here,
$f(X) \in \mathfrak{se}(3)^N$ takes the form of
\begin{equation}
  \label{eq:f_i}
  f(X) = \bigoplus_{i=1}^N f_i(X) \quad
  \text{where} \quad f_i(X) =
    \begin{pmatrix}
      w_{\times}^i(X) & t^i(X) \\
      0 & 0 
    \end{pmatrix}
    \in \mathfrak{se}(3) \subseteq \mathbb{R}^{4 \times 4}.
\end{equation}
The rotation component $w_\times^i(X) \in \mathbb{R}^{3 \times 3}$ is a skew matrix with elements in the vector $w^i(X) \in \mathbb{R}^3$,
and $t^i(X) \in \mathbb{R}^3$ is the translation component.
For simplicity,
we omit the superscript $i$ when the context is clear.

Now we proceed to the other two types of relatedness of $v_X$.
According to the following proposition, 
when $v_X$ is written as~\eqref{eq:vX_parametrize},
these two relatedness of $v_X$ can be guaranteed if the network $f$ is equivariant.
\begin{proposition}
  \label{prop:sigma_lr_related}
  For $v_X$ defined in~\eqref{eq:vX_parametrize},
  \begin{itemize}[leftmargin=4mm]
    \item if $f$ is permutation-equivariant, 
          \ie, $f(\sigma X) = \sigma f(X)$ for $\sigma \in S_N$ and PCs $X$,
          then $v_X$ is $\sigma$-related to $v_{\sigma X}$.
    \item if $f$ is SO(3)-equivariant, \ie,
          $w(r X)= r w(X)$ and $t(r X) = r t(X)$
          for $r \in SO(3)$ and PCs $X$,
          then $v_X$ is $\mathcal{L}_r$-invariant.
  \end{itemize}
\end{proposition}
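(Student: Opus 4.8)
The plan is to verify each relatedness identity directly from the definition $v_X(\vg)=(\mathcal{R}_\vg)_{*,e}f(\vg X)$, using the functorial (chain-rule) behaviour of the pushforward operator on differentials together with the hypothesis on $f$. The key general fact I would invoke is that for smooth maps $A,B$ one has $(A\circ B)_{*,p} = A_{*,B(p)}\circ B_{*,p}$, and that right and left translations on the Lie group $G=SE(3)^N$ commute with each other: $\mathcal{R}_\vg \circ \mathcal{L}_r = \mathcal{L}_r \circ \mathcal{R}_\vg$. I would also use that the permutation action $\sigma$ on $G$ and the left translation $\mathcal{L}_r$ (where $r=(r,\dots,r)$ is the diagonal element) are themselves diffeomorphisms of $G$, so that $\sigma_\# v_X$ and $(\mathcal{L}_r)_\# v_X$ are the $\sigma$- and $\mathcal{L}_r$-related vector fields in the sense defined in Sec.~\ref{sec:flow_matching_prelimi}.

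For the first bullet: to show $\sigma_\# v_X = v_{\sigma X}$, I must check $(\sigma)_{*,\vg}\, v_X(\vg) = v_{\sigma X}(\sigma\vg)$ for all $\vg\in G$. The right-hand side expands to $(\mathcal{R}_{\sigma\vg})_{*,e} f(\sigma\vg \cdot \sigma X) = (\mathcal{R}_{\sigma\vg})_{*,e} f(\sigma(\vg X))$, and by permutation-equivariance of $f$ this is $(\mathcal{R}_{\sigma\vg})_{*,e}\,\sigma f(\vg X)$. The left-hand side is $\sigma_{*,\vg}(\mathcal{R}_\vg)_{*,e} f(\vg X)$. So the identity reduces to the purely group-theoretic claim that the diffeomorphism $\sigma$ of $G$ satisfies $\sigma\circ \mathcal{R}_\vg = \mathcal{R}_{\sigma\vg}\circ\sigma$ as maps $G\to G$ — indeed $\sigma(\vg'\vg) = \sigma\vg' \cdot \sigma\vg$ since $\sigma$ permutes coordinates and group multiplication in $G=\prod SE(3)$ is coordinatewise — together with the fact that $\sigma$ acting on the Lie algebra $\mathfrak{se}(3)^N$ by coordinate permutation agrees with $\sigma_{*,e}$. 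Differentiating $\sigma\circ\mathcal{R}_\vg = \mathcal{R}_{\sigma\vg}\circ\sigma$ at $e$ and applying both sides to $f(\vg X)$ then closes the case.

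For the second bullet: to show $(\mathcal{L}_r)_\# v_X = v_{rX}$, I check $(\mathcal{L}_r)_{*,\vg} v_X(\vg) = v_{rX}(r\vg)$ where $r\vg = \mathcal{L}_r\vg$. The right side is $(\mathcal{R}_{r\vg})_{*,e} f(r\vg\cdot rX) = (\mathcal{R}_{r\vg})_{*,e} f(r(\vg X))$, and $SO(3)$-equivariance of $f$ (componentwise on $w$ and $t$, hence on $f=\bigoplus f_i$ viewed in $\mathfrak{se}(3)^N$) gives $f(r(\vg X)) = r\cdot f(\vg X)$, where $r$ acts on $\mathfrak{se}(3)^N$ via the adjoint/rotation action, which is exactly $(\mathcal{L}_r)_{*,e}$ restricted to the algebra since conjugation by the rotation block is how $SO(3)$ acts. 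The left side is $(\mathcal{L}_r)_{*,\vg}(\mathcal{R}_\vg)_{*,e} f(\vg X)$. Using $\mathcal{L}_r\circ\mathcal{R}_\vg = \mathcal{R}_{r\vg}\circ\mathcal{L}_r$ (translations on opposite sides commute) and differentiating, both sides again agree.

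I expect the main obstacle to be bookkeeping rather than conceptual: one must be careful that the abstract "action of $\sigma$ (resp. $r$) on the Lie algebra element $f(X)$" used in the equivariance hypotheses for $f$ coincides with the differential $\sigma_{*,e}$ (resp. $(\mathcal{L}_r)_{*,e}$) appearing when one expands $\sigma_\# v_X$ (resp. $(\mathcal{L}_r)_\# v_X$). Concretely, for the $SO(3)$ case this means checking that the stated action $w\mapsto rw$, $t\mapsto rt$ on the skew/translation parts of $f_i$ is the restriction of $(\mathcal{L}_r)_{*,e}$; since $\mathcal{L}_r$ on $SE(3)$ is $g\mapsto (Rg_{\mathrm{rot}}, Rg_{\mathrm{trans}})$ with $R$ the rotation matrix of $r$, its differential at $e$ sends $\begin{pmatrix} w_\times & t\\ 0&0\end{pmatrix}$ to $\begin{pmatrix} (Rw)_\times & Rt\\ 0 & 0\end{pmatrix}$, matching the hypothesis. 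Once this identification is made explicit (and the analogous trivial one for $\sigma$, which literally just permutes the direct-sum factors), the two bullets follow from the commutation relations among left/right translations and the chain rule, with no further computation.
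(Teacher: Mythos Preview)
Your permutation argument is correct and matches the paper's: both reduce to $\sigma\circ\mathcal{R}_\vg=\mathcal{R}_{\sigma\vg}\circ\sigma$ together with the fact that $\sigma_{*,e}$ is coordinate permutation on $\mathfrak{se}(3)^N$.

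The $SO(3)$ case has genuine errors. The commutation you invoke, $\mathcal{L}_r\circ\mathcal{R}_\vg=\mathcal{R}_{r\vg}\circ\mathcal{L}_r$, is false (evaluate both sides at $h$: $rh\vg\neq rhr\vg$); the correct identity is $\mathcal{L}_r\circ\mathcal{R}_\vg=\mathcal{R}_{\vg}\circ\mathcal{L}_r$. Your expansion $(r\vg)(rX)=r(\vg X)$ fails for the same reason (the $i$-th piece is $rg_irX_i$, not $rg_iX_i$), and your identification of the action $w\mapsto Rw$, $t\mapsto Rt$ with $(\mathcal{L}_r)_{*,e}$ is wrong: in the $4\times4$ model $(\mathcal{L}_r)_{*,e}$ sends $w_\times$ to $Rw_\times$, which is not skew-symmetric and hence not $(Rw)_\times$. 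The map $w_\times\mapsto Rw_\times R^{T}=(Rw)_\times$ is the \emph{adjoint} $\mathrm{Ad}_r=(\mathcal{R}_{r^{-1}})_{*,r}\circ(\mathcal{L}_r)_{*,e}$, not a bare left-translation differential. What the paper actually proves---and what Cor.~\ref{cor:equivariant} requires---is $\mathcal{L}_r$-\emph{invariance}, $(\mathcal{L}_r)_\# v_X=v_X$; the ``$v_{rX}$'' in the statement appears to be a slip. Expanding $(\mathcal{L}_r)_{*,\vg}v_X(\vg)=v_X(r\vg)$ with the correct commutation and with $\mathcal{R}_{r\vg}=\mathcal{R}_\vg\circ\mathcal{R}_r$, then cancelling $(\mathcal{R}_\vg)_{*,r}$, yields $(\mathcal{L}_r)_{*,e}f(\vg X)=(\mathcal{R}_r)_{*,e}f(r\vg X)$, i.e.\ $f(rY)=\mathrm{Ad}_r f(Y)$, which in components is exactly the hypothesis $w(rY)=Rw(Y)$, $t(rY)=Rt(Y)$.
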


Finally, 
we define $P_0=(U_{SO(3)}\otimes \mathcal{N}(0, \omega I))^N$, 
where $U_{SO(3)}$ is the uniform distribution on $SO(3)$, 
$\mathcal{N}$ is the normal distribution on $\mathbb{R}^3$ with mean zero and isotropic variance $\omega \in \mathbb{R}_+$, 
and $\otimes$ represents the independent coupling.
It is straightforward to verify that $P_0$ indeed satisfies assumption~\ref{assumption:p0}.

In summary,
with $P_0$
and $v$ constructed above,
the learned distribution is guaranteed to be $SO(3)^N$-equivariance, permutation-equivariance and $SO(3)$-invariance.

\subsection{Training}
\label{sec:model_training}
To learn the vector field $v_X$~\eqref{eq:vX_parametrize} using flow matching~\eqref{eq:flow_matching},
we now need to define $h_X$,
and the sampling strategy of $\tau$, $\vg_0$ and $\vg_1$.
A canonical choice~\citep{chen2024flow} is $\overline{h}(\tau)=\vg_0\exp(\tau \log(\vg_0^{-1}\vg_1))$,
where $\vg_0$ and $\vg_1$ are sampled independently,
and $\tau$ is sampled from a predefined distribution,
\eg, the uniform distribution $U_{[0,1]}$.
However,
this definition of $h$, $\vg_0$ and $\vg_1$ does not utilize any equivariance property of $v_X$,
thus it does not guarantee a high data efficiency.

To address this issue,
we construct a ``short'' and equivariant $h_X$ in the following two steps.
First, 
we independently sample $\vg_0$ from $P_0$ and $\tilde{\vg}_1$ from $P_X$,
and
obtain $\vg_1 = r^* \tilde{\vg}_1$,
where $r^* \in SO(3)$ is a rotation correction of $\tilde{\vg}_1$:
\begin{equation}
  \label{eq:rotation_correction}
  r^* = \argmin_{r \in SO(3)} ||r \tilde{\vg}_1 - \vg_0||_F^2.
  \end{equation}
Then,
we define $h_X$ as
\begin{equation}
  \label{eq:hX}
  h_X(\tau)=\exp(\tau \log(\vg_1 \vg_0^{-1}))\vg_0.
\end{equation}
We call $h_X$~\eqref{eq:hX} a path generated by $\vg_0$ and $\tilde{\vg}_1$.
A similar rotation correction in the Euclidean space was studied in~\citet{song2023equivariant, klein2023equivariant}.
Note that $h_X$~\eqref{eq:hX} is a well-defined path connecting $\vg_0$ to $\vg_1$,
because $h_X(0)=\vg_0$ and $h_X(1)=\vg_1$,
and $\vg_1$ follows $P_X$ (Prop.~\ref{prop:rotation_correction}).

The advantages of $h_X$~\eqref{eq:hX} are twofold.
First,
instead of connecting a noise $\vg_0$ to an independent data sample $\tilde{\vg}_1$,
$h_X$ connects $\vg_0$ to a modified sample $\vg_1$ where the redundant rotation component is removed,
thus it is easier to learn.
Second,
the velocity fields of $h_X$ enjoy the same relatedness as $v_X$~\eqref{eq:vX_parametrize},
which leads to high data efficiency.
Formally,
we have the following observation.
\begin{proposition}[Data efficiency]
  \label{prop:augmentation}
  Under assumption~\ref{assumption:p0}, ~\ref{assumption:px}, and~\ref{assumption:r_unique},
  we further assume that 
  $v_X$ satisfies the relatedness property required in Cor.~\ref{cor:equivariant},
  \ie, 
  $v_X$ is $\mathcal{R}_{\vr^{-1}}$-related to $v_{\vr X}$,
  $v_X$ is $\sigma$-related to $v_{\sigma X}$, 
  and $v_X$ is $\mathcal{L}_r$-invariant.
  Denote $L(X)=\mathbb{E}_{\tau, \vg_0 \sim P_0, \tilde{\vg}_1 \sim P_X} || v_X(h_X(\tau)) - \frac{\partial}{\partial \tau} h_X(\tau) ||^2_F$ the training loss~\eqref{eq:flow_matching} of PC $X$,
  where $h_{X}$ is generated by $\vg_0$ and $\tilde{\vg}_1$ as defined in~\eqref{eq:hX}.
  Then 
  \begin{itemize}[leftmargin=4mm]
    \item[-] $L(X) = L(\vr X)$ for $\vr \in SO(3)^N$.
    \item[-] $L(X) = L(\sigma X)$ for $\sigma \in S_N$.
    \item[-] $L(X) = \hat{L}(X)$, 
    where $\hat{L}(X)=\mathbb{E}_{\tau, \vg'_0 \sim P_0, \tilde{\vg}'_1 \sim (\mathcal{L}_r)_{\#} P_X} || v_X(h_X(\tau)) - \frac{\partial}{\partial \tau} h_X(\tau) ||^2_F$
    is the loss where the data distribution $P_X$ is pushed forward by $\mathcal{L}_r \in SO(3)$.
  \end{itemize}
\end{proposition}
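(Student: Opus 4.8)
The plan is to prove the three invariances of the loss $L(X)$ separately, in each case by constructing an explicit measure-preserving change of variables in the expectation defining $L$ that transports the integrand for $X$ to the integrand for the transformed point cloud. The common engine is that $L(X)$ is an integral over $\tau \sim U_{[0,1]}$, $\vg_0 \sim P_0$, $\tilde\vg_1 \sim P_X$ of the squared Frobenius norm of the flow-matching residual $v_X(h_X(\tau)) - \partial_\tau h_X(\tau)$, and the key observation (to be recorded first) is that whenever we apply a diffeomorphism $F$ to the base manifold, the residual transforms covariantly: if $v_X$ is $F$-related to $v_{Y}$ and $h_X$ is mapped to a valid path $h_{Y}$ generated by $(F\vg_0, F\tilde\vg_1)$, then $F_{*}\bigl(v_X(h_X(\tau)) - \partial_\tau h_X(\tau)\bigr) = v_{Y}(h_{Y}(\tau)) - \partial_\tau h_{Y}(\tau)$, by linearity of the differential and the chain rule. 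If in addition $F$ acts as a Frobenius isometry on the relevant tangent spaces (true for $\mathcal{R}_{\vr^{-1}}$, $\sigma$, and $\mathcal{L}_r$ since these act by block permutations and/or multiplication by orthogonal Wigner matrices on $\mathfrak{se}(3)^N$-valued data), the norm is preserved pointwise, so the claim reduces to checking that the pushforward of the sampling measure matches.

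For the $SO(3)^N$-equivariance $L(X) = L(\vr X)$: I would substitute $\vg_0' = \vg_0 \vr$ (equivalently $\vg_0 = \vg_0' \vr^{-1}$) and $\tilde\vg_1' = \tilde\vg_1 \vr$ in the expectation. Assumption~\ref{assumption:p0} gives $(\mathcal{R}_{\vr^{-1}})_\# P_0 = P_0$, so $\vg_0'$ is still $P_0$-distributed; Assumption~\ref{assumption:px} gives $(\mathcal{R}_{\vr^{-1}})_\# P_X = P_{\vr X}$, so $\tilde\vg_1'$ is $P_{\vr X}$-distributed. One must check that the rotation correction $r^*$ from~\eqref{eq:rotation_correction} is unchanged under this substitution — this is exactly where Assumption~\ref{assumption:r_unique} (uniqueness of $r^*$) enters, since $\|r\tilde\vg_1' - \vg_0'\|_F = \|r\tilde\vg_1\vr - \vg_0\vr\|_F = \|(r\tilde\vg_1 - \vg_0)\vr\|_F$ and right multiplication by the rigid motions $\vr$ preserves the Frobenius norm on the (centered) point coordinates, so the same $r^*$ is optimal — hence $\vg_1' = r^*\tilde\vg_1'$ corresponds to $\vg_1\vr$ and $h_X$ is carried to the path $h_{\vr X}$ generated by $(\vg_0',\tilde\vg_1')$ via $\mathcal{R}_{\vr}$. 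Using that $v_X$ is $\mathcal{R}_{\vr^{-1}}$-related to $v_{\vr X}$ and the covariance observation above, the integrand equals that of $L(\vr X)$, and the measures match, giving the identity. The permutation case $L(X) = L(\sigma X)$ is entirely analogous with $F = \sigma$: permutation-invariance of $P_0$ and permutation-equivariance of $P_X$ handle the measures, $r^*$ is preserved because $\sigma$ merely reorders blocks (Frobenius norm is a sum over blocks), and $\sigma$-relatedness of $v_X$ closes the loop.

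For the third item, $L(X) = \hat L(X)$, the situation is slightly different because we are changing the \emph{data} distribution rather than reindexing, so I would substitute $\tilde\vg_1' = r\tilde\vg_1$ directly: if $\tilde\vg_1 \sim P_X$ then $\tilde\vg_1' \sim (\mathcal{L}_r)_\# P_X$ by definition, which is precisely the data distribution in $\hat L(X)$. The point is that this substitution leaves $h_X$ and $v_X$ untouched: the rotation correction absorbs the extra left rotation, i.e.\ $r^*$ for the pair $(\vg_0, r\tilde\vg_1)$ is $\hat r = r^\ast r^{-1}$ where $r^*$ is the correction for $(\vg_0,\tilde\vg_1)$ (again by Frobenius-isometry of left multiplication and uniqueness from Assumption~\ref{assumption:r_unique}), so the corrected sample $\hat r \tilde\vg_1' = r^* r^{-1} r \tilde\vg_1 = r^*\tilde\vg_1 = \vg_1$ is exactly the same $\vg_1$ as before, and hence $h_X$ and its velocity are identical. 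Therefore the integrand is literally unchanged, and the equality of the two expectations is immediate. (Alternatively, one can phrase this via $\mathcal{L}_r$-invariance of $v_X$ together with a change of variables $\vg_0' = r\vg_0$; the rotation-correction route above is cleaner because it avoids touching the noise.)

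The main obstacle I anticipate is the bookkeeping around the rotation correction $r^*$: unlike $v_X$ and $h_X$, which transform by clean functorial rules, $r^*$ is defined implicitly by an optimization, so for each of the three substitutions one must verify (i) that the minimized objective is preserved up to a norm-preserving relabeling, and (ii) that the minimizer is therefore transported in the expected way — and this genuinely requires the uniqueness hypothesis (Assumption~\ref{assumption:r_unique}), without which $r^*$ could be set-valued and the change of variables ill-defined on a measure-zero—or worse—set. A secondary technical point is confirming that the push-forward differentials $(\mathcal{R}_{\vr})_{*}$, $\sigma_*$, $(\mathcal{L}_r)_*$ really do act as Frobenius isometries on the tangent vectors appearing in the residual (they are built from Wigner-D matrices and coordinate permutations, which are orthogonal, so this holds, but it should be stated), so that "norm preserved pointwise" is rigorous rather than merely morally true.
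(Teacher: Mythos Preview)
Your argument for parts 1 and 2 is the paper's: a measure-preserving change of variables in $(\vg_0,\tilde\vg_1)$, transport of $h_X$ via Prop.~\ref{prop:hX_equivariance}, relatedness of $v_X$, and Frobenius-isometry of the pushforward differential. (One bookkeeping slip: with the convention $(\mathcal{R}_{\vr^{-1}})_\# P_X = P_{\vr X}$, the substitution that lands in $P_{\vr X}$ is $\tilde\vg_1' = \tilde\vg_1\vr^{-1}$, not $\tilde\vg_1\vr$; your formula produces $P_{\vr^{-1} X}$ --- harmless since the claim is for all $\vr$, but worth fixing.) For part 3 you genuinely diverge from the paper. The paper reparametrizes \emph{both} variables, $\vg_0' = r\vg_0$ and $\tilde\vg_1' = r\tilde\vg_1$, applies the third case of Prop.~\ref{prop:hX_equivariance} to obtain $\hat h_X = \mathcal{L}_r h_X$, and then closes with the assumed $\mathcal{L}_r$-invariance of $v_X$ together with the left-$SO(3)$-invariance of $P_0$. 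You instead leave the noise untouched and change only $\tilde\vg_1' = r\tilde\vg_1$, observing that the rotation correction absorbs the extra $r$: the minimizer of~\eqref{eq:rotation_correction} becomes $r^* r^{-1}$, so the corrected endpoint $\vg_1' = r^* r^{-1}\cdot r\tilde\vg_1 = \vg_1$ and hence the path $h_X$ and the integrand are literally unchanged. Your route is shorter and uses strictly fewer hypotheses --- neither the $\mathcal{L}_r$-invariance of $v_X$ nor the left-invariance of $P_0$ is invoked --- and it makes explicit that the correction step~\eqref{eq:rotation_correction} is precisely what quotients out the global orientation, which is the conceptual point of the third item.
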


Prop.~\ref{prop:augmentation} implies that when $h_X$~\eqref{eq:hX} is combined with the equivariant components developed in Sec.~\ref{sec:flow},
the following three data augmentations are automatically incorporated into the training process:
1) random rotation of each input piece $X_i$,
2) random permutation of the order of the input pieces, 
and 3) random rotation of the assembled shape.

\subsection{Sampling via the Runge-Kutta method}
\label{sec:model_testing}
Finally,
when the vector field $v_X$~\eqref{eq:vX_parametrize} is learned,
we can obtain a sample $\vg_1$ from $P_X$ by numerically integrating $v_X$ starting from a noise $\vg_0$ from $P_0$.
In this work,
we use the Runge-Kutta (RK) solver on $SE(3)^N$,
which is a generalization of the classical RK solver on Euclidean spaces.
For clarity,
we present the formulations below,
and refer the readers to~\citet{crouch1993numerical} for more details.

To apply the RK method,
we first discretize the time interval $[0,1]$ into $I$ steps,
\ie, $\tau_i = \frac{i}{I}$ for $i=0, \ldots, I$,
with a step length $\eta = \frac{1}{I}$.
For the given input $X$,
denote $f(\vg X)$ at time $\tau$ by $f_\tau(\vg)$ for simplicity.
The first-order RK method (RK1),
\ie, the Euler method, 
is to iterate: $\vg_{i+1} = \exp(\eta f_{\tau_i}(\vg_i))\vg_i$,
for $i=0, \ldots, I$.
To achieve higher accuracy,
we can use the fourth-order RK method (RK4).
More details can be found in~\ref{app:rk4}.

\section{Implementation}

\begin{wrapfigure}{r}{.51\textwidth}
  \vspace{-10mm}
  \begin{minipage}{\linewidth}
      \centering
      \includegraphics[width=0.95\linewidth]{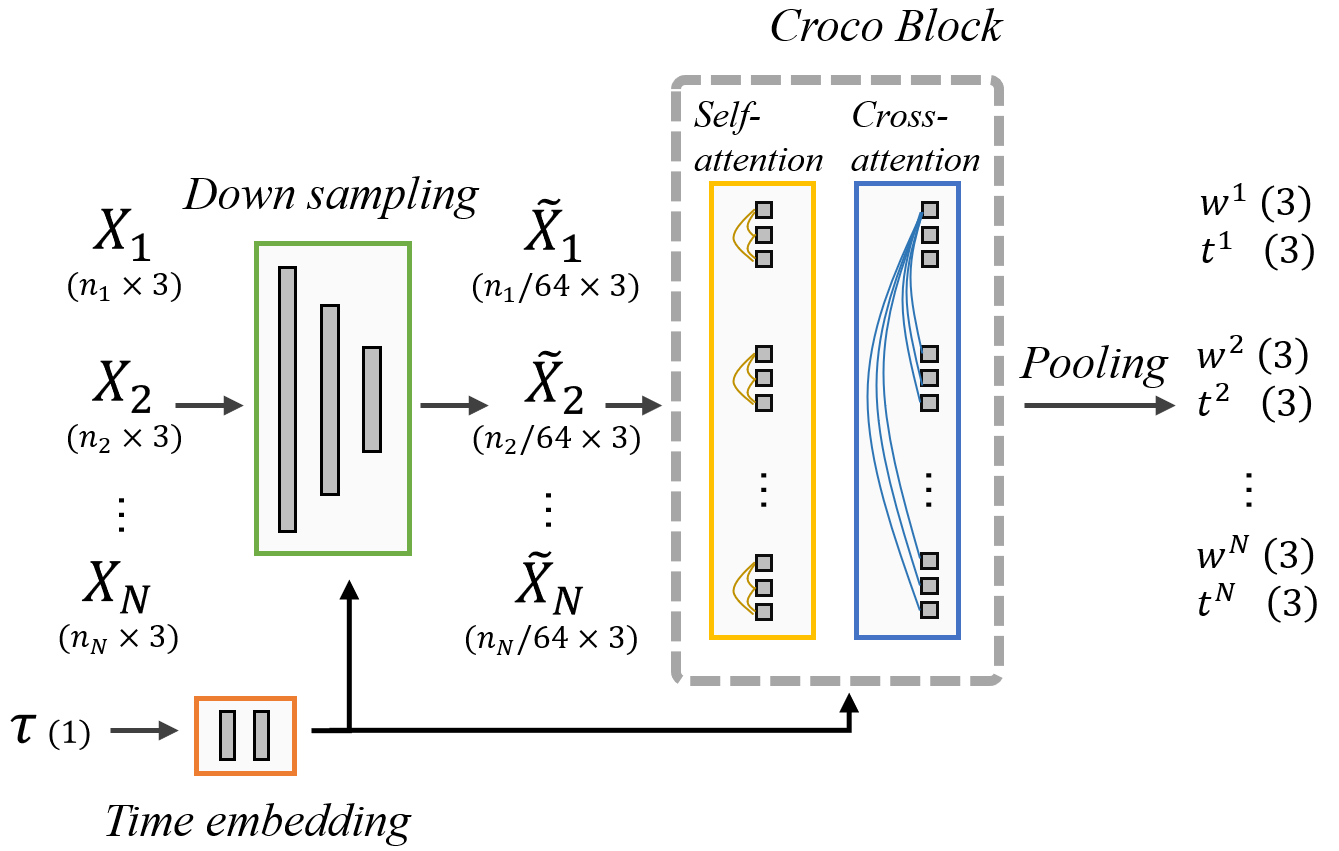}
  \end{minipage}
\caption{
  An overview of our model.
  The shapes of variables are shown in the brackets.
}
\vspace{-5mm}
\label{fig:network}
\end{wrapfigure}
This section provides the details of the network $f$~\eqref{eq:f_i}.
Our design principle is to imitate the standard transformer structure~\citep{vaswani2017attention} to retain its best practices.
In addition,
according to Prop.~\ref{prop:sigma_lr_related},
we also require $f$ to be permutation-equivariant and $SO(3)$-equivariant.

The overall structure of the proposed network is shown in Fig.~\ref{fig:network}.
In a forward pass,
the input PC pieces $\{X_i\}_{i=1}^N$ are first downsampled using a few downsampling blocks,
and then fed into the Croco blocks~\citep{weinzaepfel2022croco} to model their relations.
Meanwhile, 
the time step $\tau$ is first embedded using a multi-layer perceptron (MLP) and then incorporated into the above blocks via adaptive normalization~\citep{peebles2023scalable}.
The output is finally obtained by a piece-wise pooling.

Next,
we provide details of the equivariant attention layers,
which are the major components of both the downsampling block and the Croco block,
in Sec.~\ref{sec:attention}.
Other layers,
including the nonlinear and normalization layers,
are described in Sec.~\ref{sec:nonlinear}.

\subsection{Equivariant attention layers}
\label{sec:attention}

The equivariant attention layers are based on e3nn~\citep{geiger2022e3nn}.
For the input point cloud,
the KNN graph is first built,
and the query $Q$, key $K$ and value $V$ matrices are computed for each node.
Then the dot-product attention is computed where each node attends to its neighbors.
We further use the reduction technique~\citep{passaro2023reducing} to accelerate the computation.
More details can be found in Appx.~\ref{sec:appx-so2}.

Following Croco~\citep{weinzaepfel2022croco},
we stack two types of attention layers,
\ie, the self-attention layer and the cross-attention layer,
into a Croco block to learn the features of each PC piece while incorporating information from other pieces.
For self-attention layers,
we build KNN graph where the neighbors are selected from the same pieces,
and for cross-attention layers,
we build KNN graph where the neighbors are selected from the different pieces.
In addition,
to reduce the computational cost,
we use downsampling layers to reduce the number of points before the Croco layers.
Each downsampling layer consists of a farthest point sampling (FPS) layer and a self-attention layer.

\subsection{Adaptive normalization and nonlinear layers}
\label{sec:nonlinear}

Following the common practice~\citep{devlin2019bert},
we seek to use the GELU activation function~\citep{hendrycks2016gaussian} in our transformer structure.
However, 
GELU in its original form is not $SO(3)$-equivariant.
To address this issue,
we adopt a projection formulation similar to~\citet{deng2021vector}.
Specifically,
we define the equivariant GELU (Elu) layer as:
$\textit{Elu}(F^l) = \textit{GELU}(\langle F^l, \widehat{W F^l} \rangle)$
where $\widehat{x} = x / \lVert x \rVert$ is the normalization,
$W \in \mathbb{R}^{c \times c}$ is a learnable weight.
Note that Elu is a natural extension of GELU,
because when $l=0$, 
$\textit{Elu}(F^0) = \textit{GELU}( \pm F^0)$.

As for the normalization layers,
we use RMS-type layer normalization layers~\citep{zhang2019root} following~\citet{liao2023equiformerv2},
and we use the adaptive normalization~\citep{peebles2023scalable} technique to incorporate the time step $\tau$.
Specifically,
we use the adaptive normalization layer $\textit{AN}$ defined as:
$\textit{AN}(F^l, \tau) = F^l / \sigma  \cdot \textit{MLP}(\tau)$,
where $\sigma =\sqrt{\frac{1}{c \cdot l_{max}} \sum_{l=1}^{l_{max}} \frac{1}{2l+1} \langle F^l, F^l \rangle}$,
$l_{max}$ is the maximum degree,
and $\textit{MLP}$ is a multi-layer perceptron that maps $\tau$ to a vector of length $c$.

We finally remark that the network $\vf$ defined in this section is $SO(3)$-equivariant because each layer is $SO(3)$-equivariant by construction.
$\vf$ is also permutation-equivariant because it does not use any order information of $X_i$.

\section{Experiment}
\label{sec:exp}
This section evaluates Eda on practical assembly tasks.
After introducing the experiment settings in Sec.~\ref{sec:exp_setting},
we first evaluate Eda on the pair-wise registration tasks in Sec.~\ref{sec:exp_pairwise},
and then we consider the multi-piece assembly tasks in Sec.~\ref{sec:exp_multi}.
An ablation study is finally presented in Sec.~\ref{sec:exp_ablation}.

\subsection{Experiment settings}
\label{sec:exp_setting}
We evaluate the accuracy of an assembly solution using the averaged pair-wise error.
For a predicted assembly $\vg$ and the ground truth $\hat{\vg}$, 
the rotation error $\Delta r$ and the translation error $ \Delta t$ are computed as:
$(\Delta r, \Delta t) =  \frac{1}{N (N-1)} \sum_{i \neq j} \tilde{\Delta}(\hat{g}_i, \hat{g}_jg_j^{-1}g_i)$,
where the pair-wise error $\tilde{\Delta}$ is computed as 
$\tilde{\Delta}(g, \hat{g}) = \big( \frac{180}{\pi} \textit{accos} \left( \frac{1}{2} \left( \textit{tr}(r \hat{r}^{T}) - 1 \right)   \right),  \| \hat{t} - t \| \big)$.
Here $g = (r, t)$, 
$\hat{g} = (\hat{r}, \hat{t})$,
and $tr(\cdot)$ represents the trace.
This metric is the pair-wise rotation/translation error: it measures the averaged error of $\vg_i$ w.r.t. $\vg_j$ for all $(i, j)$ pairs of pieces.

For Eda, 
we use $2$ Croco blocks,
and $4$ downsampling layers with a downsampling ratio $0.25$.
We use $k=10$ nearest neighbors,
$l_{max}=2$ degree features with $d=64$ channels and $4$ attention heads.
Following~\citet{peebles2023scalable},
we keep an exponential moving average (EMA) with a decay of $0.99$,
and we use the AdamW~\citep{loshchilov2017decoupled} optimizer with a learning rate $10^{-4}$.
Following~\citet{esser2024scaling},
we use a logit-normal sampling for time variable $\tau$.
For each experiment, 
we train Eda on $3$ Nvidia A100 GPUs for at most $5$ days.
We denote Eda with $q$ steps of RK$p$ as ``Eda (RK$p$, $q$)'' ,
\eg, Eda (RK1, $10$) represents Eda with $10$ steps of RK1.

\subsection{Pair-wise registration}
\label{sec:exp_pairwise}

\begin{wraptable}{r}{.5\textwidth}
	\begin{center}
        \vspace{-8mm}
	  \caption{The overlap ratio of PC pairs (\%).}
    \label{tab:3DMatch_data}
      \begin{tabular}{c c c c } 
		  \hline
      & 3DM & 3DL & 3DZ \\
		\hline
	  \centering
    Training set  & \multicolumn{2}{c}{$(10, 100)$} & $0$     \\
    Test set  & $ (30, 100)$  & $(10, 30)$ & $0$      \\
    \hline
	  \end{tabular}
	\end{center}
	\vspace{-5mm}
\end{wraptable}
This section evaluates Eda on rotated 3DMatch~\citep{zeng20173dmatch} (3DM) dataset containing PC pairs from indoor scenes.
Following~\citet{huang2021predator},
we consider the 3DLoMatch split (3DL),
which contains PC pairs with smaller overlap ratios.
Furthermore,
to highlight the ability of Eda on non-overlapped assembly tasks,
we consider a new split called 3DZeroMatch (3DZ),
which contains non-overlapped PC pairs.
The comparison of these three splits is shown in Tab.~\ref{tab:3DMatch_data}.

\begin{wraptable}{r}{.55\textwidth}
	\begin{center}
        \vspace{-8mm}
	  \caption{Quantitative results on rotated 3DMatch.  
    ROI (n): ROI with $n$ RANSAC samples.
    }
	  \setlength{\tabcolsep}{2.0pt}
    \label{tab:3DMatch}
      \begin{tabular}{c c c c c c c } 
		  \hline
      & \multicolumn{2}{c}{3DM} & \multicolumn{2}{c}{3DL} & \multicolumn{2}{c}{3DZ} \\
           & $\Delta r$ & $\Delta t$ & $\Delta r $ & $\Delta t$ & $\Delta r $ & $\Delta t$ \\
		\hline
	  \centering
    FGR  & 69.5  & 0.6 & 117.3 & 1.3  & $-$ & $-$    \\
    GEO  & 7.43  & 0.19 & 28.38 & 0.69 & $-$ & $-$      \\
    ROI (500)  & 5.64   & 0.15 & 21.94   & 0.53   & $-$ & $-$    \\
    ROI (5000)  & 5.44   & 0.15 & 22.17   & 0.53  & $-$ & $-$     \\
    AMR  &  5.0 & \textbf{0.13} &  20.5 & 0.53  & $-$ & $-$     \\
    Eda (RK4, 50) &  \textbf{2.38}  & 0.17  &  \textbf{8.57} & \textbf{0.4} & 78.32 & 2.74 \\
    \hline
	  \end{tabular}
	\end{center}
	\vspace{-6mm}
\end{wraptable}
We compare Eda against the following baseline methods: 
FGR~\citep{zhou2016fast}, GEO~\citep{qin2022geometric}, ROI~\citep{yu2023rotation}, and AMR~\citep{chen2025adaptive},
where FGR is a classic optimization-based method,
GEO and ROI are correspondence-based methods,
and AMR is a recently proposed diffusion-like method based on GEO.
We report the results of the baseline methods using their official implementations.
Note that the correspondence-free methods like~\citet{ryu2024diffusion,wang2024se} do not scale to this dataset.

We report the results in Tab~\ref{tab:3DMatch}.
On 3DM and 3DL,
we observe that Eda outperforms the baseline methods by a large margin,
especially for rotation errors, 
where Eda achieves more than $50\%$ lower rotation errors on both 3DL and 3DM.
We provide more details of Eda on 3DL in Fig.~\ref{fig-registration} in the appendix.

\begin{figure}[th!]
  \vspace{-4mm}
  \centering
    \subfigure[Ground truth]{
  \label{fig-3DZ-2}
    \begin{minipage}[b]{0.27\linewidth}
        \centering
        \includegraphics[width=0.9\linewidth]{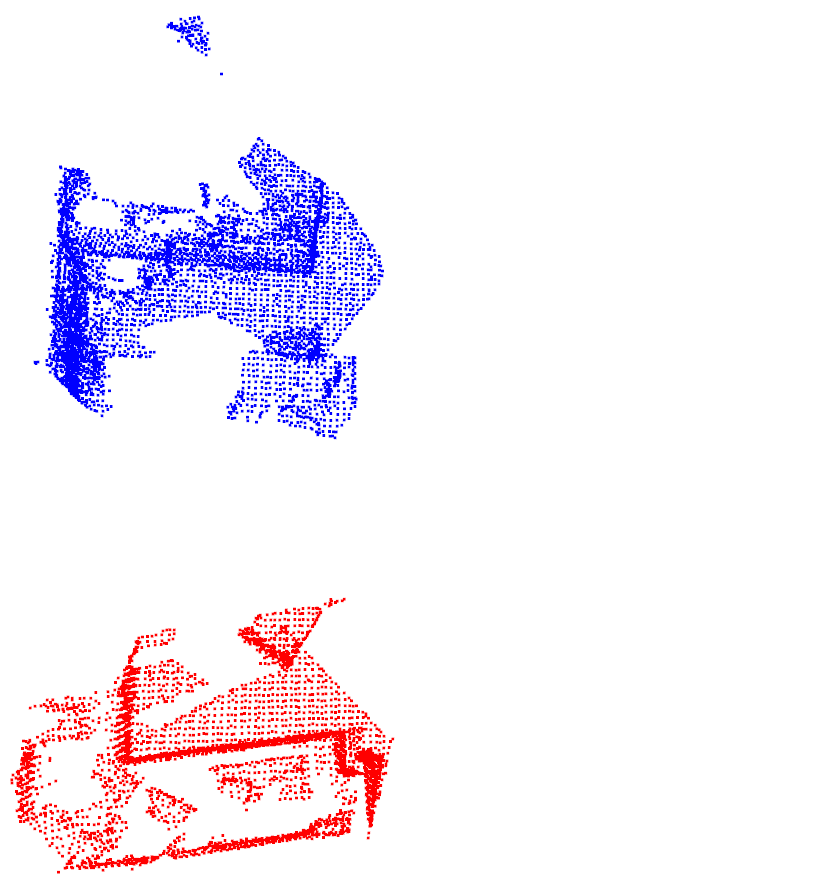}
    \end{minipage}
}
  \subfigure[The result of Eda]{
    \label{fig-3DZ-1}
      \begin{minipage}[b]{0.27\linewidth}
          \centering
          \includegraphics[width=0.9\linewidth]{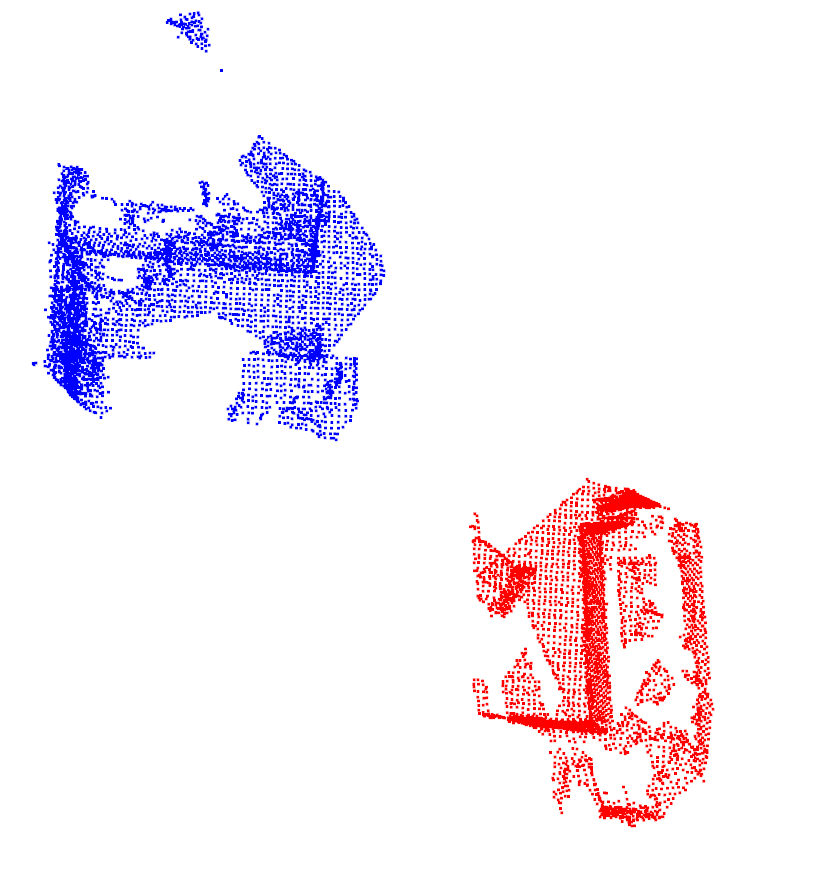} 
      \end{minipage}
  }
\subfigure[Distribution of $\Delta r$]{
  \label{fig-3DZ-3}
  \begin{minipage}[b]{0.27\linewidth}
    \centering
    \includegraphics[width=1.0\linewidth]{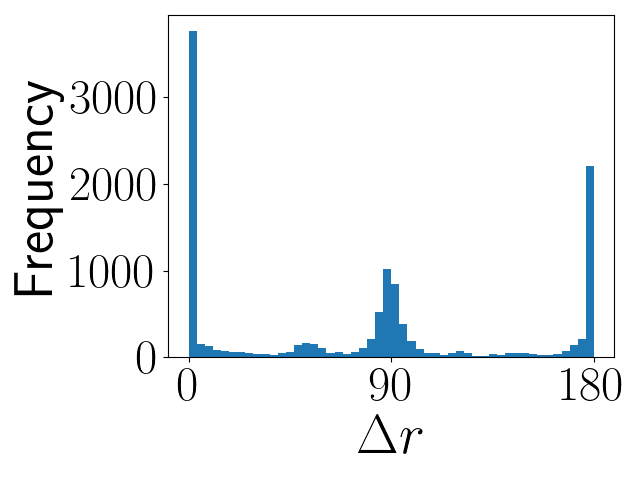}
  \end{minipage}
}
  \vspace{-1mm}
\caption{
  More details of Eda on 3DZ.
  \subref{fig-3DZ-1}: A result of Eda.
  Cameras are set to look at the room from above.
  Two PC pieces are marked by different colors.
  \subref{fig-3DZ-3}: the distribution of $\Delta r$ on the test set.
}
\vspace{-2mm}
\label{fig-3DZ}
\end{figure}

As for 3DZ,
we only report the results of Eda in Tab~\ref{tab:3DMatch},
because all baseline methods are not applicable to 3DZ,
\ie, their training goal is undefined when the correspondence does not exist.
We observe that Eda's error on 3DZ is much larger compared to that on 3DL,
suggesting that there exists much larger ambiguity.
Nevertheless,
as shown in 
in Fig.~\ref{fig-3DZ-1},
Eda indeed learned the global geometry of the indoor scenes instead of just random guessing,
because it tends to place large planes,
\ie, walls, floors and ceilings, 
in a parallel or orthogonal position,
and keep a plausible distance between walls of the assembled room. 

To show that this behavior is consistent in the whole test set,
we present the distribution of $\Delta r$ of Eda on 3DZ in Fig.~\ref{fig-3DZ-3}.
A simple intuition is that for rooms consisting of $6$ parallel or orthogonal planes (four walls, a floor and a ceiling),
if the orthogonality or parallelism of planes is correctly maintained in the assembly,
then $\Delta r$ should be $0$, $90$, or $180$.
We observe that this is indeed the case in Fig.~\ref{fig-3DZ-3},
where $\Delta r$ is centered at $0$, $90$, and $180$.
We remark that the ability to learn global geometric properties beyond correspondences is a key advantage of Eda,
and it partially explains the superior performance of Eda in Tab.~\ref{tab:3DMatch}

\subsection{Multi-piece assembly}
\label{sec:exp_multi}
This section evaluates Eda on the volume constrained version of BB dataset~\citep{sellan2022breaking}.
We consider the shapes with $2 \leq N \leq 8$ pieces in the ``everyday'' subset.
We compare Eda against the following baseline methods:
DGL~\citep{zhan2020generative}, 
LEV~\citep{wu2023leveraging}, 
GLO~\citep{sellan2022breaking}, 
JIG~\citep{lu2023jigsaw} and GARF~\citep{li2025garf}.
JIG is correspondence-based,
GARF is diffusion-based,
and other baseline methods are regression-based.
For Eda,
we process all fragments by grid downsampling with a grid size $0.02$.
For the baseline methods,
we follow their original preprocessing steps.
We do not pretrain GARF for fair comparison,.
To reproduce the results of the baseline methods,
we use the implementation of DGL and GLO in the official benchmark suite of BB,
and we use the official implementation of LEV, JIG and GARF.

\begin{wraptable}{r}{.5\textwidth}
	\begin{center}
        \vspace{-5mm}
	  \caption{Quantitative results on BB dataset and the total computation time on the test set.}
	  \setlength{\tabcolsep}{2.0pt}
    \label{tab:BB}
      \begin{tabular}{c c c c } 
		  \hline
           & $\Delta r$ & $\Delta t$ & Time (min) \\
		\hline
	  \centering
    GLO           & 126.3   & 0.3 & \textbf{0.9}  \\
    DGL           & 125.8   & 0.3 & \textbf{0.9}  \\
    LEV           & 125.9   & 0.3 & 8.1   \\
    JIG           & 106.5   & 0.24 & 122.2  \\
    GARF          & 95.6    & 0.2 & (48)  \\
    Eda (RK1, 10) & 80.64   & \textbf{0.16} & 19.4   \\
    Eda (RK4, 10) & \textbf{79.2}   & \textbf{0.16} & 76.9   \\
    \hline
	  \end{tabular}
	\end{center}
	\vspace{-6mm}
\end{wraptable}
The results are shown in Tab.~\ref{tab:BB},
where we also report the computation time of all methods on the test set on a Nvidia T4 GPU except GARF,
which is measured on a A40 GPU because it does not support the T4 GPU.
We observe that Eda outperforms all baseline methods by a large margin at a moderate computation cost.
We present some qualitative results in Fig.~\ref{fig-BB-qualitative1} in the appendix,
where we observe that Eda can generally reconstruct the shapes more accurately than the baseline methods.
An example of the assembly process of Eda is presented in Fig.~\ref{fig-BB}.

\begin{figure}[th!]
  \vspace{-3mm}
  \begin{minipage}{\linewidth}
      \centering
      \includegraphics[width=0.13\linewidth]{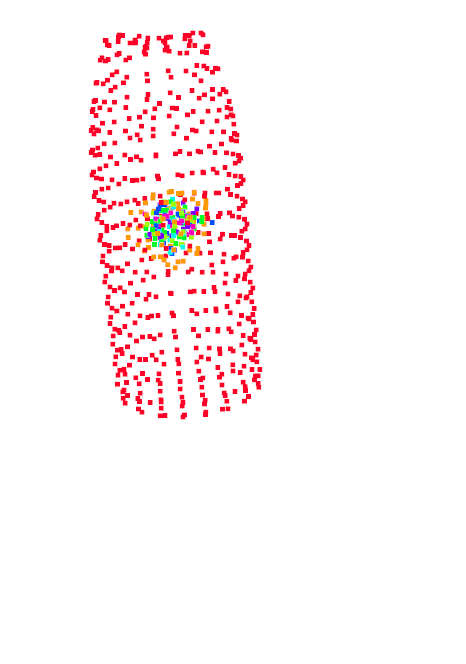} 
      \includegraphics[width=0.13\linewidth]{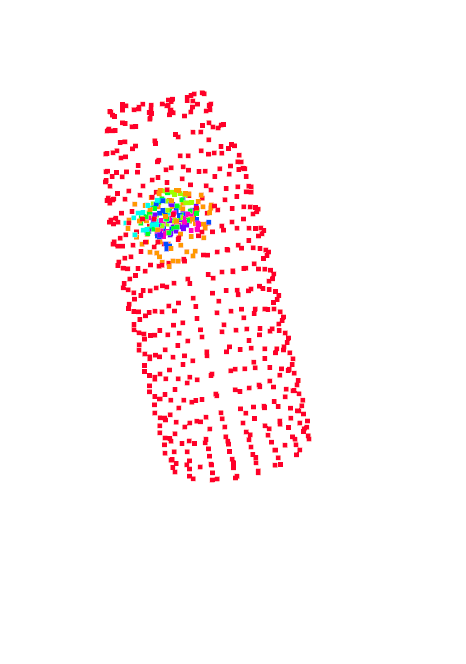} 
      \includegraphics[width=0.13\linewidth]{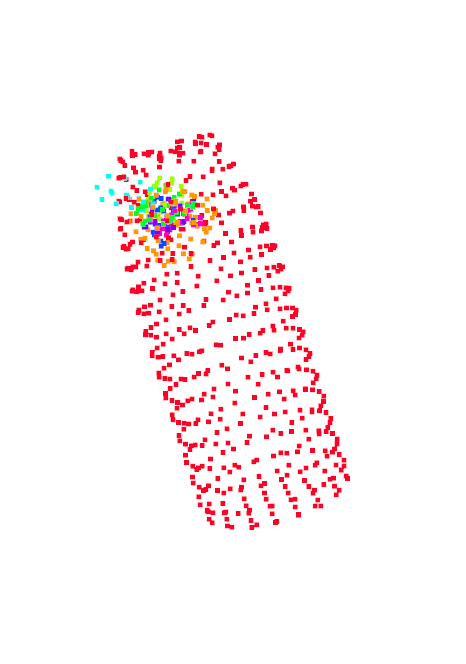} 
      \includegraphics[width=0.13\linewidth]{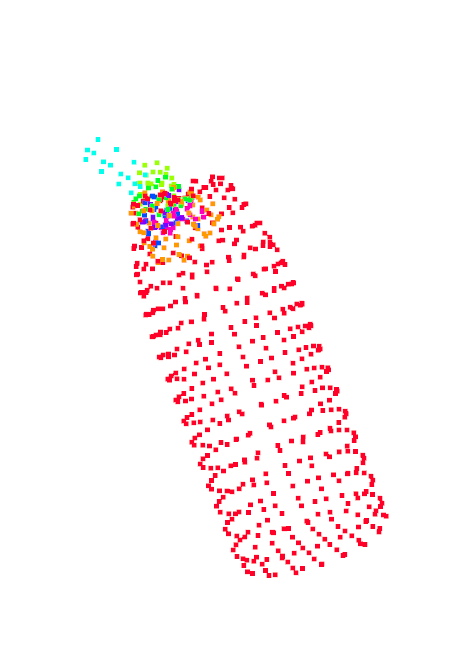} 
      \includegraphics[width=0.13\linewidth]{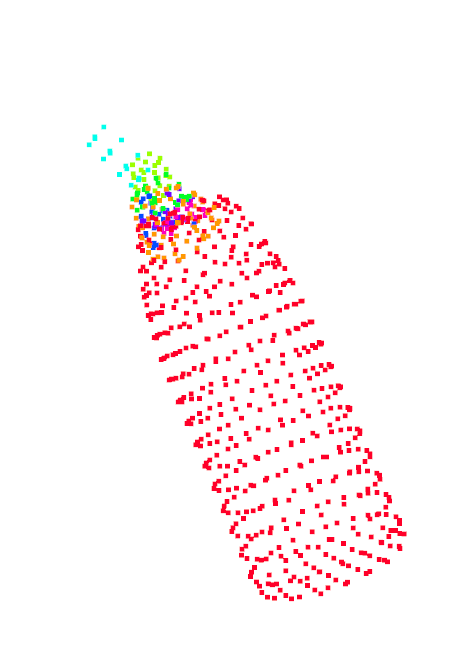} 
      \includegraphics[width=0.13\linewidth]{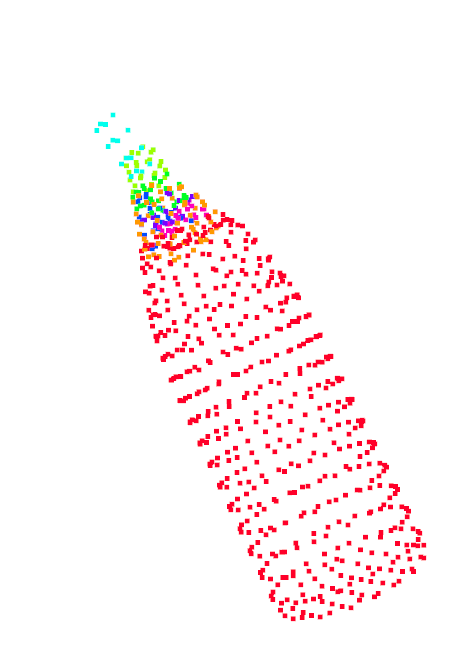} 
      \includegraphics[width=0.13\linewidth]{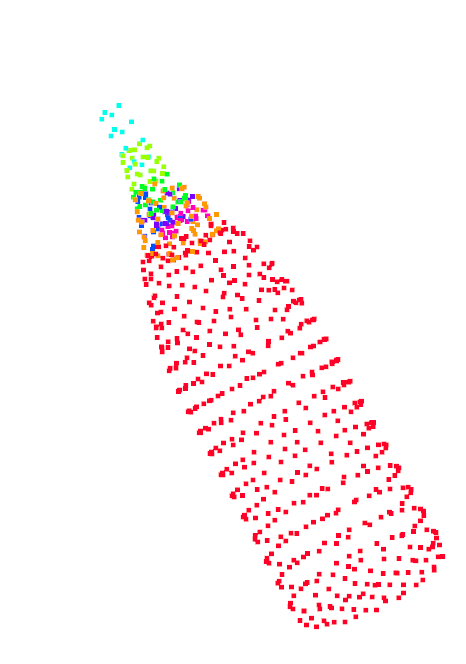} 
  \end{minipage}
  \vspace{-3mm}
\caption{
  From left to right: the assembly process of a $8$-piece bottle by Eda. 
}
\vspace{-4mm}
\label{fig-BB}
\end{figure}

\subsection{Ablation studies}
\label{sec:exp_ablation}

\begin{wrapfigure}{r}{.55\textwidth}
  \centering
  \vspace{-7mm}
  \includegraphics[width=0.5\linewidth]{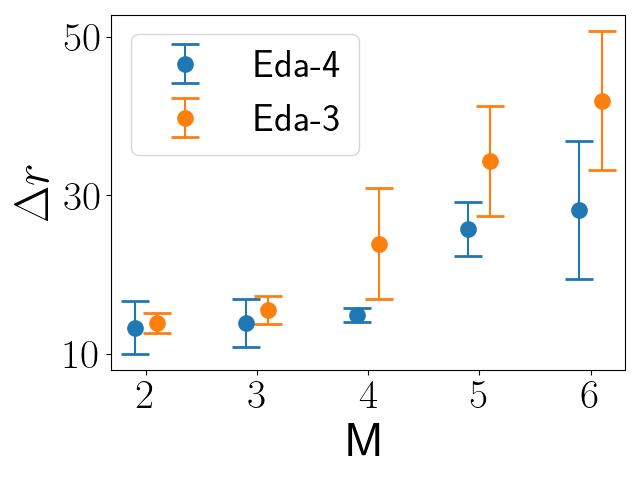}
  \hspace{-2mm}
  \includegraphics[width=0.5\linewidth]{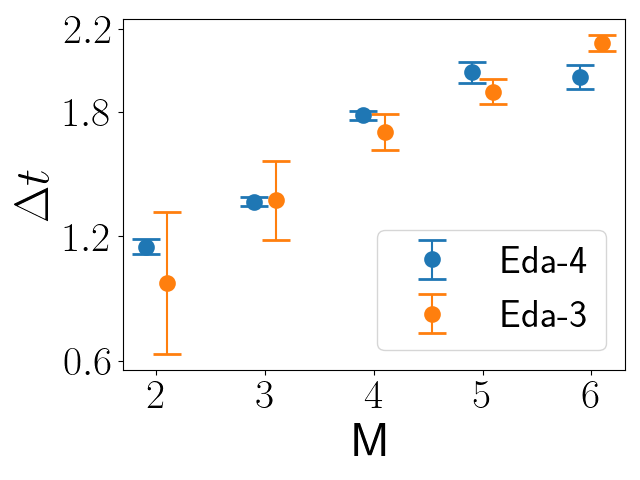}
\vspace{-2mm}
\caption{
  The results of Eda on different number of pieces.
}
\vspace{-3mm}
\label{fig-kitti}
\end{wrapfigure}
We first investigate the influence of the number of pieces on the performance of Eda.
We use the kitti odometry dataset~\citep{geiger2012we} containing PCs of city road views.
For each sequence of data,
we keep pieces that are at least $100$ meters apart so that they do not necessarily overlap,
and we downsample them using grid downsampling with a grid size $0.5$.
We train Eda on all consecutive pieces of length $2 \sim N_{\textit{max}}$ in sequences $0 \sim 8$.
We call the trained model Eda-$N_{\textit{max}}$.
We then evaluate Eda-$N_{\textit{max}}$ on all consecutive pieces of length $M$ in sequence $9 \sim 10$.

The results are shown in Fig.~\ref{fig-kitti}.
We observe that for $\Delta r$,
when the length of the test data is seen in the training set,
\ie, $M \leq N_{\textit{max}}$,
Eda performs well, 
and $M > N_{\textit{max}}$ leads to worse performance.
In addition,
Eda-4 generalizes better than Eda-3 on data of unseen length ($5$ and $6$).
The result indicates the necessity of using training data whose lengths subsume that of the test data.
Meanwhile,
the translation errors of Eda-4 and Eda-3 are comparable,
and they both increase with the length of data.

\begin{wraptable}{r}{.3\textwidth}
	\begin{center}
        \vspace{-8mm}
	  \caption{Ablation study.}
	  \setlength{\tabcolsep}{2.0pt}
    \label{tab:ablation}
      \begin{tabular}{c c c } 
		  \hline
           & $\Delta r$ & $\Delta t$  \\
		\hline
	  \centering
    Eda               & 13.3     & 0.2   \\
    Eda-$(r)$       & 15.4    & 0.23   \\
    Eda-$(r, h)$    &79.4   & 0.51   \\
    Eda-$(r, e)$    & 86.2   & 0.37   \\
    Eda-$(r, h, e)$ & $-$   & $-$   \\
    \hline
	  \end{tabular}
	\end{center}
	\vspace{-6mm}
\end{wraptable}
Then we investigate the influence of the components in our theory.
We compare Eda with Eda-$O$ on the 3DL dataset,
where $O$ is a combination of the following modifications:
1) $r$: removing $r^*$ in $h_X$~\eqref{eq:hX}.
2) $h$: replacing $h_X$~\eqref{eq:hX} by the canonical path $\overline{h}$.
3) $e$: replacing $f$ by a non-equivariant network.
The results are shown in Tab.~\ref{tab:ablation},
where we observe that $r$ leads to a small performance drop, 
while $h$ and $e$ lead to large performance drops.
In addition, Eda-$(r, h, e)$ fails to converge.
More details
can be found in Appx.~\ref{sec:appx-exp}.

\section{Conclusion}
\label{sec:conclusion}
This work studied the theory of equivariant flow matching,
and presented a multi-piece assembly method,
called Eda,
based on the theory.
We show that Eda can accurately assemble PCs on practical datasets.
More discussions can be found in Appx.~\ref{app:future_work}.

\bibliography{a}
\bibliographystyle{plainnat}

\newpage
\appendix

\section{The Use of Large Language Models (LLM)}
We use an LLM to correct grammar errors.

\section{More details of the related tasks}
\label{sec:appx-related}

The registration task aims to reconstruct the scene from multiple overlapped views.
A registration method generally consists of two stages:
first, 
each pair of pieces is aligned using a pair-wise method~\citep{qin2022geometric},
then all pieces are merged into a complete shape using a synchronization method~\citep{arrigoni2016spectral, lee2022hara, gojcic2020LearningMultiview}.
In contrast to other tasks,
the registration task generally assumes that the pieces are overlapped.
In other words,
it assumes that some points observed in one piece are also observed in the other piece,
and the goal is to match the points observed in both pieces,
\ie, corresponding points.
The state-of-the-art registration methods usually infer the correspondences based on the feature similarity~\citep{yu2023rotation} learned by neural networks,
and then align them using the SVD projection~\citep{arun1987least} or RANSAC.

The robotic manipulation task aims to move one PC to a certain position relative to another PC.
For example,
one PC can be a cup, and the other PC can be a table,
and the goal is to move the cup onto the table.
Since the input PCs are sampled from different objects,
they are generally non-overlapped.
Unlike the other two tasks,
this task is generally formulated in a probabilistic setting,
as the solution is generally not unique.
Various probabilistic models,
such as energy-based models~\citep{simeonov2022neural, ryu2023equivariant},
or diffusion models~\citep{ryu2024diffusion},
have been used for this task.

The reassembly task aims to reconstruct the complete object from multiple fragment pieces.
This task is similar to the registration task,
except that the input PCs are sampled from different fragments, 
thus they are not necessarily overlapped,
\eg, due to missing pieces or the erosion of the surfaces.
Most of the existing methods are based on regression,
where the solution is directly predicted from the input PCs~\citep{wu2023leveraging,chen2022neural,wang2024se}.
Some probabilistic methods, 
such as diffusion-based methods~\citep{xu2024fragmentdiff,scarpellini2024diffassemble},
have also been proposed.
Note that there exist some exceptions~\citep{lu2023jigsaw} which assume the overlap of the pieces,
and they rely on the inferred correspondences as the registration methods. 

A comparison of these three tasks is presented in Tab.~\ref{Task_compare}.

\begin{table}[ht!]
	\begin{center}
	  \caption{Comparison between registration, reassembly and manipulation tasks.}
	  \setlength{\tabcolsep}{0.2pt}
    \label{Task_compare}
      \begin{tabular}{c c c c} 
		  \hline
    Task  & Number of pieces & Probabilistic/Deterministic & Overlap \\
    \hline
	  \centering
    Registration & $\geq 2$ & Deterministic & Overlapped \\
    Reassembly & $\geq 2$ & Deterministic & Non-overlapped \\
    Manipulation & 2 & Probabilistic & Non-overlapped \\
    Assembly (this work) & $\geq 2$ & Probabilistic & Non-overlapped \\
    \hline
	  \end{tabular}
	\end{center}
\end{table}

\section{A walk-through of the main theory}
\label{appx:toy_example}
This section provides a walk-through of the theory using the two-piece deterministic example. 
We follow the notation in example~\ref{example:equi}:
let $(r_1, r_2)$ be the solution for the input point clouds $(X_1, X_2)$, 
meaning $r_1X_1$ and $r_2X_2$ are assembled.

Our theory addresses the following equivariance question.
Assume that a diffusion model works for the input $(X_1, X_2)$,
\ie, 
the predicted vector field $v_{(X_1, X_2)}$ flows to the correct solution $(r_1, r_2)$.
\textbf{How to ensure it also works for the perturbed input?}
For example,
for $SO(3)^2$-equivariance, 
the question is how to ensure the model also works for $(r_3X_1, r_4X_2)$.
\ie,
to ensure the predicted vector field $v_{(r_3X_1, r_4X_2)}$ flows to $(r_1r_3^{-1}, r_2r_4^{-1})$.

\textbf{Corollary 4.4} shows that the goal can be achieved if 
$v_{(r_3X, r_4X_2)}$ is a proper "transformation" of $v_{(X_1, X_2)}$ (relatedness),
and the noise is invariant.

Then, 
the next question is how to satisfy the relatedness requirement.
\textbf{Proposition 4.5} suggests that this can be simply done by parametrizing the vector fields as
\begin{equation}
  \label{appx:example_v}
v_{(X_1, X_2)}(r_7, r_8) =f(r_7X_1, r_8X_2)(r_7 \oplus r_8), \quad \textit{where} \quad f(X_1, X_2) = (w_1, t_1) \oplus(w_2, t_2)
\end{equation}
is a neural network mapping $(X_1, X_2)$ to their respective rotation/translation velocity components $w$ and $t$, 
and $\oplus$ is the concatenation. 
In summary,
we can now answer the question from the last paragraph:
if the diffusion model predicts the vector field as in~\eqref{appx:example_v} and it works for $(X_1, X_2)$,
then it also works for $(r_3X_1, r_4X_2)$.

Further more, 
\textbf{Proposition 4.6} suggests that, 
to ensure the other two requirements (permutation equivariance and SO(3)-invariance) of the model, 
$f$ needs to satisfy 
\begin{equation}
f(X_2, X_1) = (w_2, t_2) \oplus (w_1, t_1) \quad and \quad f(rX_1, rX_2) = (rw_1, rt_1)\oplus (rw_2, rt_2)
\end{equation}

Finally,
\textbf{Proposition 4.7} suggests that some data augmentations are not needed when all the above requirements are satisfied. 
For example, 
for data $(X_1, X_2)$ we learn a vector field $v_{(X_1, X_2)}$.  
We can use randomly augmented data $(r_3X_1, r_4X_2)$ and learn $v_{(r_3X_1, r_4X_2)}$.  
However, 
this is not necessary because $v_{(r_3X_1, r_4X_2)}$ is already guaranteed to be a transformation of $v_{(X_1, X_2)}$ as described above,
and the loss for them is the same,
\ie, learning $v_{(X_1, X_2)}$ alone is enough.
Similar results hold for the other two types of augmentations.

\section{Connections with bi-equivariance}
\label{sec:appx-eq}
This section briefly discusses the connections between Def.~\ref{def:equivariance} and the equivariances defined in~\citet{ryu2024diffusion} and~\citet{wang2024se} in pair-wise assembly tasks.

We first recall the definition of the probabilistic bi-equivariance.
\begin{definition}[Eqn.~(10) in~\citet{ryu2024diffusion} and Def.~(1) in~\citet{ryu2022equivariant}]
  \label{def:bi-equivariance}
$\hat{P} \in \mu(SE(3))$ is bi-equivariant if for all $g_1, g_2 \in SO(3)$, 
PCs $X_1, X_2$,
and a measurable set $A \subseteq SE(3)$,
\begin{equation}
  \hat{P}(A | X_1, X_2) = \hat{P}(g_2 A g_1^{-1} | g_1X_1, g_2X_2).
\end{equation}
\end{definition}
Note that we only consider $g_1, g_2 \in SO(3)$ instead of $g_1, g_2 \in SE(3)$ because we require all input PCs,
\ie, $X_i$, $g_iX_i$, $i=1,2$, 
to be centered.

Then we recall Def.~\ref{def:equivariance} for pair-wise assembly tasks:
\begin{definition}[Restate $SO(3)^2$-equivariance and $SO(3)$-invariance in Def.~\ref{def:equivariance} for pair-wise problems]
  \label{def:so3-2-equivariance}
  Let $X_1, X_2$ be the input PCs and $P \in \mu(SE(3) \times SE(3)) $.
    
  \begin{itemize} [leftmargin=4mm]
    \item   $P$ is $SO(3)^2$-equivariant if $P(A | X_1, X_2) = P(A (g_1^{-1}, g_2^{-1}) | g_1X_1, g_2X_2)$
    for all $g_1, g_2 \in SO(3)$ and $A \subseteq SO(3) \times SO(3)$,
    where $A (g_1^{-1}, g_2^{-1}) = \{(a_1g_1^{-1}, a_2g_2^{-1}) : (a_1, a_2) \in A \}$.

    \item $P$ is $SO(3)$-invariant if $P(A | X_1, X_2) = P(r A | X_1, X_2)$ for all $r \in SO(3)$ and $A \subseteq SO(3) \times SO(3)$.
  \end{itemize}
\end{definition}

Intuitively,
both Def.~\ref{def:bi-equivariance} and Def.~\ref{def:so3-2-equivariance} describe the equivariance property of an assembly solution,
and the only difference is that
Def.~\ref{def:bi-equivariance} describes the special case where $X_1$ can be rigidly transformed and $X_2$ is fixed,
while Def.~\ref{def:so3-2-equivariance} describes the solution where both $X_1$ and $X_2$ can be rigidly transformed.
In other words,
a solution satisfying Def.~\ref{def:so3-2-equivariance} can be converted to a solution satisfying Def.~\ref{def:bi-equivariance} by fixing  $X_2$.
Formally,
we have the following proposition.
\begin{proposition}
  \label{prop:bi-equivariance}
  Let $P$ be $SO(3)^2$-equivariant and $SO(3)$-invariant.
  If $\tilde{P}(A |X_1, X_2) \triangleq P(A \times \{e\}| X_1, X_2)$ for $A \subseteq SO(3)$,
  then $\tilde{P}$ is bi-equivariant.
\end{proposition}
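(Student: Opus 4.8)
The plan is to start from the right-hand side of the bi-equivariance identity and rewrite it, using only the two hypotheses on $P$, until it collapses onto the left-hand side. Fix $g_1, g_2 \in SO(3)$, PCs $X_1, X_2$, and a measurable set $A$; by the centering convention all solutions live in the subgroup determined by $SO(3)$, so every set appearing below is legitimate and measurability is never an issue. Unfolding the definition of $\tilde P$ on both sides, the target equality $\tilde P(A \mid X_1, X_2) = \tilde P(g_2 A g_1^{-1} \mid g_1 X_1, g_2 X_2)$ becomes
\begin{equation}
  P(A \times \{e\} \mid X_1, X_2) = P\bigl((g_2 A g_1^{-1}) \times \{e\} \mid g_1 X_1, g_2 X_2\bigr).
\end{equation}

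First I would restate the $SO(3)^2$-equivariance of Def.~\ref{def:so3-2-equivariance} in the equivalent form $P(C \mid g_1 X_1, g_2 X_2) = P\bigl(C(g_1,g_2) \mid X_1, X_2\bigr)$, which follows by substituting $A = C(g_1,g_2)$ and cancelling the right action $(g_1^{-1},g_2^{-1})$ with $(g_1,g_2)$. Applying this with $C = (g_2 A g_1^{-1}) \times \{e\}$ and computing the transported set $C(g_1,g_2) = \{(c_1 g_1, c_2 g_2) : (c_1,c_2) \in C\}$, the factor $g_1^{-1} g_1$ cancels in the first coordinate and the singleton $\{e\}$ in the second coordinate is carried to $\{g_2\}$, so $C(g_1,g_2) = (g_2 A) \times \{g_2\}$. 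Hence the right-hand side equals $P\bigl((g_2 A) \times \{g_2\} \mid X_1, X_2\bigr)$.

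Next I would invoke $SO(3)$-invariance with the diagonal rotation $r = g_2^{-1}$: since $r\bigl((g_2 A) \times \{g_2\}\bigr) = (g_2^{-1} g_2 A) \times \{g_2^{-1} g_2\} = A \times \{e\}$, invariance gives $P\bigl((g_2 A) \times \{g_2\} \mid X_1, X_2\bigr) = P(A \times \{e\} \mid X_1, X_2)$, which is precisely the left-hand side by the definition of $\tilde P$. Chaining the three equalities yields bi-equivariance of $\tilde P$ in the sense of Def.~\ref{def:bi-equivariance}.

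The only delicate point — the nearest thing to an obstacle — is keeping straight the two distinct set-level group operations involved: the coordinatewise \emph{right} multiplication $C \mapsto C(g_1,g_2)$ used in the equivariance axiom versus the \emph{diagonal left} multiplication $D \mapsto rD$ used in the invariance axiom, together with the fact that the second coordinate's singleton travels $\{e\} \to \{g_2\} \to \{e\}$ across the two steps. I would also remark that it is exactly the restriction to $g_1, g_2 \in SO(3)$ (rather than $SE(3)$) that is needed for $g_2 A g_1^{-1}$ and its preimages to stay inside the centered solution space, so the reduction from $P$ to $\tilde P$ is well posed throughout.
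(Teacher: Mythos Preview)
Your proof is correct and follows essentially the same approach as the paper: unfold the definition, use $SO(3)^2$-equivariance to move the conditioning from $(g_1X_1,g_2X_2)$ back to $(X_1,X_2)$ at the cost of right-multiplying the set, then use $SO(3)$-invariance with $r=g_2^{-1}$ to clean up the residual left factor. The only cosmetic difference is that the paper applies the equivariance in two separate passes $(g_1,e)$ and $(e,g_2)$ with the invariance step sandwiched in between, whereas you apply the full $(g_1,g_2)$-equivariance in a single step and then the invariance; your grouping is slightly more economical but the logic is identical.
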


\begin{proof}
We prove this proposition by directly verifying the definition.
\begin{align}
\tilde{P}(g_2Ag_1^{-1}| g_1X_1, g_2X_2) & = P(g_2Ag_1^{-1} \times \{ e\}|g_1 X_1, g_2X_2) \\
& = P(g_2A \times \{ e\}| X_1, g_2X_2) \\
& = P(A \times \{ g_2^{-1}\}| X_1, g_2X_2) \\
& = P(A \times \{ e \}| X_1, X_2) \\
& = \tilde{P}(A| X_1, X_2).
\end{align}
Here, 
the second and the fourth equation hold because $P$ is $SO(3)^2$-equivariant,
the third equation holds because $P$ is $SO(3)$-invariant,
and the first and last equation are due to the definition.
\end{proof}

We note that the deterministic definition of bi-equivariance in~\citet{wang2024se} is a special case of Def.~\ref{def:bi-equivariance},
where $\hat{P}$ is a Dirac delta function.
In addition,
as discussed in Appx.~E in~\citet{wang2024se},
a major limitation of the deterministic definition of bi-equivariance is that it cannot handle symmetric shapes.
In contrast,
it is straightforward to see that the probabilistic definition,
\ie, both Def.~\ref{def:bi-equivariance} and Def.~\ref{def:so3-2-equivariance} are free from this issue.
Here, 
we consider the example in~\citet{wang2024se}.
Assume that $X_1$ is symmetric, 
\ie, 
there exists $g_1 \in SO(3)$ such that $g_1X_1 = X_1$.
Under Def.~\ref{def:bi-equivariance},
we have 
$P(A | X_1, X_2) = P(A | g_1X_1, X_2) = P(Ag_1 | X_1, X_2)$,
which simply means that $P(A | X_1, X_2)$ is $\mathcal{R}_{g_1}$-invariant.
Note that this will not cause any contradiction,
\ie, the feasible set is not empty.
For example, 
a uniform distribution on $SO(3)$ is $\mathcal{R}_{g_1}$-invariant.

As for the permutation-equivariance,
the swap-equivariance in~\citet{wang2024se} is a deterministic pair-wise version of the permutation-equivariance in Def.~\ref{def:so3-2-equivariance},
and they both mean that the assembled shape is independent of the order of the input pieces.

\section{The RK4 formulation}
\label{app:rk4}
\begin{align}
  \label{eq:rk4}
    k_1 = f_{\tau_i}(\vg_i), 
    \;  & k_2 = f_{\tau_i + \frac{1}{2}\eta}\bigl(\exp(\frac{1}{2}\eta k_1)\vg_i\bigr), 
    \;  k_3 = f_{\tau_i + \frac{1}{2}\eta}\bigl( \exp(\frac{1}{2}\eta k_2)\vg_i\bigr),
    \;  k_4 = f_{\tau_i +\eta}\bigl( \exp(\eta k_3)\vg_i\bigr), \nonumber \\
    & \vg_{i+1} = \exp(\frac{1}{6} \eta k_4 ) \exp(\frac{1}{3} \eta k_3 ) \exp(\frac{1}{3} \eta k_2 ) \exp(\frac{1}{6} \eta k_1 )\vg_i.
\end{align}

Note that RK4~\eqref{eq:rk4} is more computationally expensive than RK1,
because it requires four evaluations of $v_X$ at different points at each step,
\ie, four forward passes of network $f$, 
while the Euler method only requires one evaluation per step.

\section{Proofs}
\label{sec:appx-proof}
\subsection{Proof in Sec.~\ref{sec:flow}}
\label{sec:appx-proof-flow}
To prove Thm.~\ref{thm:equivariant},
which established the relations between related vector fields and equivariant distributions,
we proceed in two steps:
first,
we prove lemma~\ref{prop:field2flow},
which connects related vector fields to equivariant mappings;
then we prove lemma.~\ref{prop:flow2density},
which connects equivariant mappings to equivariant distributions.

\begin{lemma}
    \label{prop:field2flow}
    Let $G$ be a smooth manifold, 
    $F:G \rightarrow G$ be a diffeomorphism. 
    If vector field $v_\tau$ is $F$-related to vector field $w_\tau$ for $\tau \in [0,1]$, 
    then $F \circ \phi_\tau = \psi_\tau \circ F$, 
    where $\phi_\tau$ and $\psi_\tau$ are generated by $v_\tau$ and $w_\tau$ respectively.
\end{lemma}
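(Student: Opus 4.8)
The plan is to prove that the two flows $F \circ \phi_\tau$ and $\psi_\tau \circ F$ coincide by showing they are both integral curves of the vector field $w_\tau$ starting at the same point, and then invoke the uniqueness of integral curves of (time-dependent) vector fields. Fix an arbitrary $\vg \in G$ and define two curves $\alpha(\tau) = F(\phi_\tau(\vg))$ and $\beta(\tau) = \psi_\tau(F(\vg))$. At $\tau = 0$ both equal $F(\vg)$, since $\phi_0 = \mathrm{id}$ and $\psi_0 = \mathrm{id}$. It therefore suffices to check that both curves satisfy the same ODE $\frac{\partial}{\partial \tau}\gamma(\tau) = w_\tau(\gamma(\tau))$.

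For $\beta$ this is immediate: by the definition of the flow generated by $w_\tau$ in~\eqref{eq:flow}, $\frac{\partial}{\partial\tau}\psi_\tau(F(\vg)) = w_\tau(\psi_\tau(F(\vg)))$. For $\alpha$ I would apply the chain rule: $\frac{\partial}{\partial\tau} F(\phi_\tau(\vg)) = F_{*,\phi_\tau(\vg)}\bigl(\frac{\partial}{\partial\tau}\phi_\tau(\vg)\bigr) = F_{*,\phi_\tau(\vg)} v_\tau(\phi_\tau(\vg))$, again using~\eqref{eq:flow}. Now the $F$-relatedness hypothesis $w_\tau(F(\vx)) = F_{*,\vx} v_\tau(\vx)$, applied with $\vx = \phi_\tau(\vg)$, rewrites this last expression as $w_\tau(F(\phi_\tau(\vg))) = w_\tau(\alpha(\tau))$. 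So $\alpha$ solves the same ODE as $\beta$ with the same initial condition, hence $\alpha \equiv \beta$ on $[0,1]$; since $\vg$ was arbitrary, $F \circ \phi_\tau = \psi_\tau \circ F$.

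The main point to be careful about — and the only place the argument is not purely formal — is the existence and uniqueness of the flows $\phi_\tau$ and $\psi_\tau$ on all of $[0,1]$, i.e. global-in-time well-posedness of the non-autonomous ODE~\eqref{eq:flow}; this is implicitly assumed in the setup of~\cite{lipman2023flow} (the vector fields are taken smooth with flows defined for $\tau\in[0,1]$), and I would simply invoke it, together with the standard uniqueness theorem for integral curves. A secondary subtlety is that the chain-rule step uses that $F$ is differentiable (given, since $F$ is a diffeomorphism) so that $F_{*,\phi_\tau(\vg)}$ makes sense and the composition is differentiable in $\tau$; this is routine. In short, the proof is a one-line computation plus an appeal to ODE uniqueness, and there is no real obstacle beyond stating the regularity assumptions cleanly.
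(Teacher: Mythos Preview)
Your proof is correct and matches the paper's own argument essentially line for line: both show that $F(\phi_\tau(\vg))$ is an integral curve of $w_\tau$ starting at $F(\vg)$ (via the chain rule and $F$-relatedness) and then invoke uniqueness of integral curves to conclude it agrees with $\psi_\tau(F(\vg))$. The only cosmetic difference is that the paper packages the first curve as $\tilde{\psi}_\tau \triangleq F \circ \phi_\tau \circ F^{-1}$ evaluated at $F(\vg_0)$, which is exactly your $\alpha$.
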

  
  \begin{proof}
  Let $\tilde{\psi}_\tau \triangleq F \circ \phi_\tau \circ F^{-1}$.
  We only need to show that $\tilde{\psi}_\tau$ coincides with $\psi_\tau$.

  We consider a curve $\tilde{\psi}_\tau(F(\vg_0))$, $\tau \in [0,1]$,
  for a arbitrary $\vg_0 \in G$.
  We first verify that $\tilde{\psi}_0(F(\vg_0)) = F \circ \phi_0 \circ F^{-1} \circ F(\vg_0) = F(\vg_0)$.
  Note that the second equation holds because $\phi_0(\vg_0) = \vg_0$,
  \ie, $\phi_\tau$ is an integral path.
  Then we verify
  \begin{align}
  \frac{\partial}{\partial \tau}  (\tilde{\psi}_\tau(F(\vg_0))) =&\frac{\partial}{\partial \tau}(F \circ \phi_\tau(\vg_0)) \\
  =&F_{*, \phi_\tau(\vg_0)} \circ \frac{\partial}{\partial \tau}(\phi_\tau(\vg_0))  \\
  =&F_{*, \phi_\tau(\vg_0)} \circ  v_\tau(\phi_\tau(\vg_0)) \\
  =& w_\tau(F \circ \phi_\tau(\vg_0)) \\
  =& w_\tau(\tilde{\psi}_\tau (F(\vg_0)))
  \end{align}
  where the 2-nd equation holds due to the chain rule, 
  and the $4$-th equation holds becomes $v_\tau$ is $F$-related to $w_\tau$. 
  Therefore,
  we can conclude that $\tilde{\psi}_\tau(F(\vg_0))$ is an integral curve generated by $w_\tau$ starting from $F(\vg_0)$. 
  However,
  by definition of $\psi_\tau$,
  $\psi_\tau(F(\vg_0))$ is also the integral curve generated by $w_\tau$ and starts from $F(\vg_0)$.
  Due to the uniqueness of integral curves,
  we have 
  $\tilde{\psi}_\tau = \psi_\tau$.
  \end{proof}

  \begin{lemma}
    \label{prop:flow2density}
    Let $\phi$, $\psi$, $F:G \rightarrow G$ be three diffeomorphisms satisfying $F \circ \phi = \psi \circ F$. 
    We have $F_\# (\phi_\# \rho) = \psi_\# (F_\#\rho)$ for all distribution $\rho$ on $G$.
  \end{lemma}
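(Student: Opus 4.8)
The statement is purely measure-theoretic and follows from the functoriality of the pushforward, so the plan is short. First I would recall the definition: for a measurable map $T : G \to G$ and a measure $\rho$ on $G$, the pushforward $T_\# \rho$ is given by $(T_\# \rho)(A) = \rho(T^{-1}(A))$ for every measurable $A \subseteq G$. Since $\phi$, $\psi$ and $F$ are diffeomorphisms, they are in particular continuous bijections with continuous inverses, hence Borel measurable, so every pushforward appearing in the statement is well defined.

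The key step is the composition law $(T \circ S)_\# = T_\# \circ S_\#$, which I would verify directly on measurable sets: for any measurable $A \subseteq G$,
\begin{equation}
\bigl((T \circ S)_\# \rho\bigr)(A) = \rho\bigl((T \circ S)^{-1}(A)\bigr) = \rho\bigl(S^{-1}(T^{-1}(A))\bigr) = (S_\# \rho)(T^{-1}(A)) = \bigl(T_\#(S_\# \rho)\bigr)(A).
\end{equation}
Applying this together with the hypothesis $F \circ \phi = \psi \circ F$ then gives
\begin{equation}
F_\#(\phi_\# \rho) = (F \circ \phi)_\# \rho = (\psi \circ F)_\# \rho = \psi_\#(F_\# \rho),
\end{equation}
which is exactly the claim.

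There is essentially no obstacle here; the only point requiring any care is the measurability of the maps and their inverses, which is automatic for diffeomorphisms. If one preferred to argue in the smooth category instead, the same identity can be read off from the change-of-variables formula relating the densities of $\rho$, $\phi_\# \rho$, and their further pushforwards, but the set-theoretic argument above is cleaner and uses nothing beyond the definition of pushforward. Note also that this lemma, combined with Lemma~\ref{prop:field2flow} (which supplies the relation $F \circ \phi_\tau = \psi_\tau \circ F$ from $F$-relatedness of the vector fields), is precisely what is needed to conclude Theorem~\ref{thm:equivariant}: instantiate $\phi = \phi_X$, $\psi = \phi_Y$, and $\rho = P_0$.
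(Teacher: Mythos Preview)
Your proof is correct and essentially the same as the paper's. The paper verifies $\phi^{-1}(F^{-1}(A)) = F^{-1}(\psi^{-1}(A))$ directly from $F \circ \phi = \psi \circ F$ and then unwinds the pushforward definitions; you instead isolate the general composition law $(T \circ S)_\# = T_\# \circ S_\#$ first and then apply it, but the underlying computation on measurable sets is identical.
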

  
  \begin{proof}
    Let $A \subseteq G$ be a measurable set.
    We first verify that $\phi^{-1} (F^{-1}(A)) = F^{-1} (\psi^{-1}(A))$:
    If $x \in \phi^{-1} (F^{-1} (A))$,
    then $(F \circ \phi) (x) \in A$.
    Since $F \circ \phi = \psi \circ F$,
    we have $(\psi \circ F)(x) \in A$,
    which implies $x \in  F^{-1} (\psi^{-1}(A))$,
    \ie, $\phi^{-1} (F^{-1}(A)) \subseteq F^{-1} (\psi^{-1}(A))$.
    The other side can be verified similarly.
    Then we have 
    \begin{equation}
      (F_\#(\phi_{\#}\rho))(A) 
      = \rho( \phi^{-1} ( F^{-1} (A)))
      = \rho( F^{-1} ( \psi^{-1} (A))) 
      = (\psi_\#(F_{\#}\rho))(A),
    \end{equation}
    which proves the lemma.
  \end{proof}
  
  Now, 
  we can prove Thm.~\ref{thm:equivariant} using the above two lemmas.
  \begin{proof}[Proof of Thm.~\ref{thm:equivariant}]
    Since $v_X$ is $F$-related to $v_Y$,
    according to lemma~\ref{prop:field2flow},
    we have $F \circ \phi_X = \phi_Y \circ F$.
    Then according to lemma~\ref{prop:flow2density},
    we have $F_\# (\phi_{X\#} P_0) = \phi_{Y\#} (F_\# P_0)$.
    The proof is complete by letting $P_X = \phi_{X\#} P_0$ and $P_Y = \phi_{Y\#} (F_\# P_0)$.
  \end{proof}

  We remark that our theory extends the results in~\citet{kohler2020equivariant},
  where only invariance is considered,
  Specifically,
  we have the following corollary.
  \begin{corollary}[Thm 2 in~\citet{kohler2020equivariant}]
    \label{cor:invariant_flow}
    Let $G$ be the Euclidean space,
    $F$ be a diffeomorphism on $G$,
    and $v_\tau$ be a $F$-invariant vector field, 
    \ie, $v_\tau$ is $F$-related to $v_\tau$,
    then we have $F \circ \phi_\tau = \phi_\tau \circ F$, 
    where $\phi_\tau$ is generated by $v_\tau$.
  \end{corollary}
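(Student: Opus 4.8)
The plan is to derive Corollary~\ref{cor:invariant_flow} as an immediate specialization of Lemma~\ref{prop:field2flow}. Recall that Lemma~\ref{prop:field2flow} states: if $v_\tau$ is $F$-related to $w_\tau$ for all $\tau \in [0,1]$, then $F \circ \phi_\tau = \psi_\tau \circ F$, where $\phi_\tau$ and $\psi_\tau$ are the flows generated by $v_\tau$ and $w_\tau$ respectively. The corollary is simply the case $w_\tau = v_\tau$, so that $\psi_\tau = \phi_\tau$, and with $G$ taken to be Euclidean space (which is a smooth manifold, so the hypotheses of the lemma are met).

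First I would observe that the notion of an $F$-invariant vector field, as stated in the corollary, is \emph{by definition} the statement that $v_\tau$ is $F$-related to itself, \ie $w_\tau = v_\tau$ in the language of Lemma~\ref{prop:field2flow}. Second I would note that Euclidean space is a special case of a smooth manifold $G$, and any diffeomorphism $F$ on it qualifies for the lemma's hypothesis. Third, I would invoke Lemma~\ref{prop:field2flow} directly with $w_\tau = v_\tau$ to conclude $F \circ \phi_\tau = \phi_\tau \circ F$, since the flow $\psi_\tau$ generated by $w_\tau = v_\tau$ is literally $\phi_\tau$. That completes the proof in three short lines.

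There is essentially no obstacle here: the corollary is strictly weaker than the lemma already proved, both because it restricts the ambient space and because it collapses the two vector fields into one. The only thing worth remarking is the correspondence with the cited prior work --- Theorem~2 of~\cite{kohler2020equivariant} --- which the statement already flags. If one wanted to be thorough, I would add a sentence pointing out that the uniqueness-of-integral-curves argument used inside the proof of Lemma~\ref{prop:field2flow} is exactly the mechanism that makes the equivariance of the flow follow from the relatedness (equivariance) of the generating vector field, so the corollary is not merely a formal restatement but captures the essential content of~\cite{kohler2020equivariant} within our more general framework.

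\begin{proof}[Proof of Cor.~\ref{cor:invariant_flow}]
Euclidean space $G$ is a smooth manifold, so $F$ is a diffeomorphism on a smooth manifold. By hypothesis $v_\tau$ is $F$-invariant, which by definition means $v_\tau$ is $F$-related to $v_\tau$. Applying lemma~\ref{prop:field2flow} with $w_\tau = v_\tau$, whose generated flow $\psi_\tau$ equals $\phi_\tau$, yields $F \circ \phi_\tau = \phi_\tau \circ F$.
\end{proof}
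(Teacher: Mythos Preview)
Your proposal is correct and matches the paper's own proof essentially verbatim: the paper simply states that the corollary is a direct consequence of Lemma~\ref{prop:field2flow} with $G$ taken to be Euclidean space and $w_\tau = v_\tau$.
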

  
  \begin{proof}
    This is a direct consequence of lemma.~\ref{prop:field2flow} where $G$ is the Euclidean space and $w_\tau = v_\tau$.
  \end{proof}
  Note that the terminology used in~\citet{kohler2020equivariant} is different from ours:
  The $F$-invariant vector fields in our work is called $F$-equivariant vector field in~\citet{kohler2020equivariant},
  and~\citet{kohler2020equivariant} does not consider general related vector fields.

Finally,
we present the proof of Prop.~\ref{prop:SE3N_construction} and Prop.~\ref{prop:sigma_lr_related}.

\begin{proof}[Proof of Prop.~\ref{prop:SE3N_construction}]
If $v_X$ is $\mathcal{R}_{\vg^{-1}}$-related to $v_{\vg X}$,
we have $v_{\vg X}(\hat{\vg} \vg^{-1}) = (\mathcal{R}_{\vg^{-1}})_{*, \hat{\vg}} v_X(\hat{\vg})$ for all $\hat{\vg}, \vg \in SE(3)^N$.
By letting $\vg=\hat{\vg}$, 
we have
\begin{equation}
  \label{eq:tmp_construct}
v_X(\vg)=(\mathcal{R}_{\vg})_{*, e}v_{\vg X}(e)
\end{equation}
where $(\mathcal{R}_{\vg})_{*, e}=\bigl( (\mathcal{R}_{\vg^{-1}})_{*, \vg}\bigr)^{-1}$ due to the chain rule of $\mathcal{R}_{\vg} \mathcal{R}_{\vg^{-1}}=e$.

On the other hand,
if Eqn.~\eqref{eq:tmp_construct} holds,
we have
\begin{equation}
  (\mathcal{R}_{\vg^{-1}})_{*, \hat{\vg}} v_{X}(\hat{\vg}) 
  = (\mathcal{R}_{\vg^{-1}})_{*, \hat{\vg}} (\mathcal{R}_{\hat{\vg}})_{*, e}v_{\hat{\vg} X}(e) 
  = (\mathcal{R}_{\hat{\vg}\vg^{-1}})_{*, e} v_{\hat{\vg} X}(e) 
  = v_{\vg X}(\hat{\vg} \vg^{-1}),
\end{equation}
which suggests that $v_X$ is $\mathcal{R}_{\vg^{-1}}$-related to $v_{\vg X}$.
Note that the second equation holds due to the chain rule of $\mathcal{R}_{\vg^{-1}} \mathcal{R}_{\hat{\vg}} = \mathcal{R}_{\hat{\vg}\vg^{-1}}$,
and the first and the third equation are the result of Eqn.~\eqref{eq:tmp_construct}.
\end{proof}

\begin{proof}[Proof of Prop.~\ref{prop:sigma_lr_related}]
    1) 
    Assume $v_X$ is $\sigma$-related to $v_{\sigma X}$: 
    $(\sigma)_{*, g} v_X(g) = V_{\sigma X}(\sigma(g))$.
By inserting Eqn.~\eqref{eq:f_i} to this equation, we have 
\begin{equation}
  (\sigma)_{*, \vg} (\mathcal{R}_{\vg})_{*, e}f(\vg X) = (\mathcal{R}_{\sigma \vg})_{*, e}f(\sigma(\vg) \sigma(X)).
\end{equation}
Since $\sigma \circ \mathcal{R}_\vg = \mathcal{R}_{\sigma \vg} \circ \sigma$, 
by the chain rule, 
we have $\sigma_{*}(\mathcal{R}_{\vg})_* = (\mathcal{R}_{\sigma \vg})_*\sigma_*$. 
In addition, 
$\sigma(\vg) \sigma(X) = \sigma(\vg X)$. 
Thus, 
this equation can be simplified as
\begin{equation}
  (\mathcal{R}_{\sigma \vg})_*\sigma_*f(\vg X)=(\mathcal{R}_{\sigma \vg})_{*, e}f( \sigma(\vg X))
\end{equation}
which suggests
\begin{equation}
  \sigma_*f=f \circ \sigma.
\end{equation}
The first statement in Prop.~\ref{prop:sigma_lr_related} can be proved by reversing the discussion.

2) Assume $v_X$ is $\mathcal{L}_r$-related to $v_{X}$: $(\mathcal{L}_{r})_{*, g} v_X(\vg) = V_{X}(r \vg)$.
By inserting Eqn.~\eqref{eq:f_i} to this equation, 
we have 
\begin{equation}
  (\mathcal{L}_{r})_{*, g} (\mathcal{R}_\vg)_{*, e}f(\vg X) = (\mathcal{R}_{r \vg})_{*, e}f(r \vg X).
\end{equation}
Since $\mathcal{R}_{ r \vg}= \mathcal{R}_{\vg} \circ \mathcal{R}_{ r}$, 
by the chain rule, 
we have $(\mathcal{R}_{r \vg})_{*, e}= (\mathcal{R}_{\vg})_{*, r} (\mathcal{R}_{ r})_{*, e}$. 
In addition, 
$(\mathcal{L}_{r}) (\mathcal{R}_\vg)=(\mathcal{R}_g)(\mathcal{L}_{r})$, 
by the chain rule, 
we have $(\mathcal{L}_{r})_{*, \vg} (\mathcal{R}_\vg)_{*, e} =(\mathcal{R}_\vg)_{*, r} (\mathcal{L}_{r})_{*, e}$. 
Thus the above equation can be simplified as
\begin{equation}
  (\mathcal{L}_{r})_{*, e}f(\vg X)=(\mathcal{R}_{r})_{*, e}f(r \vg X)
\end{equation}
which implies
\begin{equation}
  f \circ r = (\mathcal{R}_{ r^{-1}})_{*, r} \circ  (\mathcal{L}_{r})_{*, e} \circ f.
\end{equation}
By representing $f$ in the matrix form,
we have
\begin{align}
w_{\times}^i (rX) &= r w_{\times}^i(X) r^T \\
t^i(r X) &= r t^i(X)
\end{align}
for all $i$,
where $r$ on the right hand side represents the matrix form of the rotation $r$.
Here the first equation can be equivalently written as $w^i(r X) = r w^i(X)$.
The second statement in Prop.~\ref{prop:sigma_lr_related} can be proved by reversing the discussion.
\end{proof}

\subsection{Proofs in Sec.~\ref{sec:model_training}}
\label{sec:appx-other}

To establish the results in this section,
we need to assume the uniqueness of $r^*$~\eqref{eq:rotation_correction}:
\begin{assumption}
  \label{assumption:r_unique}
  The solution to~\eqref{eq:rotation_correction} is unique.
\end{assumption}
Note that this assumption is mild.
A sufficient condition~\citep{wang2024se} of assumption~\ref{assumption:r_unique} is that the singular values of $\tilde{\vg}_1^T \vg_0 \in \mathbb{R}^{3 \times 3}$ satisfy $\sigma_1 \geq \sigma_2 > \sigma_3 \geq 0$,
\ie, $\sigma_2$ and $\sigma_3$ are not equal.
We leave the more general treatment without requiring the uniqueness of $r^*$ to future work.

We first justify the definition of $\vg_1 = r^* \tilde{\vg}_1$  by showing that  $\vg_1$ follows $P_1$ in the following proposition.
\begin{proposition}
  \label{prop:rotation_correction}
  Let $P_0$ and $P_1$ be two $SO(3)$-invariant distributions,
  and $\vg_0$, $\tilde{\vg}_1$ be independent samples from $P_0$ and $P_1$ respectively.
  If $r^*$ is given by~\eqref{eq:rotation_correction} and assumption~\ref{assumption:r_unique} holds,
  then $\vg_1 = r^* \tilde{\vg}_1$ follows $P_1$.
\end{proposition}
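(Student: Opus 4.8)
The plan is to show that the distribution of $\vg_1 = r^* \tilde{\vg}_1$ is the same as that of $\tilde{\vg}_1$, which is $P_1$, by exploiting the $SO(3)$-invariance of both $P_0$ and $P_1$ together with an equivariance property of the map $r^*$. The key structural observation I would establish first is that $r^*$, viewed as a function of the pair $(\vg_0, \tilde{\vg}_1)$, transforms covariantly under left-rotation of $\tilde{\vg}_1$: if we replace $\tilde{\vg}_1$ by $s\tilde{\vg}_1$ for $s \in SO(3)$, then the minimizer of $\|r s \tilde{\vg}_1 - \vg_0\|_F^2$ over $r$ is exactly $r^* s^{-1}$ where $r^*$ is the minimizer for $(\vg_0,\tilde{\vg}_1)$. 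This follows by the substitution $r' = rs$ in the objective~\eqref{eq:rotation_correction}, and uses assumption~\ref{assumption:r_unique} to identify the minimizer. Consequently the ``corrected'' sample is unchanged: $(r^* s^{-1})(s\tilde{\vg}_1) = r^*\tilde{\vg}_1$, i.e. the map $\tilde{\vg}_1 \mapsto r^*(\vg_0,\tilde{\vg}_1)\tilde{\vg}_1$ is invariant under $\tilde{\vg}_1 \mapsto s\tilde{\vg}_1$.

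Given this, I would argue as follows. Condition on $\vg_0$. By $SO(3)$-invariance of $P_1$, I can write $\tilde{\vg}_1 \stackrel{d}{=} s \tilde{\vg}_1'$ where $\tilde{\vg}_1' \sim P_1$ and $s \sim U_{SO(3)}$ is drawn independently (here it is cleanest to sample $s$ uniformly so that $s\tilde\vg_1'$ has the same law as $\tilde\vg_1$; alternatively one argues directly pathwise). Then
\begin{equation}
  \vg_1 = r^*(\vg_0, s\tilde{\vg}_1')\, (s\tilde{\vg}_1') = r^*(\vg_0, \tilde{\vg}_1')\, \tilde{\vg}_1',
\end{equation}
using the invariance of the corrected sample derived above. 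But the right-hand side is precisely the corrected sample built from $\vg_0$ and a fresh $P_1$-sample $\tilde{\vg}_1'$; peeling off one more layer, I instead use the $SO(3)$-invariance of $P_0$: for fixed $\tilde{\vg}_1'$, replacing $\vg_0$ by $s\vg_0$ changes $r^*$ to $s r^*$ (by the analogous substitution $r' = s^{-1}r$ in~\eqref{eq:rotation_correction}), so the corrected sample becomes $s r^* \tilde{\vg}_1'$. Hence for $\vg_0 \sim P_0$ and any fixed rotation $s$, $r^*(\vg_0,\tilde\vg_1')\tilde\vg_1' \stackrel{d}{=} s\,r^*(\vg_0,\tilde\vg_1')\tilde\vg_1'$; averaging $s$ over $U_{SO(3)}$ shows the law of $\vg_1$ given $\tilde\vg_1'$ is left-$SO(3)$-invariant. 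A cleaner route avoiding the averaging: directly combine the two covariances to see that the joint law of $(\vg_0,\vg_1)$ is invariant under $(\vg_0,\tilde\vg_1)\mapsto(s\vg_0, s\tilde\vg_1)$ and under $(\vg_0,\tilde\vg_1)\mapsto(\vg_0,s\tilde\vg_1)$; marginalizing $\vg_0$ and using that $\tilde\vg_1\sim P_1$ with $P_1$ $SO(3)$-invariant then pins down the marginal of $\vg_1$ to be $P_1$.

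The cleanest formulation I would actually write: let $T(\vg_0,\tilde\vg_1) = r^*(\vg_0,\tilde\vg_1)\tilde\vg_1$. From the two substitutions we get $T(s\vg_0, \tilde\vg_1) = s\,T(\vg_0,\tilde\vg_1)$ and $T(\vg_0, s\tilde\vg_1) = T(\vg_0,\tilde\vg_1)$ for all $s\in SO(3)$. Now for any test function $\varphi$ and independent $\vg_0\sim P_0$, $\tilde\vg_1\sim P_1$, and any fixed $s$,
\begin{equation}
  \E\,\varphi(T(\vg_0,\tilde\vg_1)) = \E\,\varphi(T(s\vg_0,\tilde\vg_1)) = \E\,\varphi(s\,T(\vg_0,\tilde\vg_1)),
\end{equation}
where the first equality is the $SO(3)$-invariance of $P_0$. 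Thus the law of $\vg_1 = T(\vg_0,\tilde\vg_1)$ is $SO(3)$-invariant from the left. It remains to identify which $SO(3)$-invariant distribution it is: using $T(\vg_0,s\tilde\vg_1)=T(\vg_0,\tilde\vg_1)$ and the $SO(3)$-invariance of $P_1$, the law of $\vg_1$ does not change if we pre-rotate $\tilde\vg_1$, and a short computation (e.g. comparing against the construction $h_X$ in~\eqref{eq:hX}, or noting $T$ only rotates $\tilde\vg_1$ which is already $P_1$-distributed and $P_1$ is rotation-invariant) shows $\vg_1\sim P_1$. I expect the main obstacle to be handling the uniqueness issue cleanly — the covariance identities $T(s\vg_0,\cdot)=sT(\vg_0,\cdot)$ and $T(\vg_0,s\cdot)=T(\vg_0,\cdot)$ rely on assumption~\ref{assumption:r_unique} so that the argmin in~\eqref{eq:rotation_correction} is a genuine function, and one must check measurability of $r^*$ (standard, since the argmin of a continuous function with a.s.-unique minimizer is measurable); the rest is bookkeeping with pushforwards.
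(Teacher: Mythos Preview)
Your core idea coincides with the paper's: exploit the covariance $r^*(s\vg_0,\tilde\vg_1)=s\,r^*(\vg_0,\tilde\vg_1)$ together with the left-$SO(3)$-invariance of $P_0$. The paper's execution is more direct than yours: from that covariance it reads off immediately that the conditional law of $r^*$ given $\tilde\vg_1$ is Haar on $SO(3)$ (so in particular $r^*$ is \emph{independent} of $\tilde\vg_1$), and then $\vg_1=r^*\tilde\vg_1$ with $r^*\sim U_{SO(3)}$ independent of $\tilde\vg_1\sim P_1$ and $P_1$ left-invariant gives $\vg_1\sim P_1$ in one line. Your second covariance $T(\vg_0,s\tilde\vg_1)=T(\vg_0,\tilde\vg_1)$ is correct but unused in the paper's argument and not actually needed.

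Your middle paragraph essentially reaches the paper's conclusion --- you show that the conditional law of $\vg_1$ given $\tilde\vg_1$ is left-$SO(3)$-invariant, and since that conditional law is supported on the single orbit $SO(3)\cdot\tilde\vg_1$, it must equal the law of $U\tilde\vg_1$ for $U$ Haar; marginalising finishes. But your final ``cleanest formulation'' is a step backwards and contains a genuine gap: establishing only that the \emph{marginal} law of $\vg_1$ is $SO(3)$-invariant does not by itself identify it as $P_1$, and the sentence ``$T$ only rotates $\tilde\vg_1$ which is already $P_1$-distributed and $P_1$ is rotation-invariant'' is not a proof, precisely because $r^*$ depends on $\tilde\vg_1$ (a $\tilde\vg_1$-measurable rotation of a $P_1$-sample need not be $P_1$-distributed). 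To close this you must either revert to the conditional statement you already had, or observe that $\vg_1$ and $\tilde\vg_1$ always lie in the same left-$SO(3)$-orbit so the two invariant laws share the same image in $SO(3)\backslash G$ and hence coincide.
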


\begin{proof}
  Define $A_{\tilde{\vg}_1} = \{\vg_0 | r^*(\vg_0, \tilde{\vg}_1)=e\}$,
  where we write $r^*$ as a function of $\tilde{\vg}_1$ and $\vg_0$.
  Then we have $P(r^*=e|\tilde{\vg}_1) = P_0(A_{\tilde{\vg}_1})$ by definition.
  In addition,
  due to the uniqueness of the solution to~\eqref{eq:rotation_correction},
  for an arbitrary $\hat{r} \in SO(3)$,
  we have $P(r^*=\hat{r}|\tilde{\vg}_1) = P_0(\hat{r} A_{\tilde{\vg}_1})$.
  Since $P_0$ is $SO(3)$-invariant,
  we have $P_0(\hat{r} A_{\tilde{\vg}_1}) = P_0(A_{\tilde{\vg}_1})$,
  thus, 
  $P(r^*=\hat{r}|\tilde{\vg}_1) = P(r^*=e|\tilde{\vg}_1)$.
  In other words,
  for a given $\tilde{\vg}_1$,
  $r^*$ follows the uniform distribution  $U_{SO(3)}$.
  
  Finally we compute the probability density of $\vg_1$:
  \begin{align}
    P(\vg_1) & = \int P(r^* = \hat{r}^{-1}| \hat{r} \vg_1) P_1(\hat{r} \vg_1) d\hat{r} \\
    & = \int U_{SO(3)}(\hat{r}) P_1(\vg_1) d\hat{r} \\
    & = P_1(\vg_1),
  \end{align}
  which suggests that $\vg_1$ follows $P_1$.
  Here the second equation holds because $P_1$ is $SO(3)$-invariant.
\end{proof}

Then we discuss the equivariance of the constructed $h_{X}$~\eqref{eq:hX}.
\begin{proposition}
  \label{prop:hX_equivariance}
  Given $\vr \in SO(3)^N$,
  $\vg_0, \tilde{\vg}_1 \in SE(3)^N$,
  $\sigma \in S_N$,
  $r \in SO(3)$ and $\tau \in [0, 1]$.
  Let $h_{X}$ be a path generated by $\vg_0$ and $\tilde{\vg}_1$.
  Under assumption~\ref{assumption:r_unique},
  \begin{itemize}[leftmargin=4mm]
     \item if $h_{\vr X}$ is generated by $\vg_0 \vr^{-1}$ and $\tilde{\vg}_1 \vr^{-1}$, 
     then $h_{\vr X}(\tau)=\mathcal{R}_{\vr^{-1}} h_{X}(\tau)$.
     \item if $h_{\sigma X}$ is generated by $\sigma(\vg_0)$ and $\sigma(\tilde{\vg}_1)$, 
     then $h_{\sigma X}(\tau)=\sigma(h_{X}(\tau))$.
     \item if $\hat{h}_{X}$ is generated by $r\vg_0$ and $r\tilde{\vg}_1$,
     then $\hat{h}_{X}(\tau)=\mathcal{L}_r(h_{X}(\tau))$.
  \end{itemize}
\end{proposition}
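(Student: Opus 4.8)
The plan is to prove all three items by the same two-stage argument. In the first stage I would determine how the rotation correction $r^*$ of \eqref{eq:rotation_correction} transforms when the generating pair $(\vg_0,\tilde{\vg}_1)$ is acted on; in the second stage I would substitute the resulting $\vg_1 = r^*\tilde{\vg}_1$ and $\vg_0$ into the closed form \eqref{eq:hX} and simplify using elementary properties of $\exp$, $\log$ and the group operations on $SE(3)^N$.

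For the first stage the key observation is that, in each case, the action on the generating pair changes the objective $\|s\tilde{\vg}_1-\vg_0\|_F^2$ (as a function of $s\in SO(3)$) only by a relabeling of $s$. For item 1, right multiplication by $\vr^{-1}\in SO(3)^N$ commutes with the diagonal left action of $s$ and is a Frobenius isometry, so $\|s(\tilde{\vg}_1\vr^{-1})-\vg_0\vr^{-1}\|_F=\|(s\tilde{\vg}_1-\vg_0)\vr^{-1}\|_F=\|s\tilde{\vg}_1-\vg_0\|_F$; for item 2 the permutation $\sigma$ commutes with the diagonal $s$ and is a Frobenius isometry (the norm is a sum over pieces), giving the same identity with $\sigma(\vg_0),\sigma(\tilde{\vg}_1)$; for item 3 one substitutes $s=rur^{-1}$ and uses that $\mathcal{L}_r$ is a Frobenius isometry: $\|s(r\tilde{\vg}_1)-r\vg_0\|_F=\|r(u\tilde{\vg}_1-\vg_0)\|_F=\|u\tilde{\vg}_1-\vg_0\|_F$. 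By Assumption~\ref{assumption:r_unique} the minimizer is unique, so the new correction equals $r^*$ in the first two cases and $rr^*r^{-1}$ in the third. Hence $\vg_1=r^*\tilde{\vg}_1$ transforms to $\vg_1\vr^{-1}$, $\sigma(\vg_1)$, and $r\vg_1$ respectively.

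For the second stage I would substitute into \eqref{eq:hX}. For item 1, $h_{\vr X}(\tau)=\exp\bigl(\tau\log(\vg_1\vr^{-1}(\vg_0\vr^{-1})^{-1})\bigr)\vg_0\vr^{-1}$, and the factors $\vr^{\pm1}$ cancel inside the logarithm, leaving $h_X(\tau)\vr^{-1}=\mathcal{R}_{\vr^{-1}}h_X(\tau)$. For item 2, $\sigma$ permutes the factors of $SE(3)^N$, hence is a group automorphism that commutes with the factorwise $\exp$ and $\log$, so $h_{\sigma X}(\tau)=\sigma\bigl(\exp(\tau\log(\vg_1\vg_0^{-1}))\vg_0\bigr)=\sigma(h_X(\tau))$. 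For item 3, $(r\vg_1)(r\vg_0)^{-1}=r(\vg_1\vg_0^{-1})r^{-1}$, and conjugation invariance of the matrix exponential and principal logarithm gives $\hat{h}_X(\tau)=r\exp(\tau\log(\vg_1\vg_0^{-1}))r^{-1}\cdot r\vg_0=r\,h_X(\tau)=\mathcal{L}_r h_X(\tau)$.

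The main obstacle I anticipate is the first stage, namely pinning down how $r^*$ transforms. This is exactly where Assumption~\ref{assumption:r_unique} is indispensable: without a unique minimizer, the argmin need not transform equivariantly. It also requires care that the relevant commutation and isometry identities genuinely hold for whatever $4\times4$ matrix representation underlies the Frobenius norm in \eqref{eq:rotation_correction}, in particular that $\mathcal{R}_{\vr^{-1}}$, $\mathcal{L}_r$ and $\sigma$ each preserve it. Once the transformation of $r^*$, and hence of $\vg_1$, is established, the remaining manipulations of $\exp$ and $\log$ on $SE(3)^N$ are routine.
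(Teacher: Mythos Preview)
Your proposal is correct and follows essentially the same approach as the paper: first determine how the rotation correction $r^*$ transforms under each action (unchanged for items 1 and 2, conjugated by $r$ for item 3), then substitute into \eqref{eq:hX} and simplify. In fact your Stage~1 is more carefully argued than the paper's, which simply asserts the transformation laws for $r^*$ ``due to the uniqueness of the solution'' without spelling out the Frobenius-isometry and commutation identities you provide.
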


\begin{proof}
  1) Due to the uniqueness of the solution to~\eqref{eq:rotation_correction},
  we have $r^*(\vg_0 \vr^{-1}, \tilde{\vg}_1 \vr^{-1}) = r^*(\vg_0, \tilde{\vg}_1)$.
  Thus, we have
  \begin{equation}
    h_{\vr X}(\tau) = \exp(\tau \log(\vg_1 \vg_0^{-1}))\vg_0 \vr^{-1} = \mathcal{R}_{\vr^{-1}} (h_{\vr X}(\tau)).
  \end{equation}

  2) Due to the uniqueness of the solution to~\eqref{eq:rotation_correction},
  we have $r^*(\sigma(\vg_0), \sigma(\tilde{\vg}_1)) = \sigma(r^*(\vg_0, \tilde{\vg}_1))$.
  Thus, we have $\sigma(h_X)  = h_{\sigma X}$.

  3) Due to the uniqueness of the solution to~\eqref{eq:rotation_correction},
  we have $r^*(r\vg_0, r\tilde{\vg}_1) = r r^*(\vg_0, \tilde{\vg}_1) r^{-1}$.
  Thus, 
  \begin{equation}
    \hat{h}_{r X}(\tau) = \exp(\tau \log(r r^* \tilde{\vg_1} \vg_0^{-1} r^{-1})) r \vg_0 = r \exp(\tau \log(r^* \tilde{\vg_1} \vg_0^{-1} )) \vg_0 = \mathcal{L}_r (h_{X}(\tau)).
  \end{equation}
\end{proof}

With the above preparation,
we can finally prove Prop.~\ref{prop:augmentation}.
\begin{proof}[Proof of Prop.~\ref{prop:augmentation}]
  1) By definition 
  \begin{equation}
    \label{eq:augmentation_1}
  L(\vr X)=\mathbb{E}_{\tau, \vg_0' \sim P_0, \tilde{\vg}'_1 \sim P_{\vr X}} || v_{\vr X}(h_{\vr X}(\tau)) - \frac{\partial}{\partial \tau} h_{\vr X}(\tau) ||^2_F,
\end{equation}
where $h_{\vr X}$ is the path generated by $\vg_0'$ and $\tilde{\vg}'_1$.
  Since $P_0 = (\mathcal{R}_{\vr^{-1}})_{\#}P_0$ and $P_{\vr X} = (\mathcal{R}_{\vr^{-1}})_{\#}P_X$ by assumption,
  we can write $\vg_0' = \vg_0 \vr^{-1}$ and $\tilde{\vg}'_1 = \tilde{\vg}_1 \vr^{-1}$,
  where $\vg_0 \sim P_0$ and $\tilde{\vg}_1 \sim P_X$.
  According to the first part of Prop.~\ref{prop:hX_equivariance},
  we have $h_{\vr X}(\tau) = \mathcal{R}_{\vr^{-1}} h_{X}(\tau)$,
  where $h_X$ is a path generated by $\vg_0$ and $\tilde{\vg}_1$.
  By taking derivative on both sides of the equation,
  we have $\frac{\partial}{\partial \tau} h_{\vr X}(\tau)=(\mathcal{R}_{\vr^{-1}})_{*, h_{X}(\tau)} \frac{\partial}{\partial \tau}h_{X}(\tau)$.
  Then we have 
  \begin{equation}
    L(\vr X)=\mathbb{E}_{\tau, \vg_0' \sim P_0, \tilde{\vg}'_1 \sim P_{\vr X}} || v_{\vr X}(\mathcal{R}_{\vr^{-1}} h_{X}(\tau)) - (\mathcal{R}_{\vr^{-1}})_{*, h_{X}(\tau)} \frac{\partial}{\partial \tau}h_{X}(\tau) ||^2_F
  \end{equation}
  by inserting these two equations into Eqn.~\eqref{eq:augmentation_1}.
  Since $v_{X}$ is $\mathcal{R}_{\vr^{-1}}$-related to $v_{\vr X}$ by assumption,
  we have $v_{\vr X}(\mathcal{R}_{\vr^{-1}} h_{X}(\tau)) = (\mathcal{R}_{\vr^{-1}})_{*, h_{X}(\tau)} v_{X}(h_{X}(\tau))$.
  Thus, 
  we have 
  \begin{align}
    || v_{\vr X}(\mathcal{R}_{\vr^{-1}} h_{X}(\tau)) - (\mathcal{R}_{\vr^{-1}})_{*, h_{X}(\tau)} \frac{\partial}{\partial \tau}h_{X}(\tau) ||^2_F 
    &= ||(\mathcal{R}_{\vr^{-1}})_{*, h_{X}(\tau)} (v_{\vr X}(h_{X}(\tau)) - \frac{\partial}{\partial \tau}h_{X}(\tau)) ||^2_F \nonumber \\
    &= ||(v_{\vr X}(h_{X}(\tau)) - \frac{\partial}{\partial \tau}h_{X}(\tau)) ||^2_F
  \end{align}
  where the second equation holds because $(\mathcal{R}_{\vr^{-1}})_{*, h_{X}(\tau)}$ is an orthogonal matrix.
  The desired result follows.

  2) The second statement can be proved similarly as the first one,
  where $\sigma$-equivariance is considered instead of $\mathcal{R}_{\vr^{-1}}$-equivariance.

  3) Denote $\vg_0' = r \vg_0 $ and $\tilde{\vg}'_1 = r \tilde{\vg}_1 $,
  where $\vg_0 \sim P_0$ and $\tilde{\vg}_1 \sim P_X$.
  According to the third part of Prop.~\ref{prop:hX_equivariance},
  we have $\hat{h}_{X}(\tau)=\mathcal{L}_r(h_{X}(\tau))$.
  By taking derivative on both sides of the equation,
  we have $\frac{\partial}{\partial \tau} \hat{h}_{X}(\tau)=(\mathcal{L}_r)_{*, h_{X}(\tau)} \frac{\partial}{\partial \tau}h_{X}(\tau)$.
  Then the rest of the proof can be conducted similarly to the first part of the proof.
\end{proof}

\section{Model details}
\label{sec:appx-so2}

Let $ F_u^l \in \mathbb{R}^{c \times (2l+1)}$ be a channel-$c$ degree-$l$ feature at point $u$.
The equivariant dot-product attention is defined as:
\begin{equation}
  \label{eq:attention}
    A_{u}^l= \sum_{v \in \textit{KNN}(u) \backslash \{u\}} \frac{\exp \left( \langle Q_u, K_{vu} \rangle \right)}{\sum_{v' \in \textit{KNN}(u) \backslash \{u\}} \exp \left(\langle Q_u, K_{v^{\prime}u} \rangle \right)} V_{vu}^l,
\end{equation}
where $\langle  \cdot, \cdot \rangle $ is the dot product,
$\textit{KNN}(u) \subseteq \bigcup_i X_i$ is a subset of points $u$ attends to,
$K, V \in \mathbb{R}^{c \times (2l+1)}$ take the form of the e3nn~\citep{geiger2022e3nn} message passing,
and $Q \in \mathbb{R}^{c \times (2l+1)}$ is obtained by a linear transform:
\begin{align}
  \label{eq:QKV}
  Q_u &= \bigoplus_l W_Q^l F_u^l, \quad
  K_v = \bigoplus_l \sum_{l_e, l_f} c_K^{(l, l_e, l_f)}(|uv |) Y^{l_e}(\widehat{vu}) \otimes^{l}_{l_e, l_f}  F_v^{l_f}, \\
  V_v^{l} &= \sum_{l_e, l_f} c_V^{(l, l_e, l_f)}(|uv |) Y^{l_e}(\widehat{vu}) \otimes^{l}_{l_e, l_f}  F_v^{l_f}.
\end{align}
Here, 
$W_Q^l \in \mathbb{R}^{c \times c}$ is a learnable weight,
$|vu|$ is the distance between point $v$ and $u$,
$\widehat{vu} = \vec{vu} / |vu| \in \mathbb{R}^3$ is the normalized direction,
$Y^{l}: \mathbb{R}^3 \rightarrow \mathbb{R}^{2l+1}$ is the degree-$l$ spherical harmonic function,
$c: \mathbb{R}_+ \rightarrow \mathbb{R}$ is a learnable function that maps $|vu|$ to a coefficient,
and $\otimes$ is the tensor product with the Clebsch-Gordan coefficients.

To accelerate the computation of $K$ and $V$,
we use the $SO(2)$-reduction technique~\citep{passaro2023reducing},
which rotates the edge $uv$ to the $y$-axis,
so that the computation of spherical harmonic function, 
the Clebsch-Gordan coefficients,
and the iterations of $l_e$ are no longer needed.

The main idea of $SO(2)$-reduction~\citep{passaro2023reducing} is to rotate the edge $uv$ to the $y$-axis,
and then update node feature in the rotated space.
Since all 3D rotations are reduced to 2D rotations about the $y$-axis in the rotated space,
the feature update rule is greatly simplified.

Here,
we describe this technique in the matrix form to facilitates better parallelization.
Let $F_v^l \in \mathbb{R}^{c \times (2l+1)}$ be a $c$-channel $l$-degree feature of point $v$,
and $L >0$ be the maximum degree of features.
We construct $\hat{F}_v^l \in \mathbb{R}^{c \times (2L+1)}$ by padding $F_v^l$ with $L-l$ zeros at the beginning and the end of the feature,
then we define the full feature $F_v \in \mathbb{R}^{c \times L \times (2L+1)}$ as the concatenate of all $\hat{F}_v^l$ with $ 0 < l \leq L$.
For an edge $vu$,
there exists a rotation $r_{vu}$ that aligns $uv$ to the $y$-axis.
We define $R_{vu} \in \mathbb{R}^{L \times (2L+1) \times (2L+1)}$ to be the full rotation matrix,
where the $l$-th slice $R_{vu}[l,:,:]$ is the $l$-th Wigner-D matrix of $r_{vu}$ with zeros padded at the boundary.
$K_v$ defined in~\eqref{eq:QKV} can be efficiently computed as
\begin{align}
  \label{eq:QKV_SO2}
  K_v &= R_{vu}^{T} \times_{1,2} (W_K \times_{3} (D_K \times_{1,2} R_{vu} \times_{1,2} F_v)),
\end{align}
where $M_1 \times_{i} M_2$ represents the batch-wise multiplication of $M_1$ and $M_2$ with the $i$-th dimension of $M_2$ treated as the batch dimension.
$W_K \in \mathbb{R}^{(cL) \times (cL)}$ is a learnable weight,
$D_K \in \mathbb{R}^{c \times (2L+1) \times (2L+1)}$ is a learnable matrix taking the form of 2D rotations about the $y$-axis,
\ie, for each $i$,
$D_K[i, :, :]$ is
\begin{equation}
  \label{eq:D_K}
  \begin{bmatrix}
    a_1 & &  &  & &  &  &  & -b_1 \\
     & a_2 & &  & & &  & -b_2 &  \\
     &  & \ddots &  &  &  & \reflectbox{$\ddots$} & &  \\
    & & & a_{L-1} &  & -b_{L-1} & & &   \\
     & & &  & a_L &  & & &  \\
    & & &  b_{L-1} &  & a_{L-1} & & &  \\
    &  & \reflectbox{$\ddots$} &  &  &  & \ddots & &  \\
     & b_2 & & & &  &  & a_2 &   \\
    b_1 & &  & & & &  &  & a_1
    \end{bmatrix},
\end{equation}
where $a_1, \cdots, a_L,b_1, \cdots, b_{L-1} : \mathbb{R}_+ \rightarrow \mathbb{R}$ are learnable functions that map $|vu|$ to the coefficients.
$V_v$ defined in~\eqref{eq:QKV} can be computed similarly.
Note that~\eqref{eq:QKV_SO2} does not require the computation of Clebsch-Gordan coefficients,
the spherical harmonic functions,
and all computations are in the matrix form where no for-loop is needed,
so it is much faster than the computations in~\eqref{eq:QKV}.

\section{More details of Sec.~\ref{sec:exp}}
\label{sec:appx-exp}

\subsection{A remark on metrics}
  When $N=2$,
  \ie, the input point clouds are $X_1$ and $X_2$,
  the metric we used is simply the averaged RRE/RTE of $(X_1, X_2)$ and $(X_2, X_1)$.
  Other popular metrics used in registration papers~\citep{huang2021predator}, 
  like FMR and IR, 
  are not suitable, 
  because they measure the quality of the estimated correspondences, 
  which our method does not compute. 
  In addition, 
  we do not use RR, 
  because it is similar to the RRE and RTE metric (mean value v.s. threshold percentage), 
  and it depends on a manually selected threshold for indoor scene, 
  \ie, $5$ degrees and 2cm.
  This metric does not apply to other dataset than 3DMatch,
  e.g., the translation threshold is way too strict for outdoor dataset like KITTI, 
  and is not applicable for dataset with unknown scale like BB. 

\subsection{An verification of equivariances}

This subsection directly checks the relatedness of the learned vector field $v$.
To verify the $SO(3)$-relatedness, \ie, $v_{X}(rg)=r v_X(g)$,
we compute loss $L_{R}=||v_{X}(r \vg) -  rv_X(\vg)||_F$ where $r$ is a random rotation, 
and $\vg$ is a random rigid transformation. 
This error will be close to zero if the equivariance holds.

On the other hand, 
we can also verify the $\sigma$-relatedness directly by computing $L_{P} =||v_{\sigma X}( \sigma \vg) - \sigma v_X(\vg)||_F$, 
where $\sigma$ is a random permutation and $\vg$ is a random rigid transformation. 
Similarly, this error will be close to zero if the equivariance holds.

We compute $L_{P}$ and $L_{R}$ $3$ times and present the mean results in Table~\ref{tab:equi-error}.
We observe that both errors are close to $0$ with only floating point errors, 
suggesting these two equivariances hold.

\begin{table}[h]
    \centering
    \begin{tabular}{c c}
    \hline
        $L_{R}$ & $L_{P}$  \\
        \hline
        $1.76 \times 10^{-7}$ & $1.1\times 10^{-7}$ \\
        \hline
    \end{tabular}
    \caption{Rotation and permutation equivariance error.}
    \label{tab:equi-error}
\end{table}

\subsection{More results on 3DMatch, BB and KITTI}

We present more details of Eda on 3DL in Fig.~\ref{fig-registration}.
We observe that the vector field is is gradually learned during training,
\ie, the training error converges.
On the test set,
RK4 outperforms the RK1,
and they both benefit from more time steps,
especially for rotation errors.

\begin{figure}[th!]
  \begin{minipage}{\linewidth}
      \centering
      \includegraphics[width=0.3\linewidth]{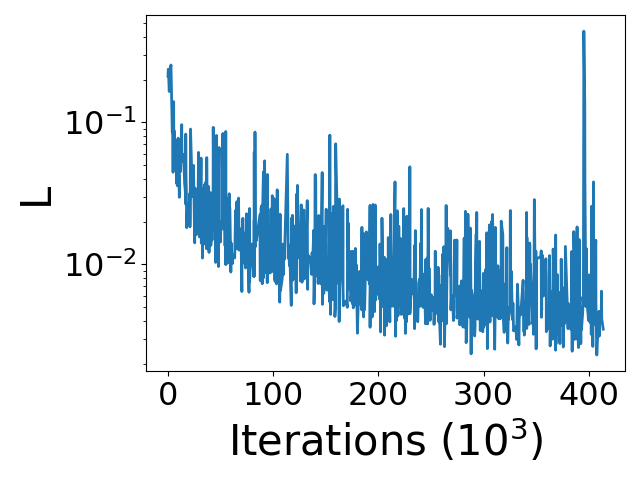} 
      \includegraphics[width=0.3\linewidth]{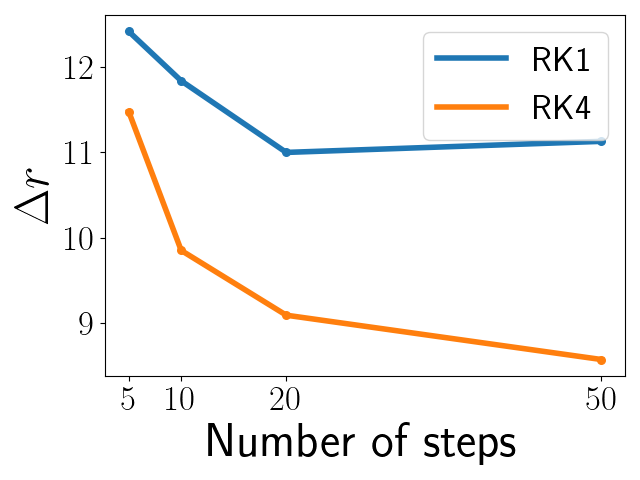}
      \includegraphics[width=0.3\linewidth]{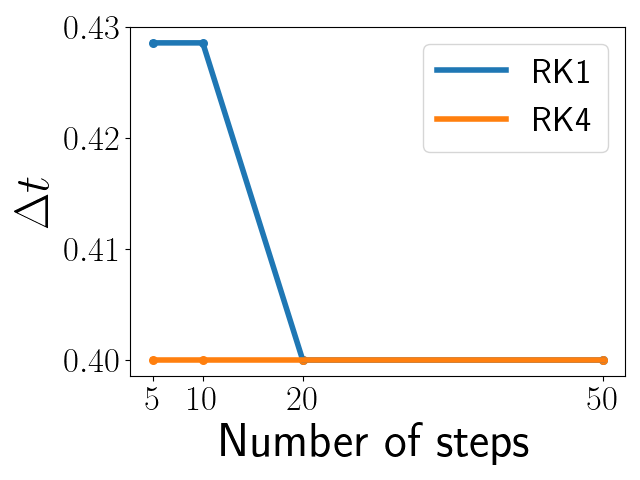}
  \end{minipage}
\caption{
  More details of Eda on 3DL.
  Left: the training curve.
  Middle and right: the influence of RK4/RK1 and the number of time steps on $\Delta r$ and $\Delta t$.
}
\label{fig-registration}
\end{figure}

\begin{figure}[th!]
  \begin{minipage}{\linewidth}
      \centering
      \includegraphics[width=0.45\linewidth]{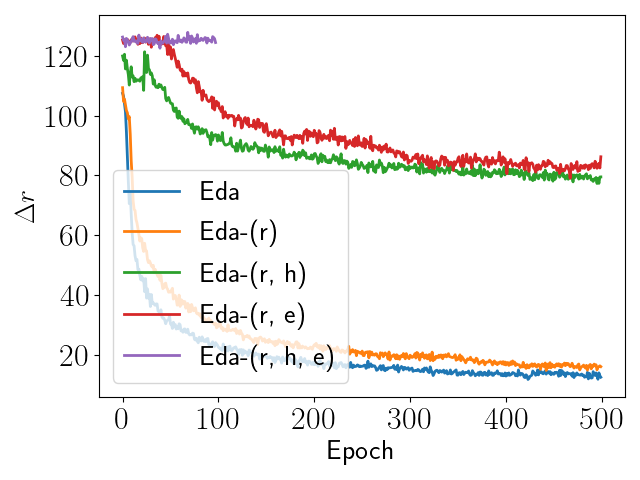} 
      \includegraphics[width=0.45\linewidth]{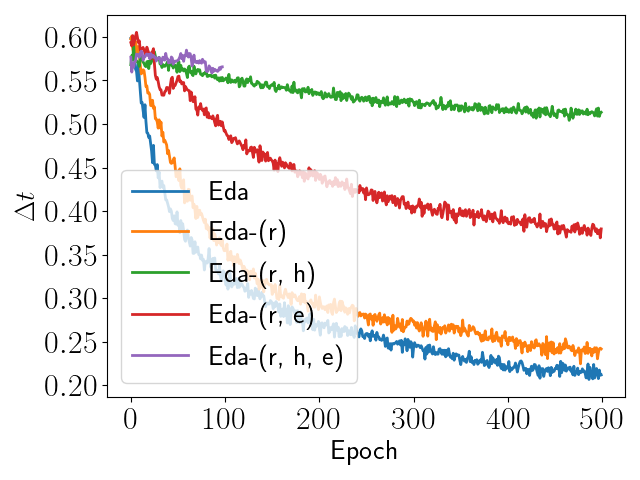}
  \end{minipage}
\caption{
  Validation error curves of all methods in Tab.~\ref{tab:ablation}. 
  The training of Eda-$(r, h, e)$ is unstable and produces NaN value at the early stage.
}
\label{fig-ablation}
\end{figure}

We now provide more details for the ablation study reported in Tab.~\ref{tab:ablation}.
The curve of validation errors of all methods are presented in Fig.~\ref{fig-ablation}.
  All methods use (RK1, 10) for sampling.
  Eda-$(r)$ satisfies all equivariances.
Eda-$(r, h)$ breaks the first and third part of Prop.~\ref{prop:augmentation}.
Eda-$(r, e)$ and Eda-$(r, h, e)$ further break the second part of Prop.~\ref{prop:sigma_lr_related}.
The non-equivariant network is obtained by replacing the matrix~\eqref{eq:D_K} by a linear transformation with exactly the same number of parameters.
All methods considered in this study contain exactly the same number of trainable parameters.

We provide the complete version of Table~\ref{tab:3DMatch} in Table~\ref{tab:3DMatch_complete},
where we additionally report the standard deviations of Eda.

\begin{table}[ht!]
	\begin{center}
	  \caption{The complete version of Table~\ref{tab:3DMatch} with stds of Eda reported in bracked.}
	  \setlength{\tabcolsep}{2.0pt}
    \label{tab:3DMatch_complete}
      \begin{tabular}{c c c c c c c } 
		  \hline
      & \multicolumn{2}{c}{3DM} & \multicolumn{2}{c}{3DL} & \multicolumn{2}{c}{3DZ} \\
           & $\Delta r$ & $\Delta t$ & $\Delta r $ & $\Delta t$ & $\Delta r $ & $\Delta t$ \\
		\hline
	  \centering
    FGR  & 69.5  & 0.6 & 117.3 & 1.3  & $-$ & $-$    \\
    GEO  & 7.43  & 0.19 & 28.38 & 0.69 & $-$ & $-$      \\
    ROI (500)  & 5.64   & 0.15 & 21.94   & 0.53   & $-$ & $-$    \\
    ROI (5000)  & 5.44   & 0.15 & 22.17   & 0.53  & $-$ & $-$     \\
    AMR  &  5.0 & \textbf{0.13} &  20.5 & 0.53  & $-$ & $-$     \\
    Eda (RK4, 50) &  \textbf{2.38} (0.16)  & 0.16 (0.01) &  \textbf{8.57} (0.08) & \textbf{0.4} (0.0) & 78.74 (0.6) & 0.96 (0.01) \\
    \hline
	  \end{tabular}
	\end{center}
\end{table}

We provide some qualitative results on BB datasets in Fig.~\ref{fig-BB-qualitative1}.
Eda can generally recover the shape of the objects.

A complete version of Tab.~\ref{tab:BB} is provided in Tab.~\ref{tab:BB_complete},
where we additionally report the standard deviations of Eda.

\begin{table}[ht!]
	\begin{center}
	  \caption{The complete version of Table~\ref{tab:BB} with stds of Eda reported in brackets.}
	  \setlength{\tabcolsep}{2.0pt}
    \label{tab:BB_complete}
      \begin{tabular}{c c c c } 
		  \hline
           & $\Delta r$ & $\Delta t$ & Time (min) \\
		\hline
	  \centering
    GLO           & 126.3   & 0.3 & 0.9  \\
    DGL           & 125.8   & 0.3 & 0.9  \\
    LEV           & 125.9   & 0.3 & 8.1   \\
    Eda (RK1, 10) & 80.64   & 0.16 & 19.4   \\
    Eda (RK4, 10) & 79.2 (0.58)    & 0.16 (0.0) & 76.9   \\
    \hline
	  \end{tabular}
	\end{center}
\end{table}

\begin{figure}[ht!]
  \centering
        \begin{minipage}[b]{0.06\linewidth}
          \centering
          JIG \\
          \vspace{3cm}
          LEV \\
          \vspace{3cm}
          DGL \\
          \vspace{3cm}
          GLO \\
          \vspace{3cm}
          GARF \\
          \vspace{3cm}
          Eda
          \vspace{1.5cm}
        \end{minipage}
      \begin{minipage}[b]{0.11\linewidth}
          \centering
          \includegraphics[width=0.9\linewidth]{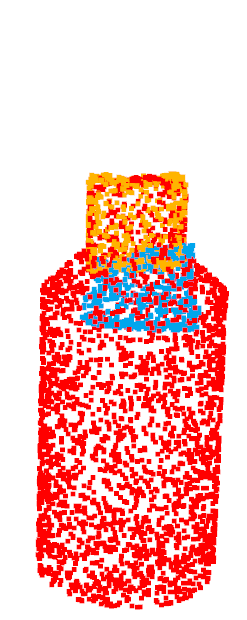} 
          \includegraphics[width=0.9\linewidth]{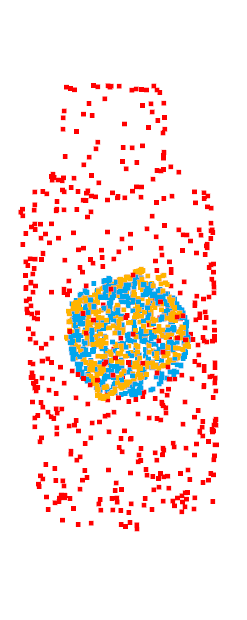} 
          \includegraphics[width=0.9\linewidth]{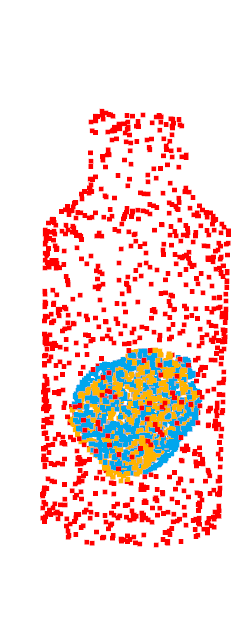} 
          \includegraphics[width=0.9\linewidth]{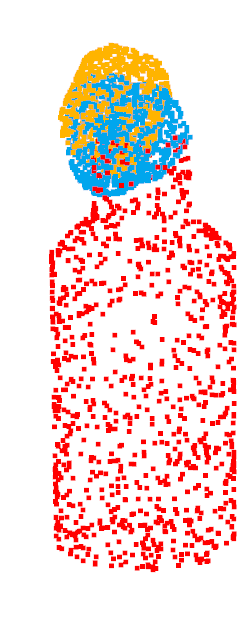} 
          \includegraphics[width=0.9\linewidth]{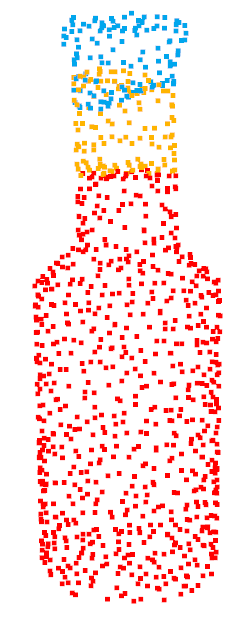} 
          \includegraphics[width=0.9\linewidth]{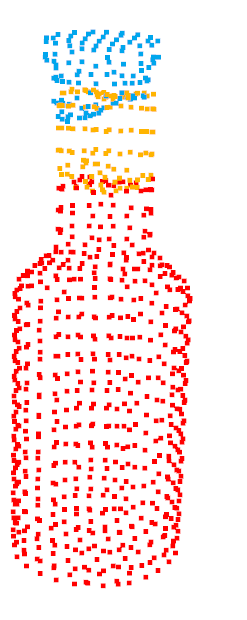} 
        \end{minipage}
      \begin{minipage}[b]{0.2\linewidth}
          \centering
          \includegraphics[width=0.9\linewidth]{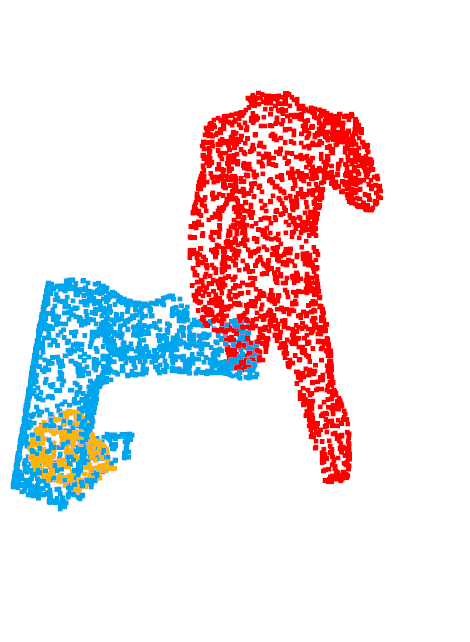} 
          \includegraphics[width=0.9\linewidth]{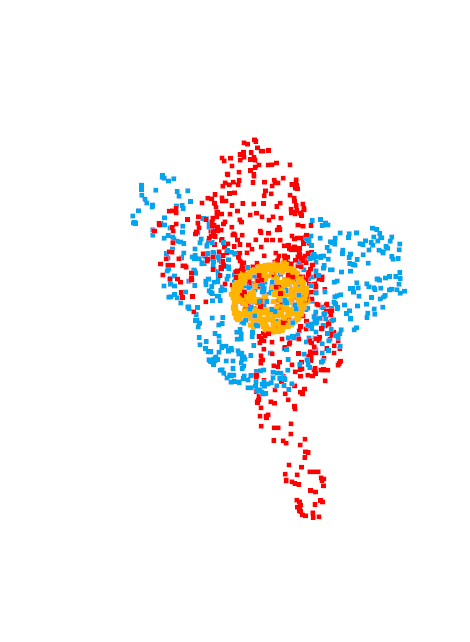} 
          \includegraphics[width=0.9\linewidth]{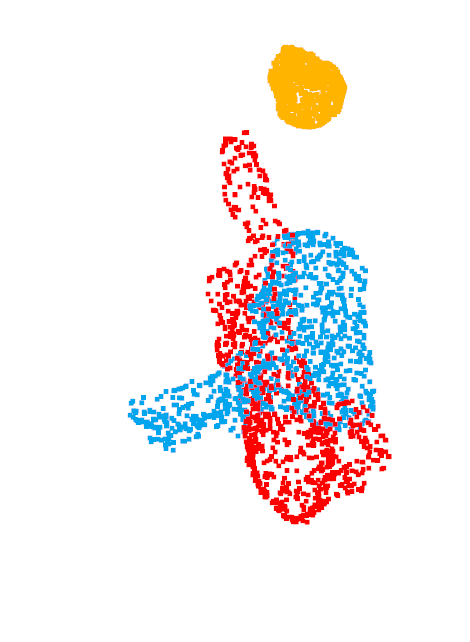} 
          \includegraphics[width=0.9\linewidth]{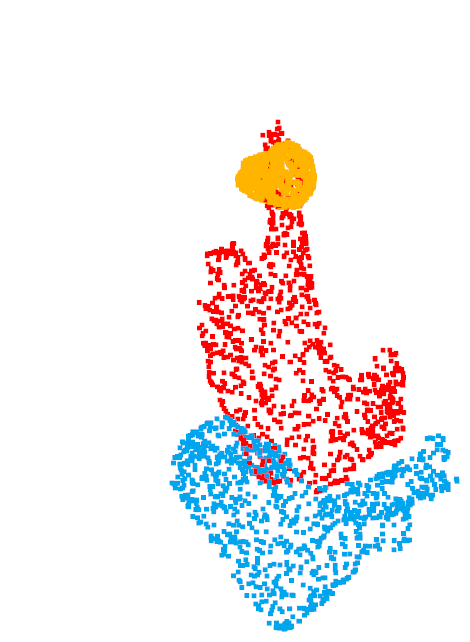} 
          \includegraphics[width=0.9\linewidth]{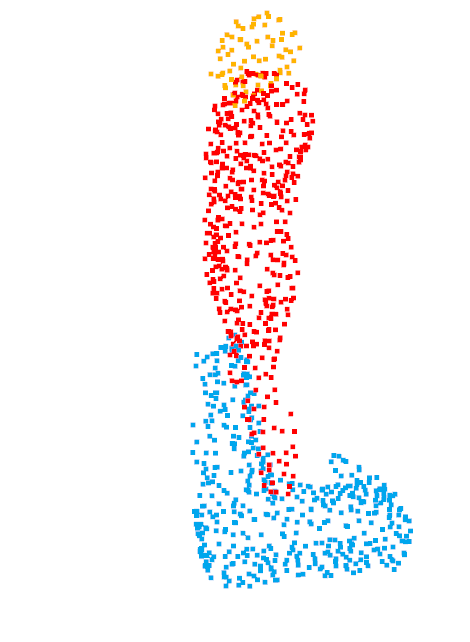} 
          \includegraphics[width=0.9\linewidth]{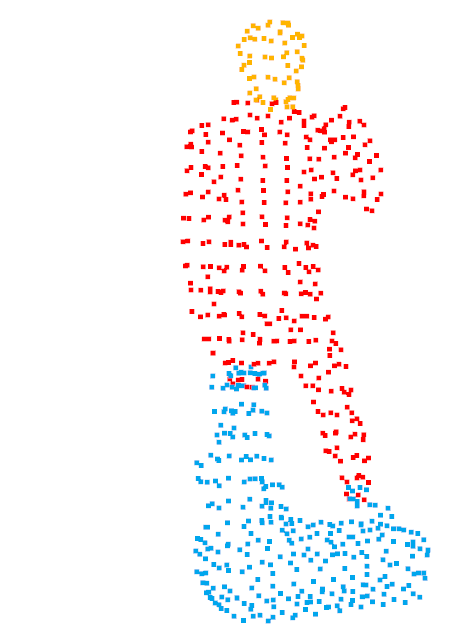} 
        \end{minipage}
      \begin{minipage}[b]{0.245\linewidth}
          \centering
          \includegraphics[width=0.9\linewidth]{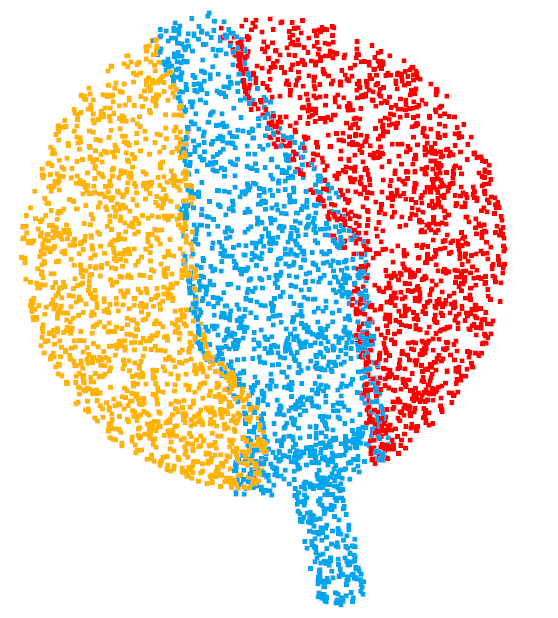} 
          \includegraphics[width=0.9\linewidth]{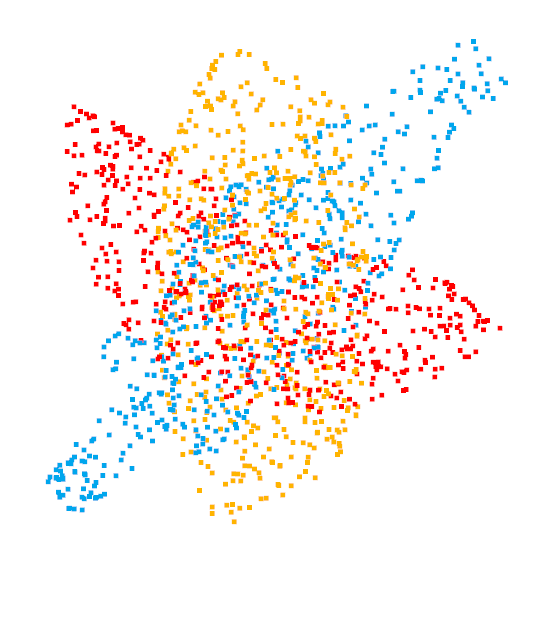} 
          \includegraphics[width=0.9\linewidth]{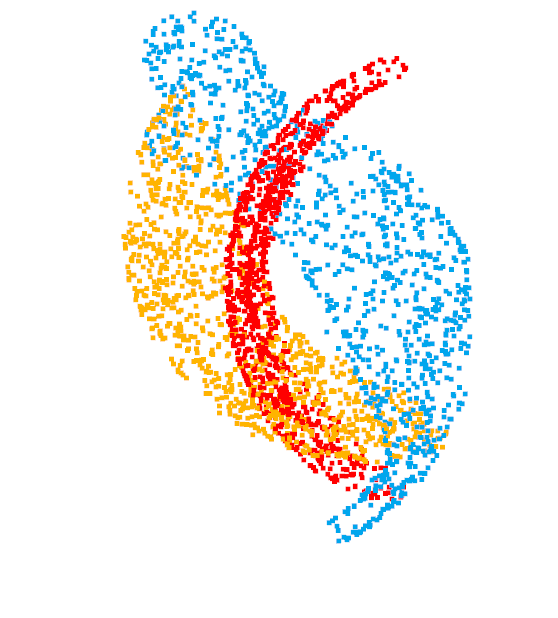} 
          \includegraphics[width=0.9\linewidth]{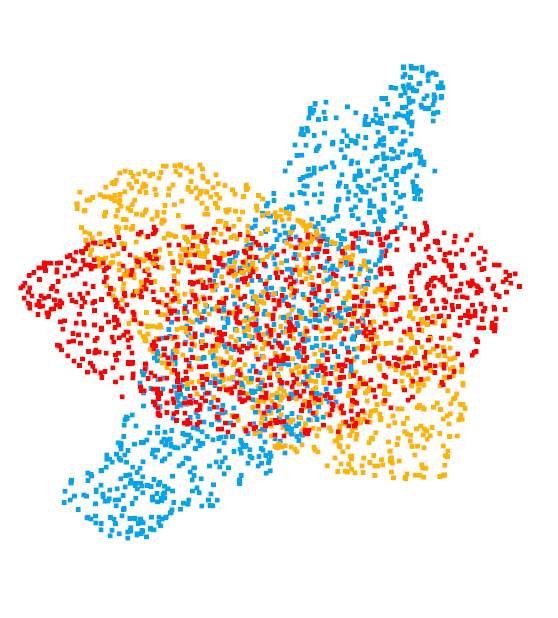} 
          \includegraphics[width=0.9\linewidth]{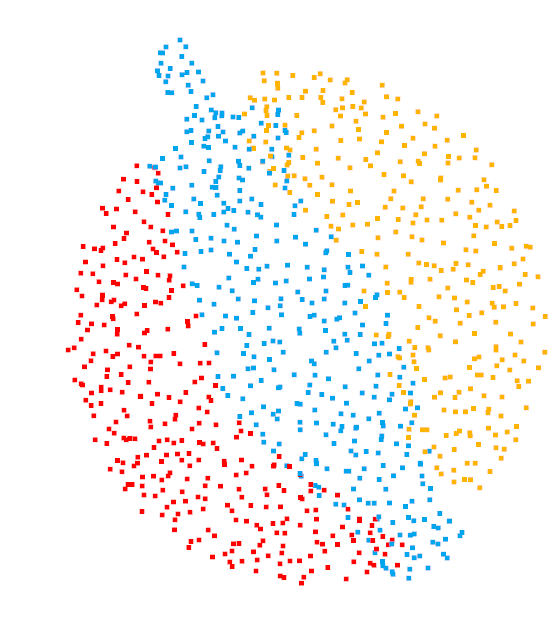} 
          \includegraphics[width=0.9\linewidth]{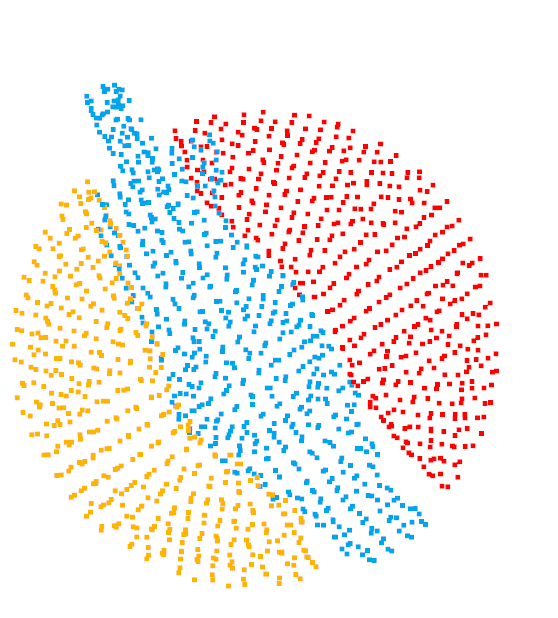} 
        \end{minipage}
      \begin{minipage}[b]{0.15\linewidth}
          \centering
          \includegraphics[width=0.9\linewidth]{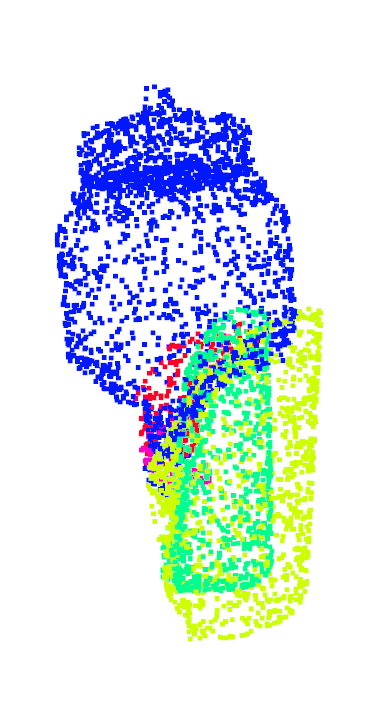} 
          \includegraphics[width=0.9\linewidth]{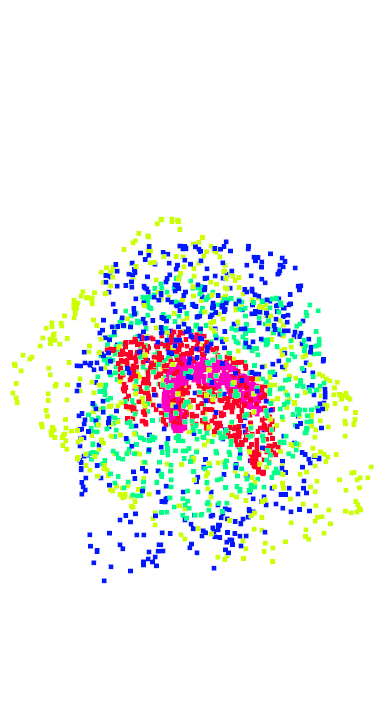} 
          \includegraphics[width=0.9\linewidth]{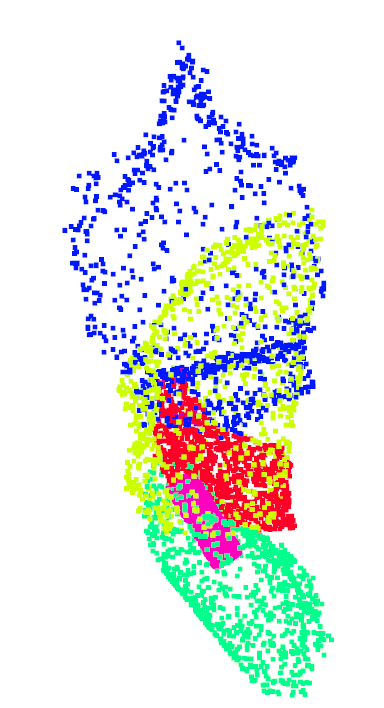} 
          \includegraphics[width=0.9\linewidth]{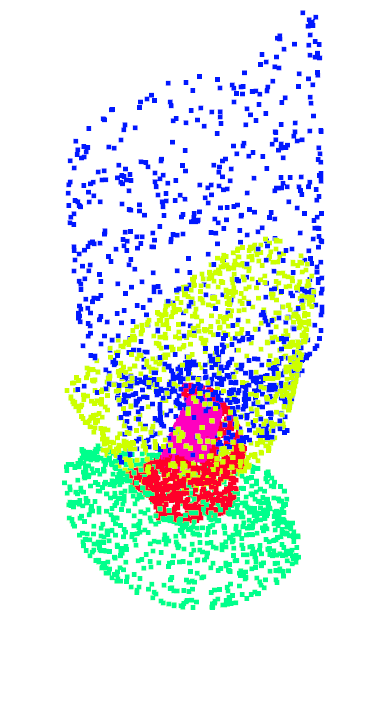} 
          \includegraphics[width=0.9\linewidth]{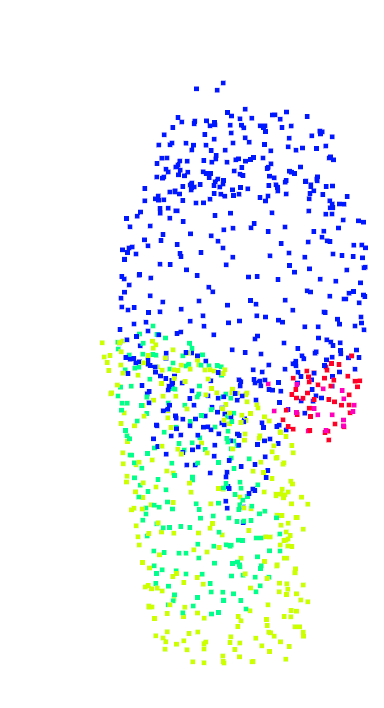} 
          \includegraphics[width=0.9\linewidth]{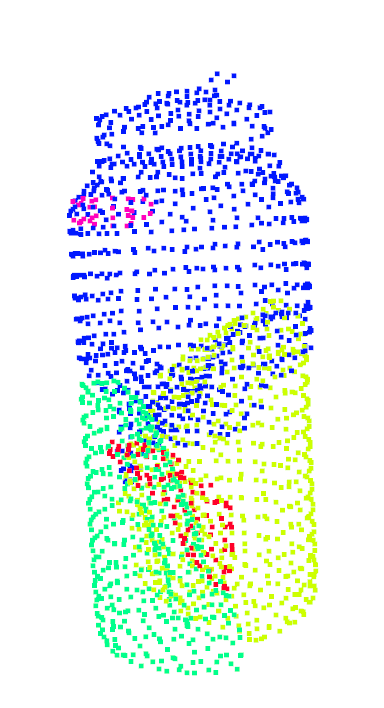} 
        \end{minipage}
      \begin{minipage}[b]{0.13\linewidth}
          \centering
          \includegraphics[width=0.9\linewidth]{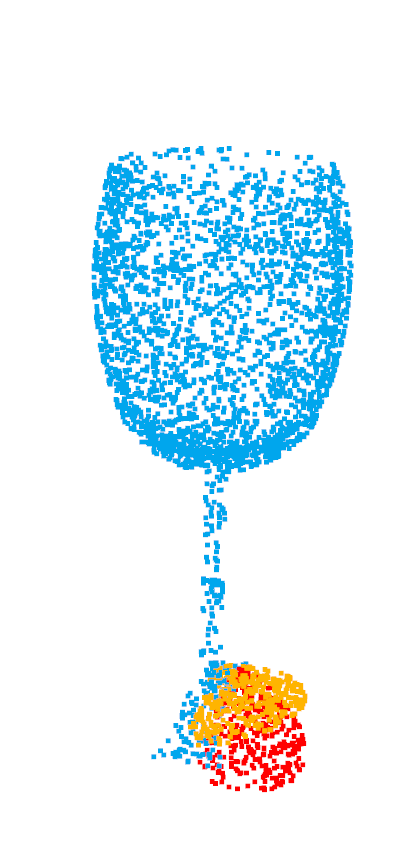} 
          \includegraphics[width=0.9\linewidth]{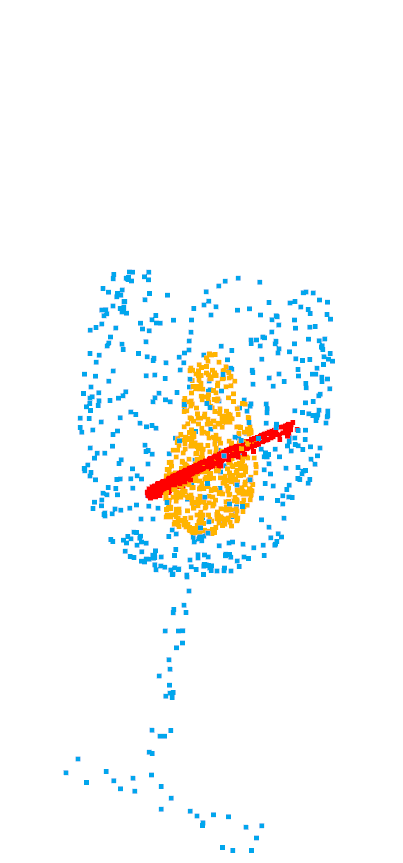} 
          \includegraphics[width=0.9\linewidth]{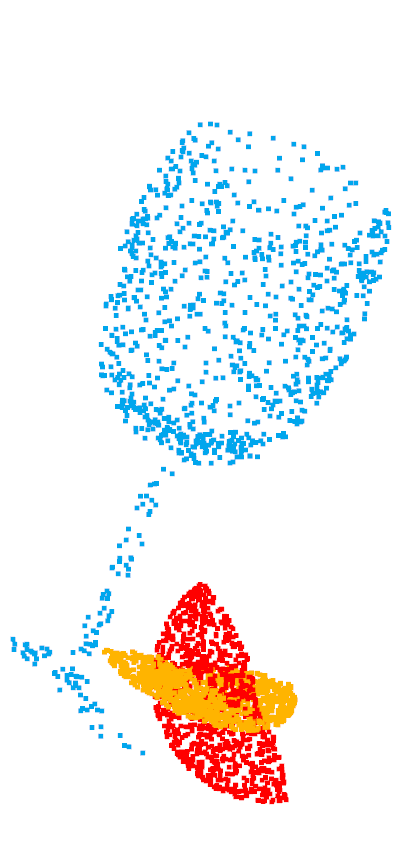} 
          \includegraphics[width=0.9\linewidth]{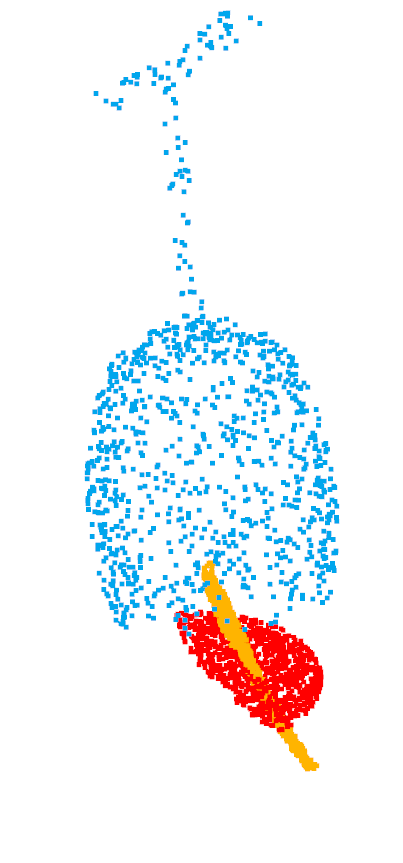} 
          \includegraphics[width=0.9\linewidth]{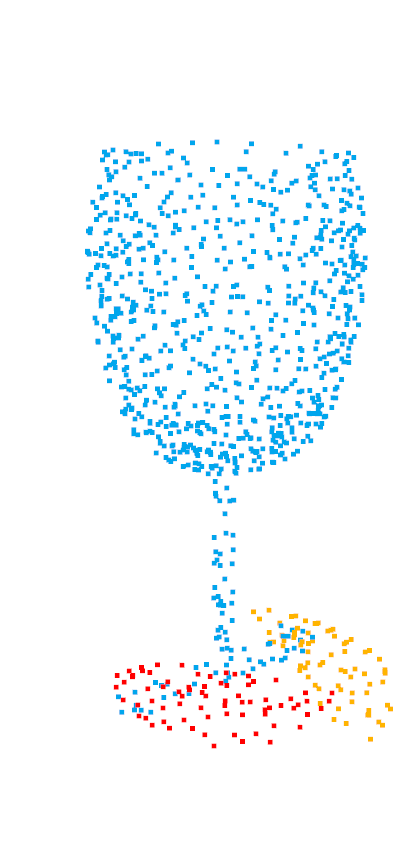} 
          \includegraphics[width=0.9\linewidth]{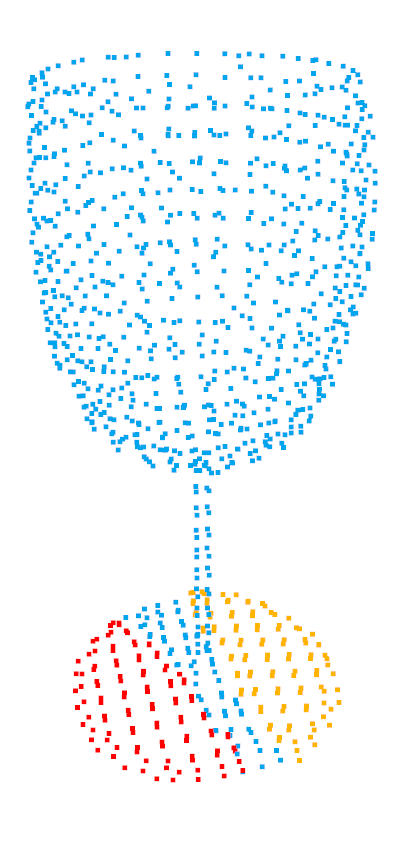} 
        \end{minipage}

\caption{
 Qualitative results on BB. Zoom in to see details.
}
\label{fig-BB-qualitative1}
\end{figure}

We provide a few examples of the reconstructed road views in Fig.~\ref{fig_kitti_qualitative}.
\begin{figure}[ht!]
  \centering
  \subfigure[2-piece assembly]{
      \begin{minipage}[b]{0.2\linewidth}
          \centering
          \includegraphics[width=0.7\linewidth]{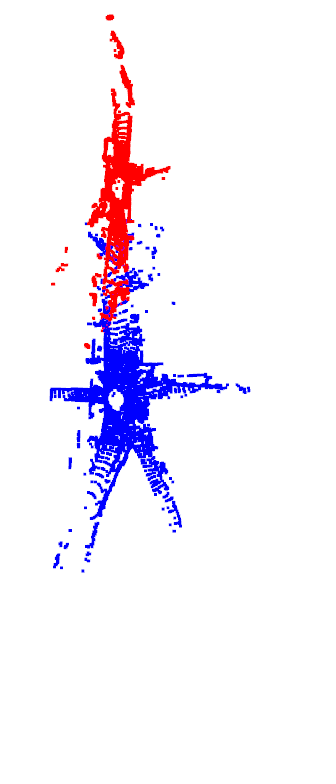} \\
          \includegraphics[width=0.7\linewidth]{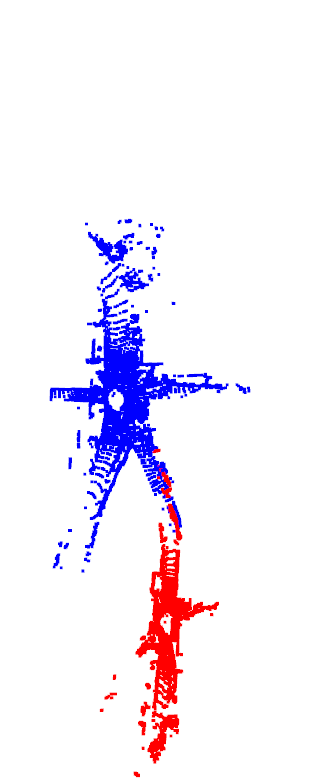} 
      \end{minipage}
  }
  \subfigure[3-piece assembly]{
      \begin{minipage}[b]{0.3\linewidth}
          \centering
          \includegraphics[width=0.9\linewidth]{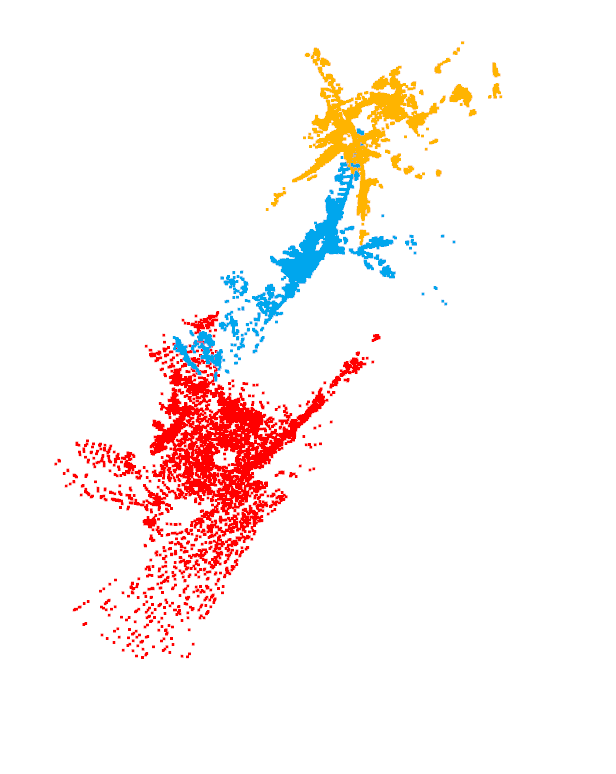} 
          \includegraphics[width=0.9\linewidth]{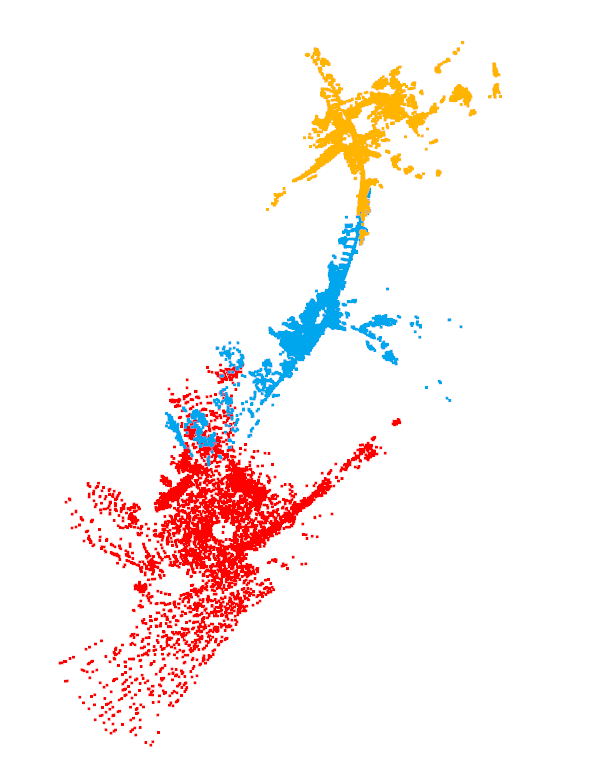} 
      \end{minipage}
  }
  \subfigure[4-piece assembly]{
      \begin{minipage}[b]{0.3\linewidth}
          \centering
          \includegraphics[width=0.9\linewidth]{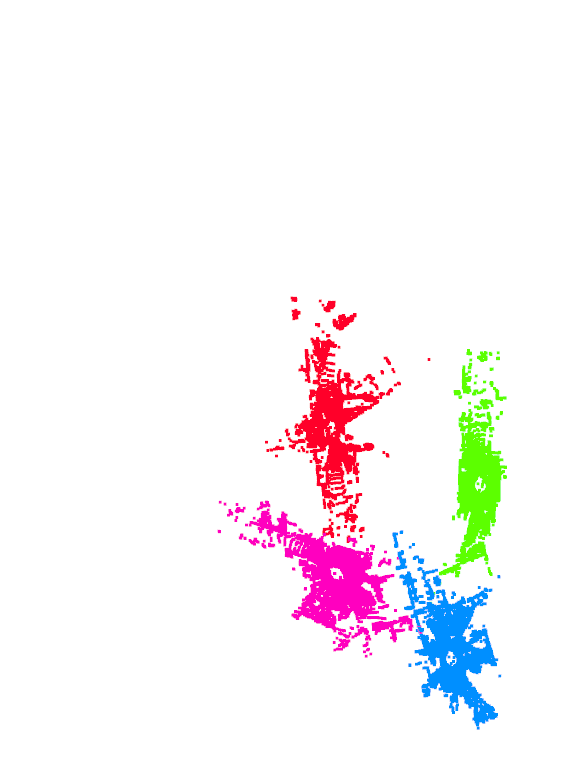} 
          \includegraphics[width=0.9\linewidth]{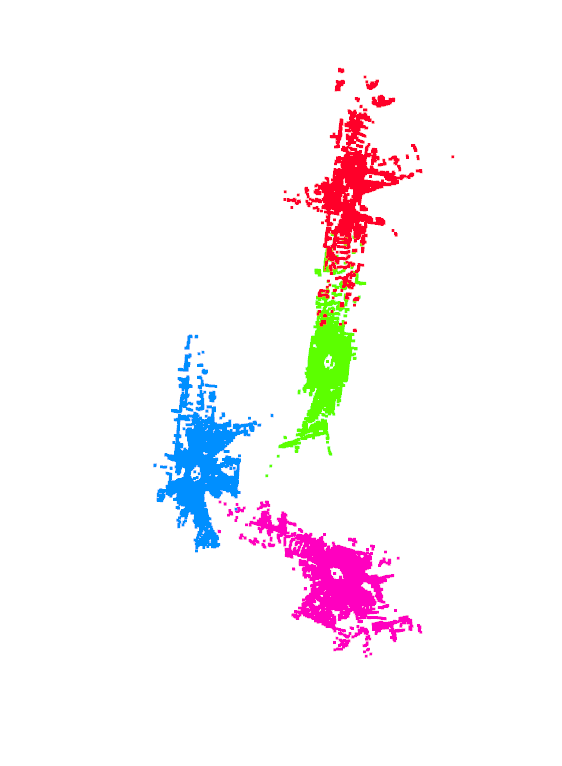} 
      \end{minipage}
  }
\caption{
  Qualitative results of Eda on kitti.
  We present the results of Eda (1-st row) and the ground truth (2-nd row).
  For each assembly, 
  Eda correctly places the input road views on the same plane.
}
\vspace{-3mm}
\label{fig_kitti_qualitative}
\end{figure}

\begin{figure}[ht!]
  \centering
  \subfigure[Ground truth]{
      \begin{minipage}[b]{0.225\linewidth}
          \centering
          \includegraphics[width=0.9\linewidth]{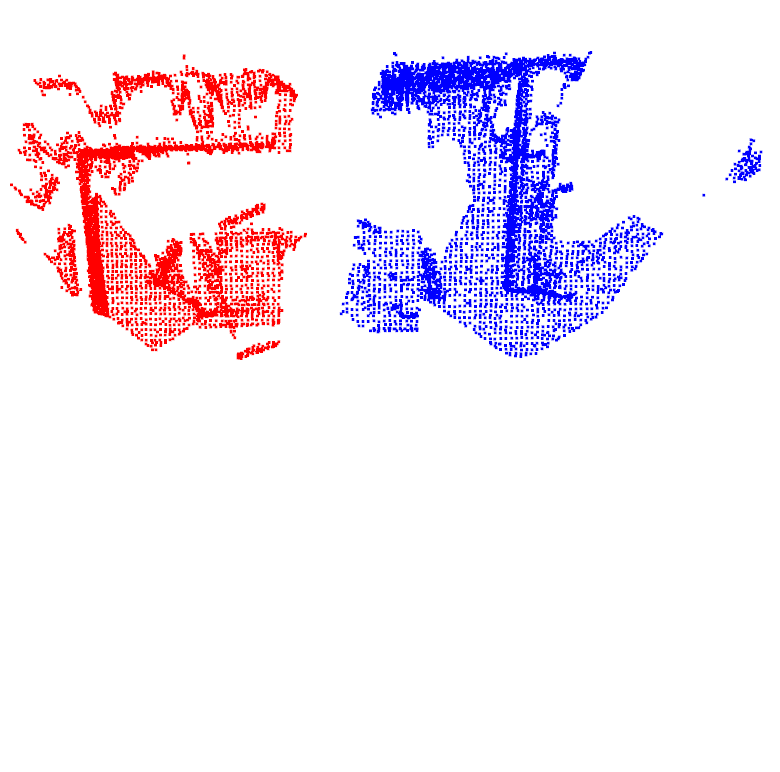} \\ \vspace{3mm}
          \includegraphics[width=0.9\linewidth]{result/3DM/MoreVis/44/cropped/1_44_00_true.png_vis3.png} \\ \vspace{3mm}
          \includegraphics[width=0.9\linewidth]{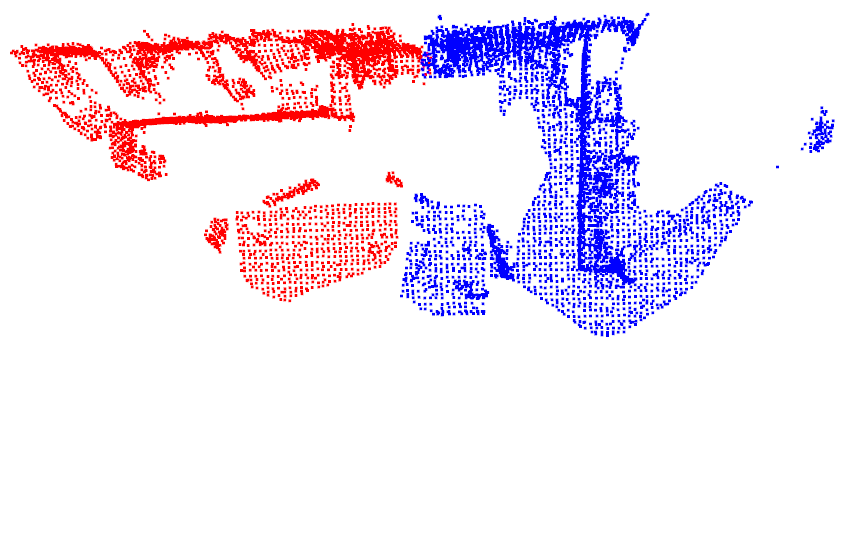} \\ \vspace{3mm}
          \includegraphics[width=0.9\linewidth]{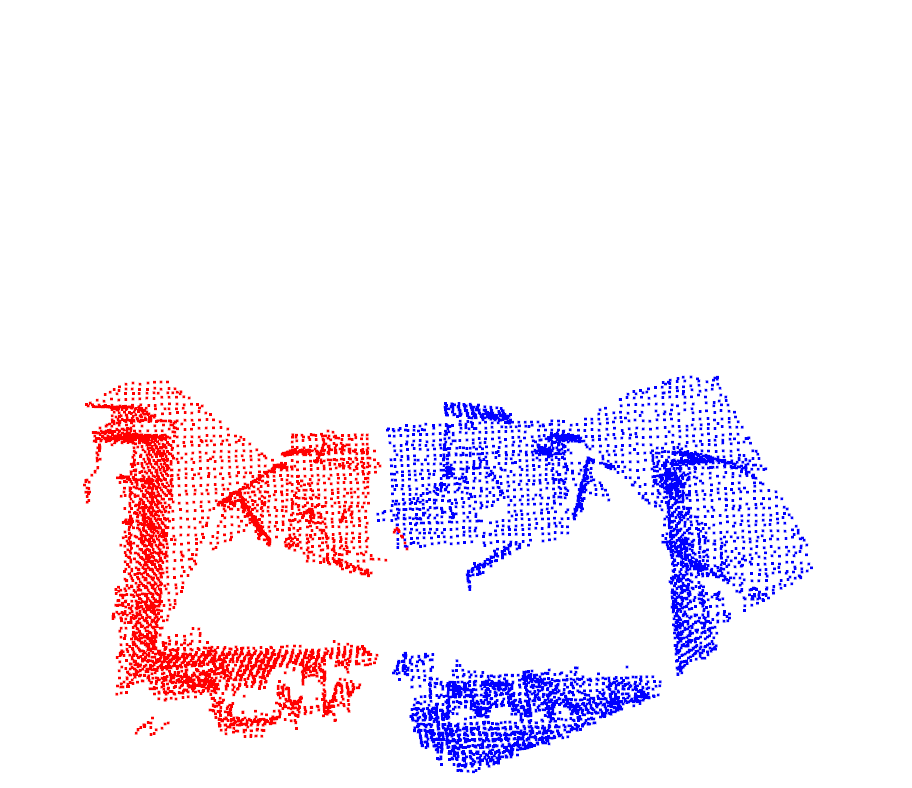} \\ \vspace{3mm}
          \includegraphics[width=0.9\linewidth]{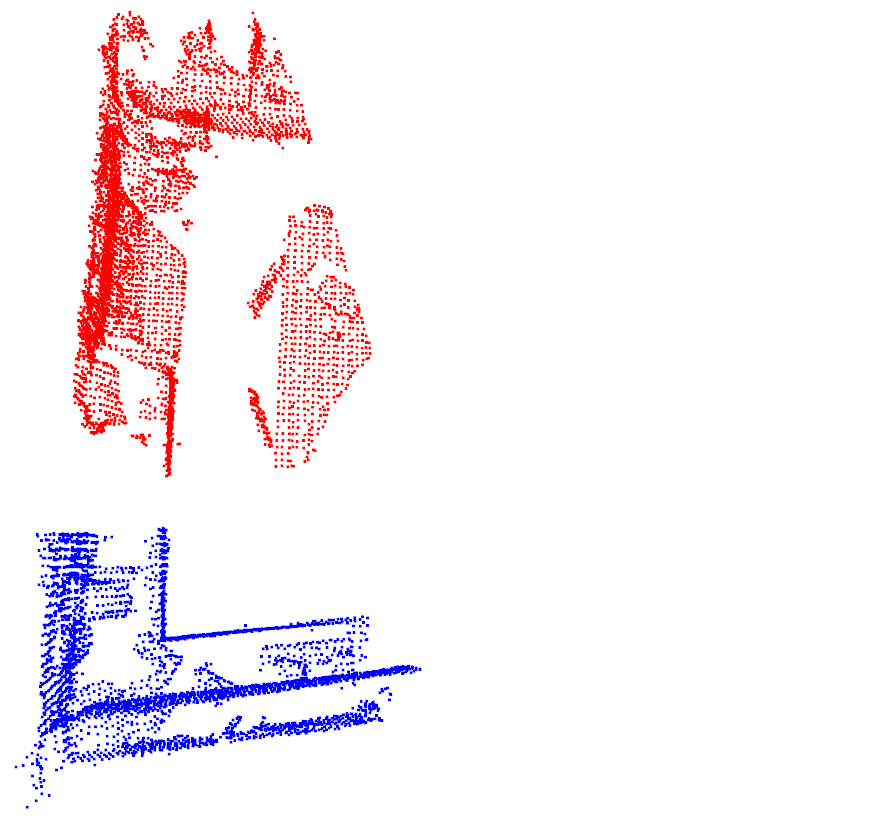} \\ \vspace{3mm}
          \includegraphics[width=0.9\linewidth]{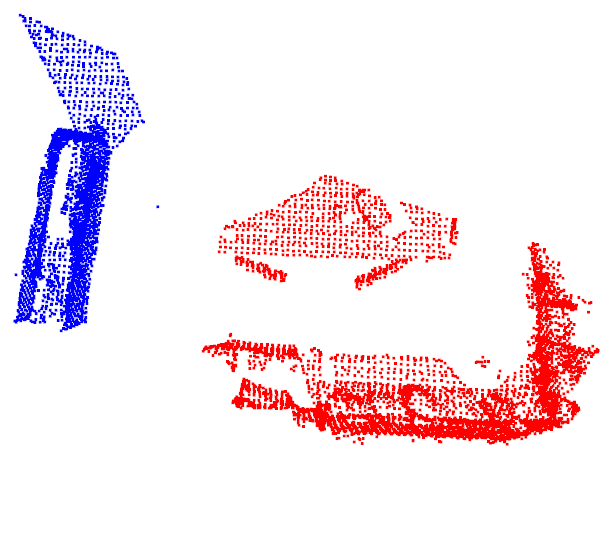} \\ \vspace{3mm}
        \end{minipage}
  }
  \hspace{3mm}
  \subfigure[$3$ runs of Eda]{
      \begin{minipage}[b]{0.225\linewidth}
          \centering
          \includegraphics[width=0.9\linewidth]{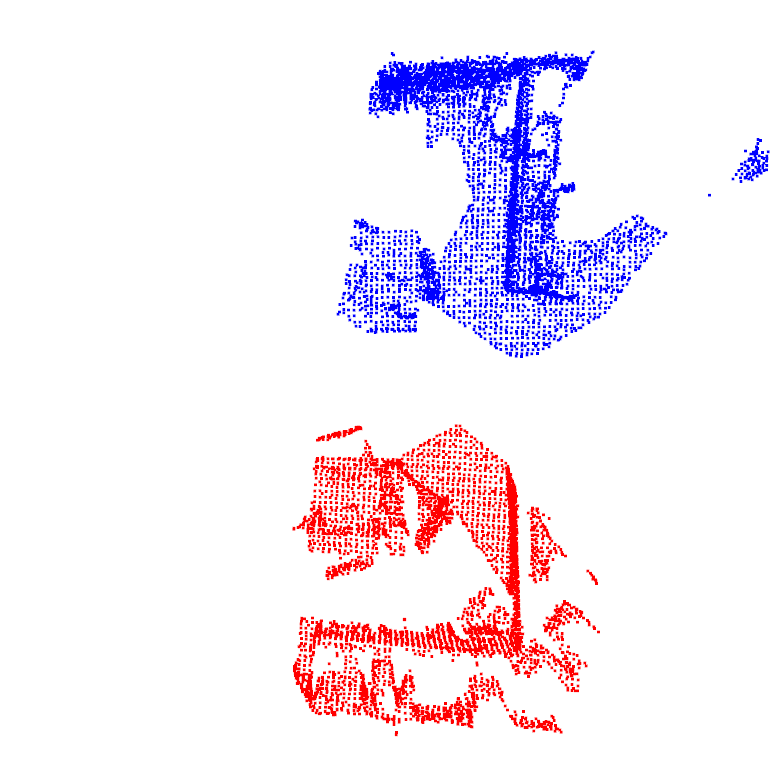} \\ \vspace{3mm}
          \includegraphics[width=0.9\linewidth]{result/3DM/MoreVis/44/cropped/1_44_00.png_vis3.png} \\ \vspace{3mm}
          \includegraphics[width=0.9\linewidth]{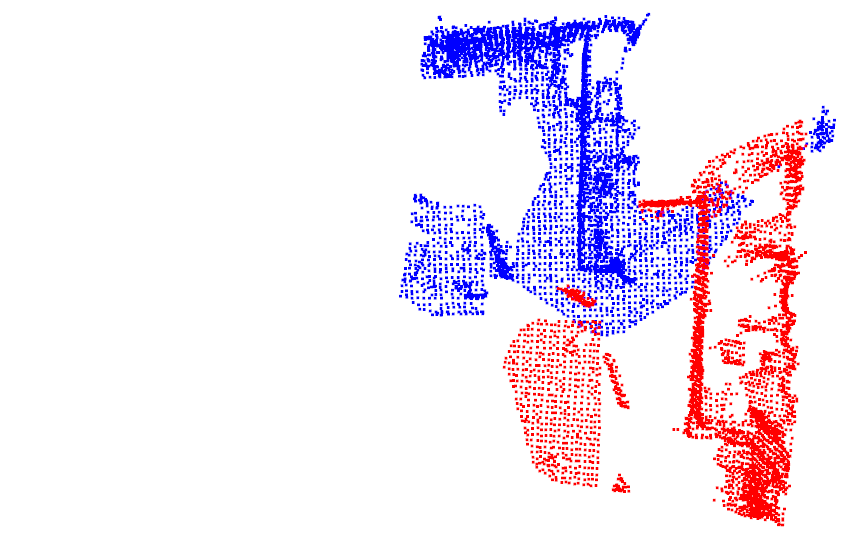} \\ \vspace{3mm}
          \includegraphics[width=0.9\linewidth]{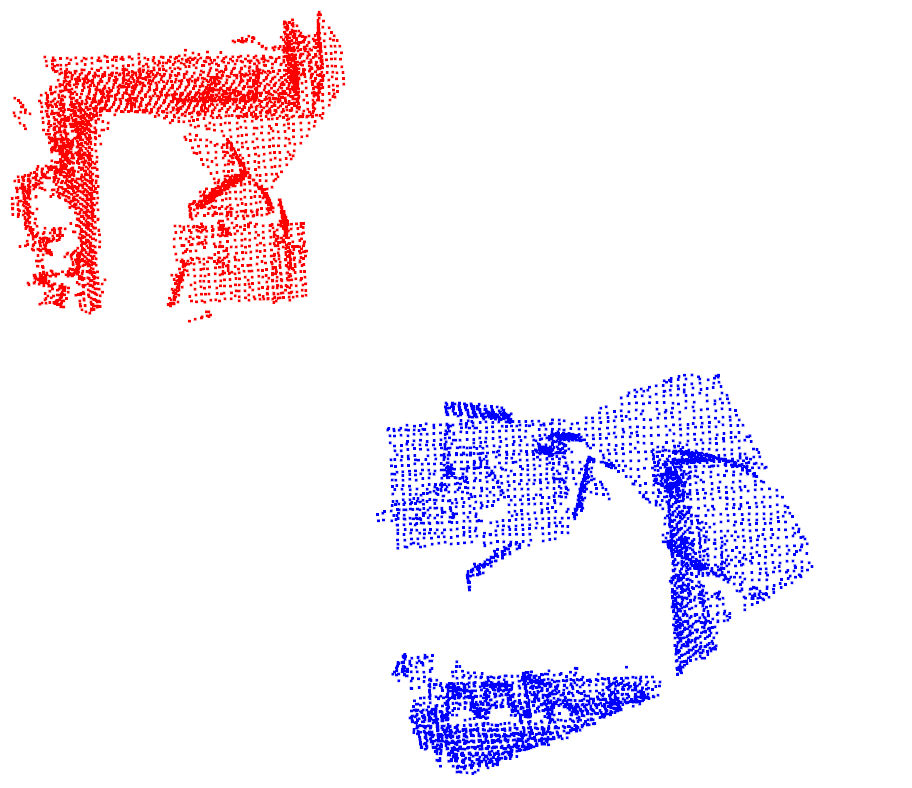} \\ \vspace{3mm}
          \includegraphics[width=0.9\linewidth]{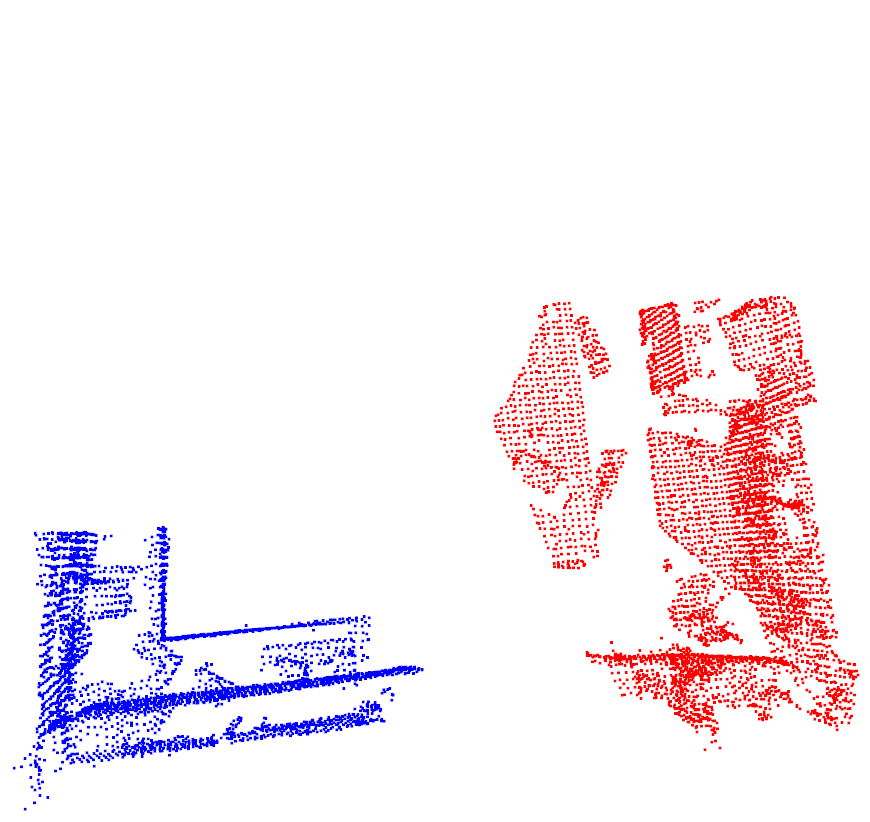} \\ \vspace{3mm}
          \includegraphics[width=0.9\linewidth]{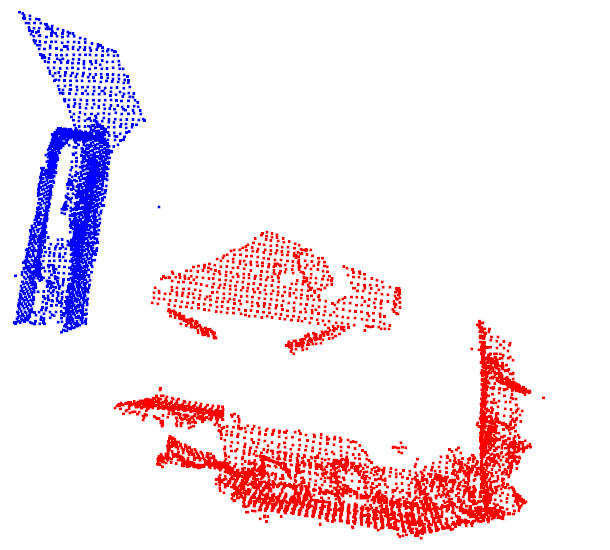} \\ \vspace{3mm}
      \end{minipage}

      \begin{minipage}[b]{0.225\linewidth}
          \centering
          \includegraphics[width=0.9\linewidth]{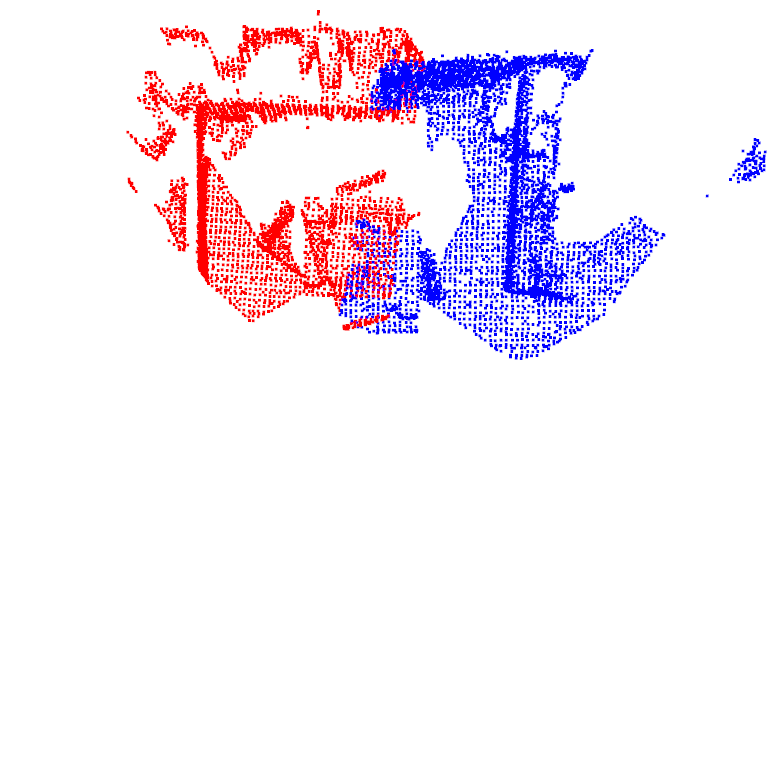} \\ \vspace{3mm}
          \includegraphics[width=0.9\linewidth]{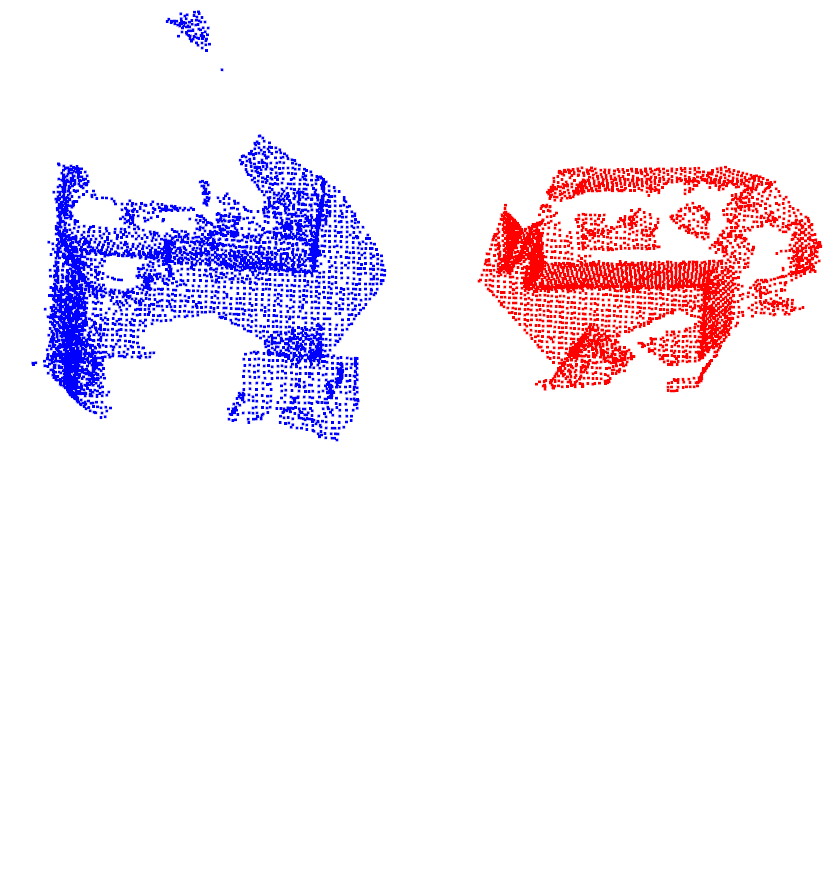} \\ \vspace{3mm}
          \includegraphics[width=0.9\linewidth]{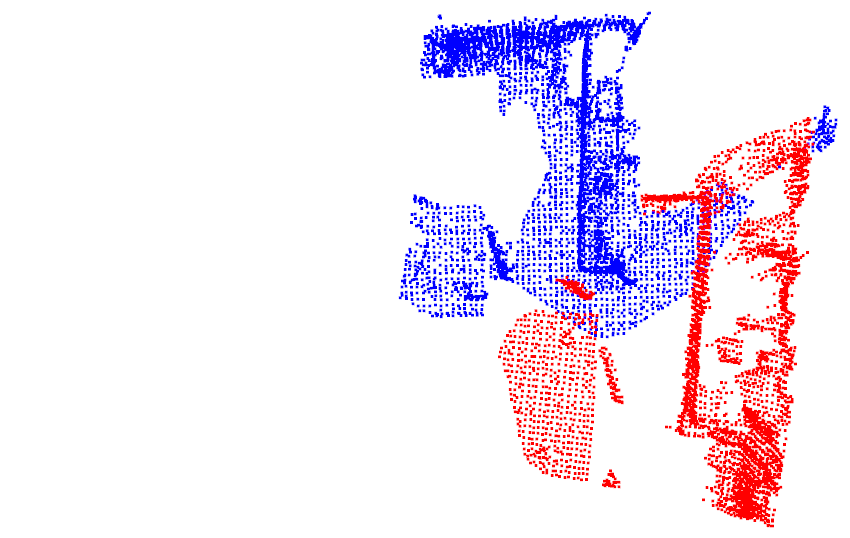} \\ \vspace{3mm}
          \includegraphics[width=0.9\linewidth]{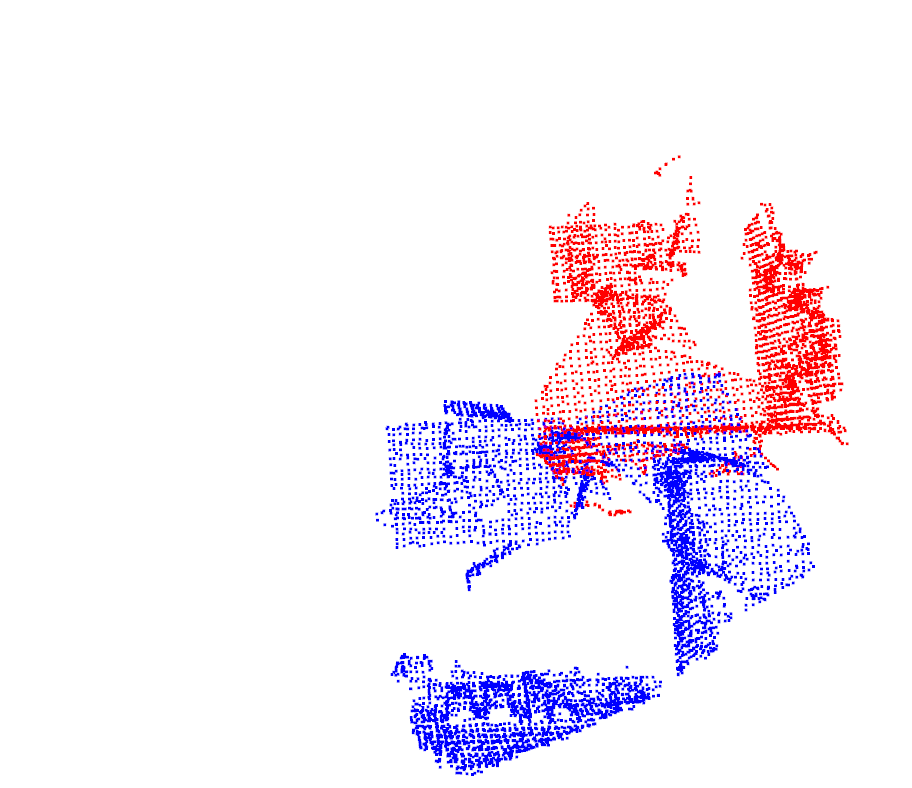} \\ \vspace{3mm}
          \includegraphics[width=0.9\linewidth]{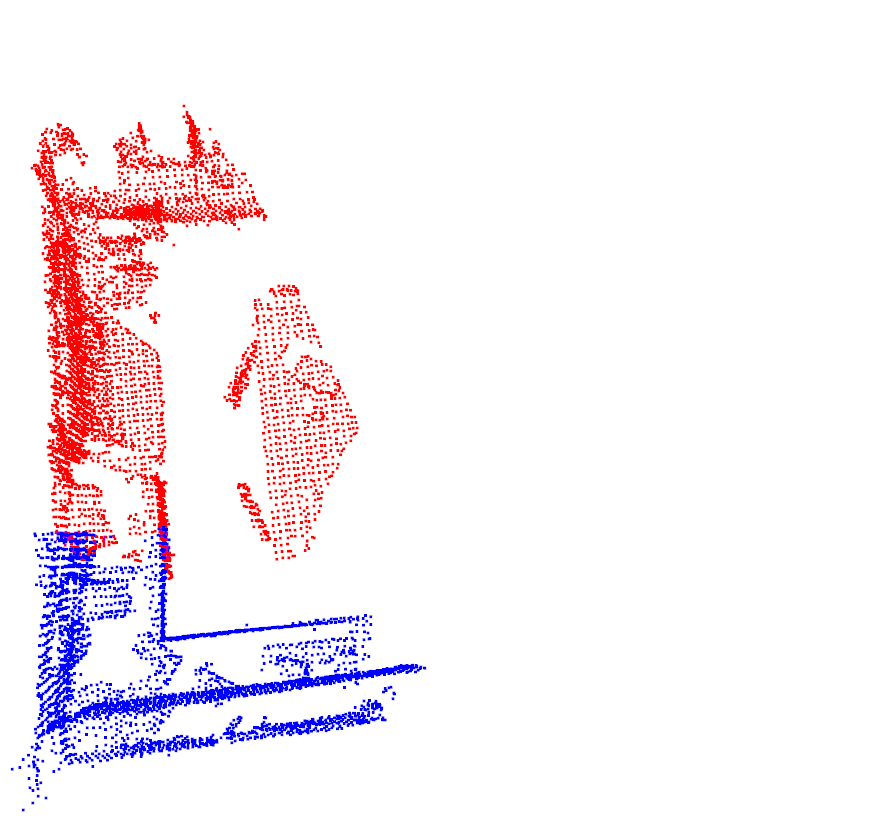} \\ \vspace{3mm}
          \includegraphics[width=0.9\linewidth]{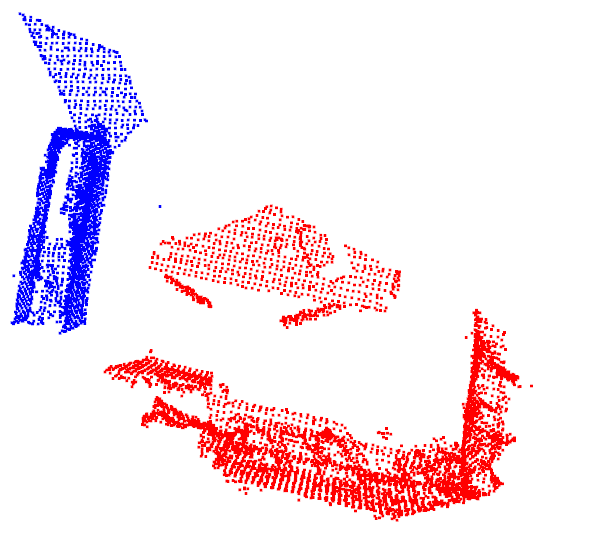} \\ \vspace{3mm}
      \end{minipage}

      \begin{minipage}[b]{0.225\linewidth}
          \centering
          \includegraphics[width=0.9\linewidth]{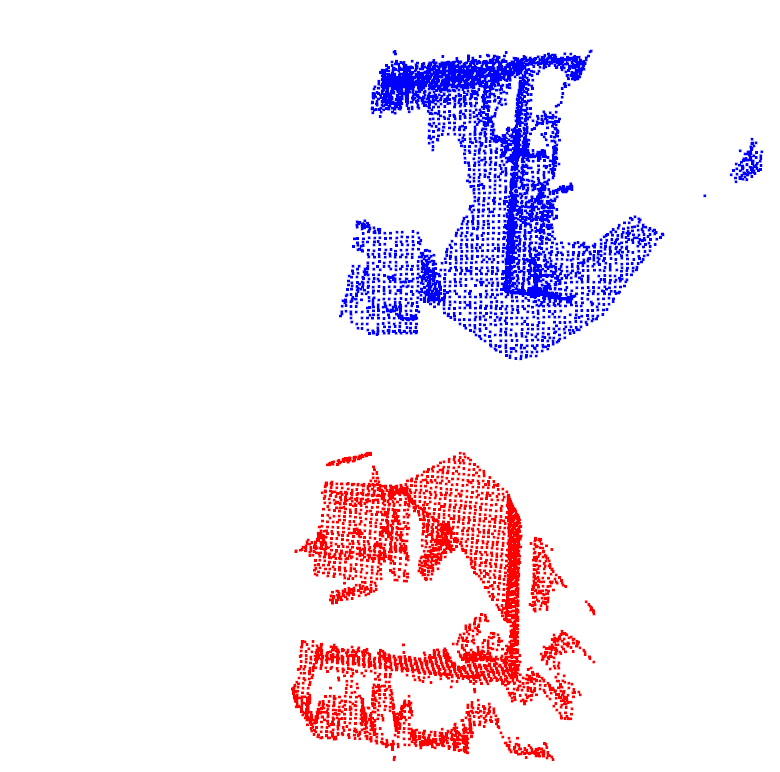} \\ \vspace{3mm}
          \includegraphics[width=0.9\linewidth]{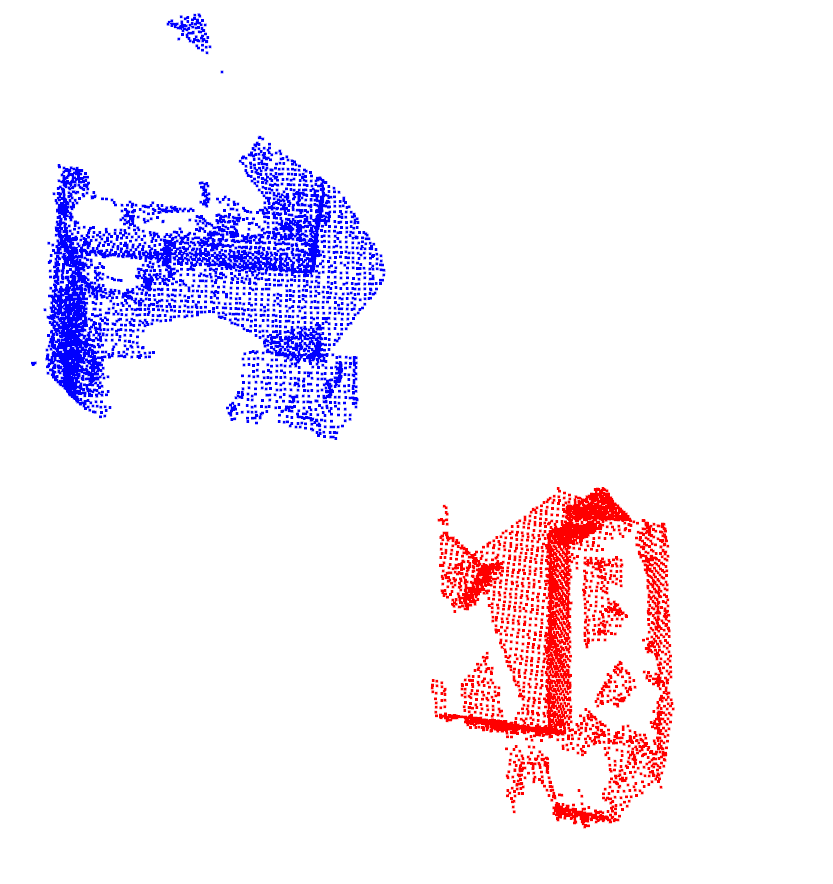} \\ \vspace{3mm}
          \includegraphics[width=0.9\linewidth]{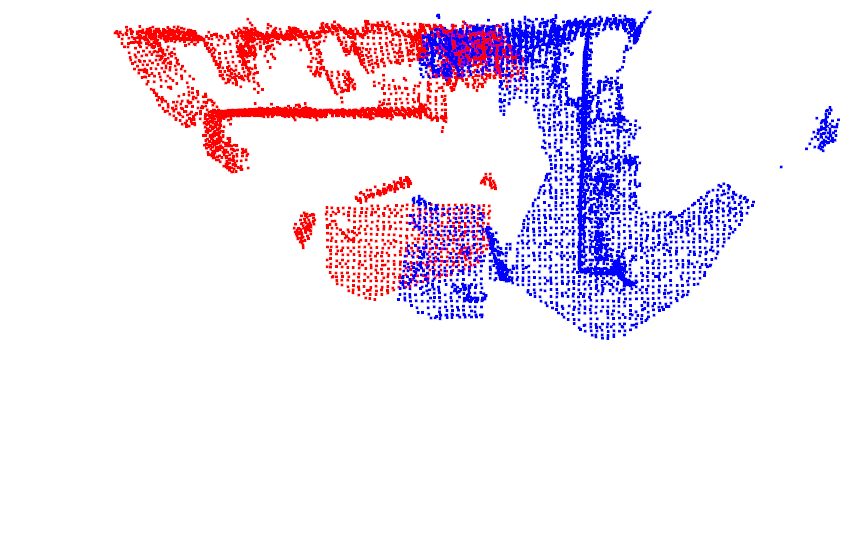} \\ \vspace{3mm}
          \includegraphics[width=0.9\linewidth]{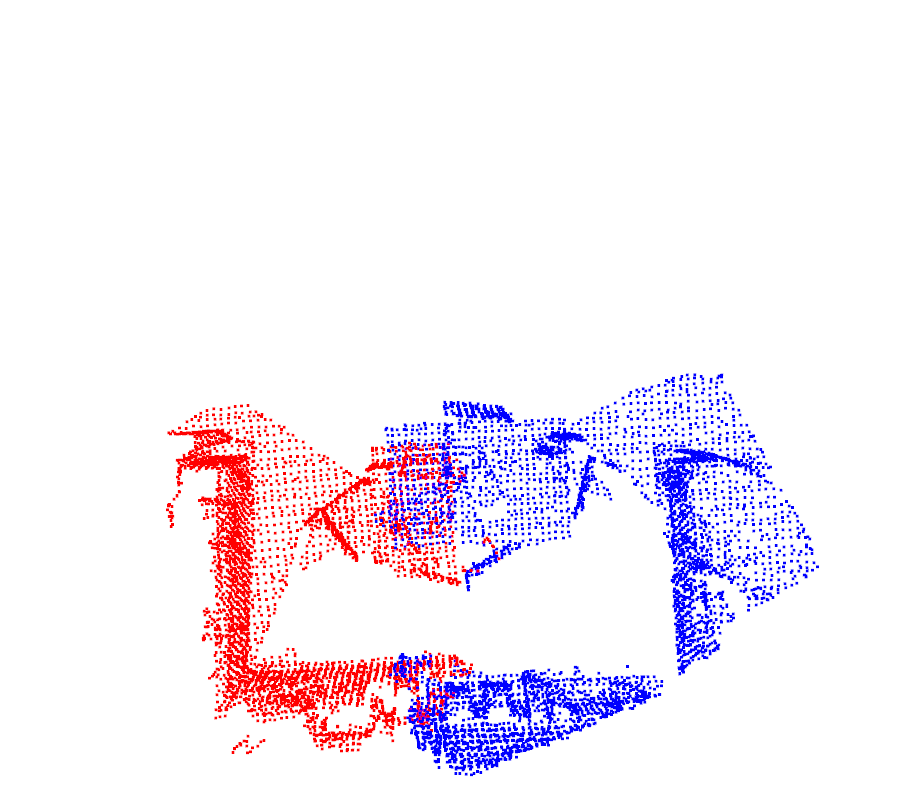} \\ \vspace{3mm}
          \includegraphics[width=0.9\linewidth]{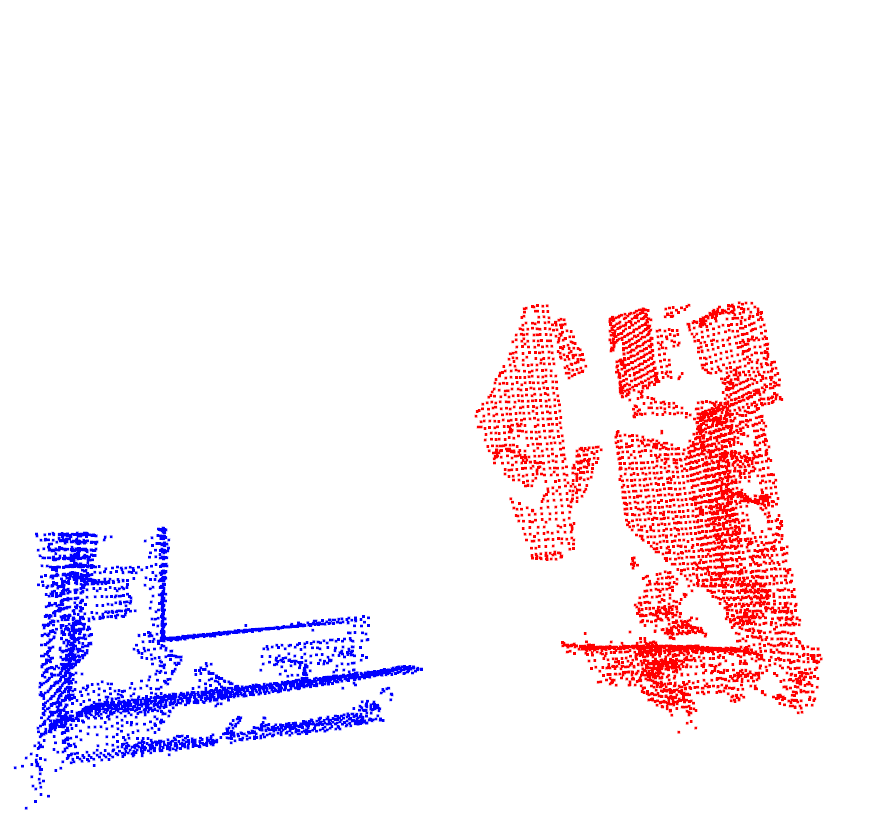} \\ \vspace{3mm}
          \includegraphics[width=0.9\linewidth]{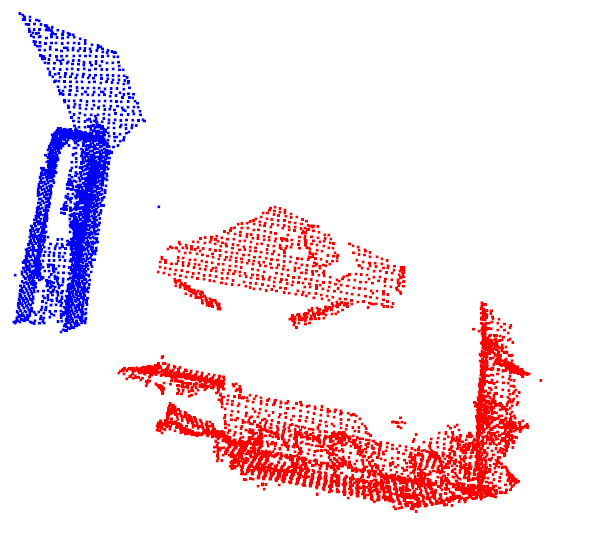} \\ \vspace{3mm}
      \end{minipage}
  }

\caption{
  Qualitative results of Eda on 3DZ.
  Cameras are set to look at the room from above.
}
\label{fig_3DZ_qualitative}
\end{figure}

\section{Limiation and future works}
\label{app:future_work}
Eda in its current form has several limitations.
First,
Eda is slow when using a high order RK solver with a large number of steps.
Besides its iterative nature,
another cause is the lack of CUDA kernel level optimization like FlashAttention~\citep{dao2022flashattention} for equivariant attention layers.
We expect to see acceleration in the future when such optimization is available.
Second,
Eda always uses all input pieces,
which is not suitable for applications like archeology reconstruction, 
where the input data may contain pieces from unrelated objects.
Finally,
in the future research,
we plan to make Eda a foundation model by scaling up the training,
so that it can handle different types of data and achieve higher precision.
In particular,
the scaling law~\citep{kaplan2020scaling} of Eda worths investigation,
where we expect to see that an increase in model/data size leads to an increase in performance similar to image generation applications~\citep{peebles2023scalable}.

\newpage

\end{document}